\documentclass[11pt,letterpaper]{article}
\usepackage[margin=1in]{geometry}
\usepackage[T1]{fontenc}
\usepackage[authoryear,round]{natbib}
\usepackage{amsmath,amssymb,amsthm}
\usepackage{graphicx,booktabs,wrapfig,float}
\usepackage{needspace}
\usepackage{algorithm,algpseudocode,placeins}
\usepackage[font=small]{caption}
\usepackage{xcolor}
\definecolor{paperlink}{HTML}{245A81}
\usepackage{hyperref}
\hypersetup{
  colorlinks=true,
  allcolors=paperlink,
  pdfborder={0 0 0},
  pdftitle={Enabling Token-wise Importance Sampling in Off-Policy RLVR with Proximal KL Regularization},
  pdfauthor={Hao Bai, Jiarui Yao, Tong Zhang}
}
\usepackage{url}
\newtheorem{theorem}{Theorem}
\newtheorem{proposition}[theorem]{Proposition}
\newtheorem{lemma}[theorem]{Lemma}
\newtheorem{corollary}[theorem]{Corollary}

\newcommand{\E}{\mathbb E}
\newcommand{\Var}{\operatorname{Var}}
\newcommand{\KL}{\operatorname{KL}}
\theoremstyle{remark}
\newtheorem{remark}[theorem]{Remark}

\newsavebox{\theoryguidebox}

\newcommand{\ours}{\textit{Proximal KL}}
\newcommand{\rewardonly}{PG}

\title{Enabling Token-wise Importance Sampling in Off-Policy RLVR with Proximal KL Regularization}
\author{
    \textbf{Hao Bai \quad Jiarui Yao \quad Tong Zhang} \\[0.5em]
    \normalsize University of Illinois Urbana-Champaign
}

\date{}

\begin{document}
\maketitle
\begin{abstract}
In reinforcement learning with verifiable rewards (RLVR), asynchronous generation,
differences between rollout and training policies, and response reuse across optimizer minibatch
updates can all produce off-policy data. Token-wise importance weighting allows policy-gradient methods
to reuse these responses without multiplying likelihood ratios across a long response,
but it generally biases the gradient. We prove that, under coverage and local
expressiveness conditions, adding full future KL credit makes the exact proximal
optimizer the update's unique stationary policy. The result holds even when responses
come from a mixture of older policies and requires neither full-response importance
weights nor a fitted critic. We also prove convergence of continuous-time tabular
population updates under persistent coverage and time-varying behavior policies, and
give a finite-sample reward guarantee for linear logistic models under additional
assumptions. These results motivate \ours{}, a practical off-policy RLVR method that
separates the proximal reference from the sampling policy and replaces ordinary PPO
clipping with bounded future KL credit. 
Across all seven matched off-policy comparisons on Qwen3-1.7B and Qwen3-4B,
\ours{} achieves higher micro-averaged accuracy than GRPO over eight math
benchmarks, spanning two learning rates, response reuse, and rollout staleness.
The absolute improvement in micro-averaged accuracy reaches $2.58\%$.
A separate on-policy comparison shows comparable aggregate accuracy for the two methods.
These gains accompany stable optimization: in the periodic-refresh
experiments, \ours{} avoids the training collapses observed
in PG and CISPO, attains higher training accuracy than GRPO, and reaches
higher accuracy in fewer policy updates. To explain these gains, ablations first compare
how off-policy sampling is corrected: token-wise weighting outperforms sequence-level
and unweighted controls. With token-wise correction retained, future KL feedback
reduces sensitivity to the tested baseline perturbation relative to PPO clipping.
The extent of this feedback matters as well: a finite future cost horizon improves AIME24
mean@16 and pass@16 over the full tail, while fixed-policy measurements show that
truncation reduces correction variance over the tested horizons.
\par\smallskip
\noindent Project page: \url{https://biechi.github.io/blog/offpolicy-proximal-kl/}
\end{abstract}

\section{Introduction}
\label{sec:introduction}

Reinforcement learning with verifiable rewards (RLVR) optimizes a policy from
verifier-scored responses, often generated by a behavior policy that differs from the
policy being optimized. This off-policy mismatch has several sources. Asynchronous
systems overlap generation and optimization, so responses may come from earlier policy
versions \citep{fu2025areal,bai2026asyncwebrl,zhu2025slime,radixark2026miles}.
Reusing collected trajectories across optimizer minibatch updates introduces mismatch
as the current policy changes, even when collection is synchronous. We study the common
problem of learning from these off-policy responses.

Policy-gradient methods such as GRPO and DAPO use token-wise importance ratios
\citep{shao2024deepseekmath,yu2025dapo}. Exact importance correction of a response-level
objective instead multiplies the ratios across the response, which can have high
variance for long responses. Token-wise weighting avoids this product but generally
biases the gradient. We ask when an update based on token-wise weights and sampled
rewards, without a fitted value or Q function, still retains the proximal target.
This target is the optimizer of a KL-regularized objective.

We study a proximal objective that penalizes the reverse KL divergence from the current policy
to a reference policy held fixed during each optimization round
\citep{geist2019regularized,tomar2020mdpo}. The reference defines the improvement
target, while the behavior policy determines the sampling distribution. Following batch-size-invariant policy optimization
\citep{hilton2022batch}, we separate these policies so the target remains fixed even
when the data-generating distribution differs from the current policy. For
autoregressive responses, the regularized policy gradient assigns each token the current and all future token log-ratio costs, as
expressed by Future-KL credit \citep{yao2026futurekl}. We combine this full credit with
a single importance ratio for the current token.

\begingroup
Our main result shows that, under coverage and local expressiveness, the expected
token-wise update with full future KL credit vanishes \textbf{if and only if} the learner equals
the proximal optimizer. The result holds for any supported stale behavior policy or
mixture of behavior-policy versions, although the expected update can differ from the
exact proximal gradient away from the optimizer. The proof uses the soft
path-consistency identity established by PCL and Trust-PCL
\citep{nachum2017pcl,nachum2018trustpcl}. With independent tabular logits and persistent
coverage, we further prove convergence of continuous-time updates
(Appendix~\ref{app:async-theory}).

To study learning from sampled responses, we also give a finite-sample reward
guarantee after repeated reference refreshes for a linear logistic model, whose token
logits depend linearly on trainable parameters through fixed features
(summarized in Theorem~\ref{ext:reward-informal}, with the precise statement in
Appendix~\ref{ext:shared}). Under its assumptions, the final policy approaches the
comparator's expected reward for a single response. Its mismatch condition bounds the sum of
learner-weighted squared token log ratios along each response path, and the update
never forms the response-level product of ratios. This analysis motivates a
practical algorithm that can also be applied to nonlinear models such as Transformers.
\par
\endgroup

\begingroup
Guided by these results, we develop \textit{Proximal KL Regularization} (\ours{}). The
practical update records each token's behavior probability, periodically refreshes the
proximal reference, and subtracts bounded future KL costs from a group-relative reward
advantage before applying the token-wise importance weight. A bound on each log-ratio
cost limits individual contributions, while a tunable horizon limits how many future
costs are included. The update omits ordinary PPO clipping. Section~\ref{sec:algorithm}
discusses the trade-off between bias and variance when bounding the costs, and
Section~\ref{sec:ablation} evaluates these choices empirically.
\par
\endgroup

\begingroup
\textbf{Contributions.} To our knowledge, this is the first stationary-policy guarantee
for critic-free token-wise importance weighting with full future KL credit under
supported stale behavior mixtures. We also prove tabular convergence and, for a
realizable linear logistic model, a finite-sample reward guarantee under
correct-solution coverage using a full-credit group update with repeated reference
refreshes (Theorem~\ref{ext:reward-informal}). 
Our empirical comparisons show that \ours{} achieves higher micro-averaged
held-out accuracy than GRPO in all seven matched off-policy settings in
Section~\ref{sec:main-results}. These gains span eight math benchmarks,
1.7B and 4B models, and the tested learning rates and rollout staleness levels,
including fresh rollouts reused across optimizer minibatches. In the separate
on-policy comparison, \ours{} attains aggregate accuracy comparable to GRPO.
Complementary periodic-refresh experiments show that \ours{} sustains training
gains through the collapses observed in PG and CISPO and attains higher training
accuracy than GRPO at comparable optimizer-update counts
(Section~\ref{sec:update-dynamics}). The
ablations connect these outcomes to update design: current-token
importance correction outperforms Sequence-level IS and IS-free settings,
and future KL feedback is less sensitive to the tested baseline
perturbation than PPO clipping. The tightest tested future-cost bound combines strong accuracy with smaller
gradient-norm excursions, while a finite horizon outperforms both
strong local and full-tail corrections. Fixed-policy measurements
confirm that truncation reduces correction variance over the tested
horizons (Section~\ref{sec:ablation}).

\par
\endgroup

\section{Related Work}

\begingroup
\paragraph{Proximal updates.}
Proximal methods control policy changes relative to a reference policy,
as in TRPO \citep{schulman2015trpo}. Regularized MDPs and mirror-descent
policy optimization formalize KL-based updates
\citep{geist2019regularized,tomar2020mdpo}.
Batch-size-invariant policy optimization \citep{hilton2022batch} distinguishes the proximal reference from the
behavior policy, separating the optimization target from the sampling distribution.
AReaL and AsyncWebRL apply this distinction to asynchronous optimization
\citep{fu2025areal,bai2026asyncwebrl}.
Slime and Miles support asynchronous rollout and training pipelines that reduce
bubble time \citep{zhu2025slime,radixark2026miles}.
FRPO derives the on-policy Future-KL gradient identity for autoregressive
regularization \citep{yao2026futurekl}. We use this identity for expected reward regularized against a fixed proximal reference
(Appendix~\ref{app:future-derivation}), then characterize the stationary policy when
full future credit is combined with a single current-token importance ratio under
behavior-policy mismatch, including stale behavior mixtures.
\par
\endgroup

\begingroup
PCL establishes soft path consistency and minimizes the corresponding
errors on off-policy trajectories \citep{nachum2017pcl}. Trust-PCL
extends this approach to relative-entropy regularization against an
updated reference \citep{nachum2018trustpcl}. Soft Q-learning learns
soft Bellman values and an associated sampling policy
\citep{haarnoja2017softq}; Unified PCL represents policy and value
with one Q model \citep{nachum2017pcl}. \citet{schulman2017equivalence} derive regularized Monte Carlo policy gradients and
relate them to soft Q-learning with value fitting. Our analysis concerns the stationary
policy of an update that combines policy scores with token-wise importance weights,
sampled rewards, and full future credit. The update requires no fitted values or
optimization of a consistency residual (Section~\ref{sec:motivation}).
\par
\endgroup

\paragraph{Policy-gradient methods.}
REINFORCE uses sampled policy scores \citep{williams1992reinforce},
while PPO uses a clipped policy-gradient surrogate
\citep{schulman2017ppo}. For language-model reasoning, GRPO constructs
group-relative advantages \citep{shao2024deepseekmath}, and DAPO
develops clipping and sampling choices \citep{yu2025dapo}.
V-trace and Retrace correct bootstrapped value or Q targets for
off-policy data using truncated importance traces
\citep{espeholt2018impala,munos2016retrace}. V-trace also weights its
policy update by a current-action ratio. Their fixed-point guarantees concern corrected value learning. We establish the unique
stationary policy of the actor update itself, using sampled rewards, full future KL
credit, and current-token importance weighting without a fitted critic.

Compatible approximation and policy-improvement closure provide structural
conditions for policy-gradient convergence
\citep{agarwal2021pg,yuan2023loglinear,bhandari2024global}.
Appendix~\ref{ext:representation-condition} relates these conditions to our
bounded Gibbs-target representation assumption and its concrete examples.

We also acknowledge related unpublished work by Audrey Huang and collaborators
on coverage collapse and reference re-anchoring in policy optimization after
pre-training (private communication with Nan Jiang).

\begingroup
\section{Setup and Proximal Update}
\label{sec:setup}
\par
\endgroup

\begingroup
The proximal reference defines the optimization target, while the behavior policy
generates training responses. Asynchronous RLVR systems such as AReaL, StreamRL,
AsyncWebRL, Slime, and Miles overlap generation and policy updates
to reduce bubble time
\citep{fu2025areal,zhong2025streamrl,bai2026asyncwebrl,zhu2025slime,radixark2026miles}.
The behavior policy may therefore differ from both the current learner
and the fixed proximal reference.
\par
\endgroup

\begingroup
We regularize expected reward by the reverse KL divergence from the learner to the
reference
\citep{geist2019regularized,tomar2020mdpo}, with the reference policy
updated between optimization rounds. Let the prompt set and generation
tree be finite. Write $x\sim\mathcal D$ for a prompt, $\tau$ for its
response, and $R(x,\tau)\in\{0,1\}$ for the verifier reward. The current policy $\pi_\theta$ is optimized
relative to $\pi_k$, fixed during round $k$;
responses are sampled from $\pi_{\mathrm{behav}}$:
\begin{equation}
 J_k(\theta)=\mathbb E_{x\sim\mathcal D,\,\tau\sim\pi_\theta}[R(x,\tau)]
 -\beta\mathbb E_{x\sim\mathcal D}
 D_{\mathrm{KL}}\!\left(\pi_\theta(\cdot\mid x)\Vert\pi_k(\cdot\mid x)\right),
 \qquad \beta>0.
 \label{eq:mech-objective}
\end{equation}
Its maximizer over response policies is
$\pi_k^*(\tau\mid x)=\pi_k(\tau\mid x)e^{R(x,\tau)/\beta}/Z_k(x)$,
where $Z_k(x)=\mathbb E_{\tau\sim\pi_k(\cdot\mid x)}e^{R(x,\tau)/\beta}$.
The objective-gap identity in Equation~\eqref{eq:prox-gap} gives this standard Gibbs
characterization; we include it for completeness. For binary rewards, the Gibbs step
multiplies the reference probability of each correct response by $e^{1/\beta}$ before
renormalizing. Thus the reference determines which responses are reweighted, and
$\beta$ controls the strength of this reweighting.
\par
\endgroup

\begingroup
Proximal regularization also has benefits in the on-policy setting
\citep{schulman2015trpo,tomar2020mdpo}. For binary rewards, its exact
unrestricted optimizer improves every prompt whose success probability
lies strictly between zero and one
(Proposition~\ref{thm:prox-improvement}).
Appendix~\ref{app:shared-head-separation} illustrates the benefit of
proximal regularization in a
shared-parameter model; Section~\ref{sec:main-results} reports an
on-policy comparison.
\par
\endgroup

\begingroup
Differentiating the response-level objective gives the Future-KL
policy-gradient expression \citep{schulman2017equivalence,yao2026futurekl}:
each token's log-probability gradient is weighted by the reward advantage minus the
current and future log-ratio costs. Penalizing only the current token generally omits
part of the regularization gradient
(Appendix~\ref{app:future-derivation}). Let $T$ be the maximum response
length, padding shorter responses with deterministic terminal actions.
For $\tau=(a_1,\ldots,a_T)$ and
$h_i=(x,a_{<i})$, define the learner-to-reference ratio and the learner-to-behavior importance weight as
\begin{equation}
  \rho_i=\frac{\pi_\theta(a_i\mid h_i)}{\pi_k(a_i\mid h_i)},\qquad
 w_i=\frac{\pi_\theta(a_i\mid h_i)}{\pi_{\mathrm{behav}}(a_i\mid h_i)}
 =\frac{\pi_k(a_i\mid h_i)}{\pi_{\mathrm{behav}}(a_i\mid h_i)}\rho_i.
 \label{eq:importance-ratios}
\end{equation}
\par
\endgroup

\begingroup
Importance weighting and future KL credit serve different purposes: the weights correct
the sampling distribution, while the KL costs define the regularized return. Exact response-level
importance sampling uses the product of current-to-behavior ratios in
Equation~\eqref{eq:importance-ratios}, whose variance can grow
exponentially with response length.
Starting from the policy-gradient expression for
objective~\eqref{eq:mech-objective}, we replace the response-wise
importance weight by one ratio for each sampled token. For the current
policy $\theta$, the expected token-wise policy-gradient field is
\begin{equation}
 \begin{aligned}
 \widehat A_i^{\mathrm{KL}}(\theta)
 &=R-b(h_i)-\beta\sum_{j=i}^{T}\log\rho_j(\theta),\\[-1pt]
 M_{\mathrm{behav}}(\theta)
 &=\mathbb E_{\tau\sim\pi_{\mathrm{behav}}}\sum_i
 \frac{\pi_\theta(a_i\mid h_i)}{\pi_{\mathrm{behav}}(a_i\mid h_i)}
 \widehat A_i^{\mathrm{KL}}(\theta)
 \nabla_\theta\log\pi_\theta(a_i\mid h_i).
 \end{aligned}
 \label{eq:main-single-token-surrogate}
\end{equation}
The expectation also includes $x\sim\mathcal D$. The baseline $b(h_i)$ depends only on
the history, is held fixed during differentiation, and may be zero. We likewise treat
the advantage as a fixed coefficient when computing an update, then recompute it after
the policy changes. The resulting field uses observed rewards and policy log
probabilities without a fitted value or Q function. We analyze its stationary points
directly, allowing the field to depend on the behavior policy.
\par
\endgroup

\makeatletter
\@nobreakfalse
\makeatother
\section{Algorithm}
\label{sec:algorithm}

\textbf{Proximal KL.} Our update combines token-wise importance weighting with bounded future KL credit. The
update uses verifier rewards, policy log probabilities, and statistics from groups of
responses, without fitting a value or Q function.

\begingroup
\begingroup
Algorithm~\ref{alg:proximal-kl} specifies the token-wise proximal update for
differentiable autoregressive policies, including Transformers. Each round holds the
reference fixed for $L$ optimizer updates. Updates may reuse sampled response
groups, and the behavior policy follows its own refresh schedule. Each sampled
token is paired with its probability under the behavior policy; the frozen
reference defines the KL cost.

For a group of $G\ge2$ responses, let $\widehat A_m$ denote the reward
advantage assigned to response $m$, held fixed during differentiation.
For response $m$ of length $T_m$, write $h_{m,i}=(x,a_{m,<i})$ for
the history before token $a_{m,i}$. The two ratios in
Equation~\eqref{eq:importance-ratios} are
\begin{equation}
  \rho_{m,i}=\frac{\pi_\theta(a_{m,i}\mid h_{m,i})}
 {\pi_k(a_{m,i}\mid h_{m,i})},\quad
 w_{m,i}=\frac{\pi_\theta(a_{m,i}\mid h_{m,i})}
 {\pi_{\mathrm{behav},m}(a_{m,i}\mid h_{m,i})}
 =\frac{\pi_k(a_{m,i}\mid h_{m,i})}
 {\pi_{\mathrm{behav},m}(a_{m,i}\mid h_{m,i})}\rho_{m,i}.
 \label{eq:sample-token-ratios}
\end{equation}
Here $\pi_{\mathrm{behav},m}$ is the behavior policy that generated
response $m$. The KL correction uses
the learner-to-reference log ratio $\ell_{m,i}=\log\rho_{m,i}$;
the score loss uses the learner-to-behavior weight $w_{m,i}$.
Thus $\rho_{m,i}$ measures policy movement from the frozen reference,
while $w_{m,i}$ corrects the sampled token's action distribution for
the behavior policy that generated it. They coincide when
$\pi_{\mathrm{behav},m}=\pi_k$; using $\rho_{m,i}$ alone in the loss
would otherwise omit the reference-to-behavior correction in
Equation~\eqref{eq:sample-token-ratios}.
Define the practical clipped correction by
\begin{equation}
 C_{m,i}^{(H)}=\beta\sum_{j=i}^{\min(T_m,i+H-1)}
       \operatorname{clip}(\ell_{m,j},-\epsilon_F,\epsilon_F).
 \label{eq:practical-proximal-kl}
\end{equation}
Here $H$ is the window size for the future KL correction: at token $i$,
the window covers the current token and up to $H-1$ subsequent tokens.
The full-tail choice $H=\mathrm{full}$
uses every remaining token from position $i$ to the end of the response;
equivalently, the sum ends at $T_m$ without window truncation.
Setting $\epsilon_F=\infty$ disables cost clipping.
There is no window-length normalization, and
$|C_{m,i}^{(H)}|\le\beta\epsilon_F\min(H,T_m-i+1)$ for finite bounds.
A tail-decay variant retains unit weight for the first $H$ positions and
weights the remaining clipped costs with a linear decrease to zero at the
response end.

For each optimizer minibatch, minimize the following loss, averaged over its
nonpadding action tokens:
\begin{equation}
 \mathcal L^{(H)}
 =-\frac1{N_{\mathrm{tok}}}\sum_{m,i}
 \operatorname{sg}\!\left[
 \frac{\pi_\theta(a_{m,i}\mid h_{m,i})}
      {\pi_{\mathrm{behav},m}(a_{m,i}\mid h_{m,i})}
 (\widehat A_m-C_{m,i}^{(H)})\right]
 \log\pi_\theta(a_{m,i}\mid h_{m,i}).
 \label{eq:implemented-loss}
\end{equation}
The sum is over sampled action tokens in the minibatch,
$N_{\mathrm{tok}}$ is the number of nonpadding action tokens, and $\operatorname{sg}$ stops
all gradients through its argument. Coefficients are recomputed at
every learner update. Adam minimizes this loss without ordinary PPO clipping.

Cost clipping limits the sampled KL credit, and stop-gradient holds the resulting
coefficient fixed when differentiating the loss. Even when a log-ratio cost reaches its
bound, the token's score gradient remains active. Thus saturation of a cost does not
suppress that token's entire update. Appendix~\ref{app:practical-loss} specifies the group-normalized reward
advantage and relates this loss to the ascent surrogate.

\begin{algorithm}[htbp]
\caption{Proximal KL with token-wise importance weighting}
\label{alg:proximal-kl}
\begin{algorithmic}[1]
\Require Initial learner $\theta_0$; rounds $K$; optimizer updates per round $L$;
\Statex \hspace{\algorithmicindent} group size $G\ge2$; KL coefficient $\beta$; window $H$; cost bound $\epsilon_F$; Adam settings.
\Ensure Final policy $\pi_{\theta_K}$.
\State Initialize the behavior policy with $\theta_0$; retain its probability for each sampled token.
\For{$k=0,\ldots,K-1$}
  \State Freeze $\pi_k\gets\pi_{\theta_k}$ and set $\vartheta_0\gets\theta_k$.
  \For{$j=0,\ldots,L-1$}
    \State Set current learner $\theta\gets\vartheta_j$.
    \State Select new or reused response groups with behavior probabilities and verifier rewards.\label{algline:collect}
    \State Compute reward advantages $\widehat A_m$ for these groups.\label{algline:advantage}
    \State Select optimizer minibatch $\mathcal B_j$ from these groups.\label{algline:minibatch}
    \State Recompute $\rho_{m,i}$ and $w_{m,i}$ by Equation~\eqref{eq:sample-token-ratios}.
    \State Form $\ell_{m,i}=\log\rho_{m,i}$ per token.
    \State Form $C_{m,i}^{(H)}$ by Equation~\eqref{eq:practical-proximal-kl}.\label{algline:credit}
    \State Form loss $\mathcal L^{(H)}$ by Equation~\eqref{eq:implemented-loss} on $\mathcal B_j$.\label{algline:loss}
    \State Update $\vartheta_{j+1}$ with Adam.\label{algline:update}
    \State Refresh the behavior policy according to its update schedule.
  \EndFor
  \State Set $\theta_{k+1}\gets\vartheta_L$.\label{algline:refresh}
\EndFor
\State \Return $\pi_{\theta_K}$.
\end{algorithmic}
\end{algorithm}

Appendix~\ref{ext:shared} specifies
the changes to Algorithm~\ref{alg:proximal-kl} used for its finite-sample
analysis, including projection of each group-gradient step onto a
parameter ball around the reference. This projection is used only in
the theoretical variant.
\par
\endgroup

\par
\endgroup

\begin{figure}[!ht]
 \centering
 \includegraphics[width=\linewidth]{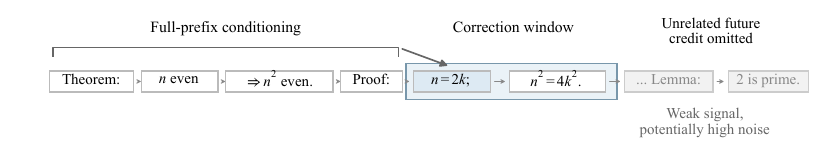}
 \caption{Finite future-KL window. The policy conditions on the full prefix,
 while KL credit includes the current token and up to $H-1$ subsequent tokens.}
 \label{fig:future-window-schematic}
\end{figure}

\begingroup
\paragraph{Trade-off between bias and variance in \ours{} future tail.}
\label{sec:practical-bias-variance}
In the on-policy population setting, full, unmodified future KL credit supplies the
complete regularization gradient. Estimating the full tail, however, can produce
substantial sampling fluctuations. Clipping the costs limits extreme contributions,
while shortening the window limits their accumulation. Both choices can introduce
approximation bias in exchange for controlling these fluctuations, serving a stabilizing
role analogous to gradient clipping in SGD.
Figure~\ref{fig:future-window-schematic} shows how an intermediate window
retains nearby future costs while limiting the number of terms in the correction.
Token-wise off-policy weighting introduces a separate approximation, as
discussed in Section~\ref{sec:setup}. Section~\ref{sec:ablation} evaluates
these choices empirically.
\par
\endgroup

\ours{} uses the following ascent surrogate; training minimizes
its negative:
\begin{equation}
 \mathcal L_{\mathrm{PKL}}(\theta)
 =\mathbb E_{\pi_{\mathrm{behav}}}\sum_i
 \frac{\pi_\theta}{\pi_{\mathrm{behav}}}
 \operatorname{sg}[\widehat A_m-C_i^{(H)}].
 \label{eq:experiment-proximal}
\end{equation}

Here $m$ indexes the sampled response and $i$ indexes its tokens; the policy ratios
and $C_i^{(H)}$ are evaluated at token $i$ of response $m$. The expectation includes
prompts and sampled responses. We suppress the response index on token quantities
in these comparison formulas.

When accumulated log-ratio costs are positive, they reduce the coefficient multiplying
the policy score. This provides feedback on movement from the reference without a
clipping gate that depends on the advantage sign. With raw verifier rewards and a
history-only baseline, a full future window ($H\ge T$) with no cost clipping
supplies the regularization gradient on-policy and preserves the proximal stationary
target under the population assumptions in Section~\ref{sec:motivation}.
Group normalization can change this field, so the guarantee does not
automatically extend to updates using normalized advantages.
Horizon truncation controls the extent of the practical KL feedback.

To put this feedback in context, we compare it with updates that control policy movement
through clipping and with an unclipped policy-gradient update. These comparisons distinguish the effect
of future KL credit from the choice of importance weighting.

\textbf{GRPO.} Our baseline is a variant of the GRPO loss \citep{shao2024deepseekmath}.
The original objective also includes a sampled KL penalty to a separate
reference model, typically the initial SFT policy $\pi_0$. This term
penalizes departure from that reference in addition to PPO clipping.
Following DAPO \citep{yu2025dapo}, the surrogate below omits this
initial-reference penalty: its objective is
verifier reward, without a constraint that the final policy remain close
to initialization. By contrast, the KL term in \ours{} penalizes movement
from the periodically refreshed $\pi_k$ during each improvement round.
It controls updates while allowing cumulative movement away from $\pi_0$;
it therefore has a different role from an initial-reference penalty.

Our GRPO variant uses symmetric PPO clipping and decouples the proximal
reference from the behavior policy \citep{hilton2022batch,fu2025areal},
with the loss averaged over action tokens. Its ascent surrogate is:
\begin{equation}
 \mathcal L_{\mathrm{GRPO}}(\theta)
 =\mathbb E_{\pi_{\mathrm{behav}}}\sum_i
 \frac{\pi_k}{\pi_{\mathrm{behav}}}
 \min\!\left(\rho_i\widehat A_m,
 \operatorname{clip}(\rho_i,1-\epsilon_{\mathrm{clip}}, 1+\epsilon_{\mathrm{clip}})\widehat A_m\right).
 \label{eq:experiment-grpo}
\end{equation}
Here $\epsilon_{\mathrm{clip}}$ is the PPO clipping width.
The surrogate omits the future KL gradient, while token-wise importance
weighting leaves prefixes and suffixes under the behavior distribution.

\begin{figure}[!ht]
 \centering
 \includegraphics[width=\linewidth]{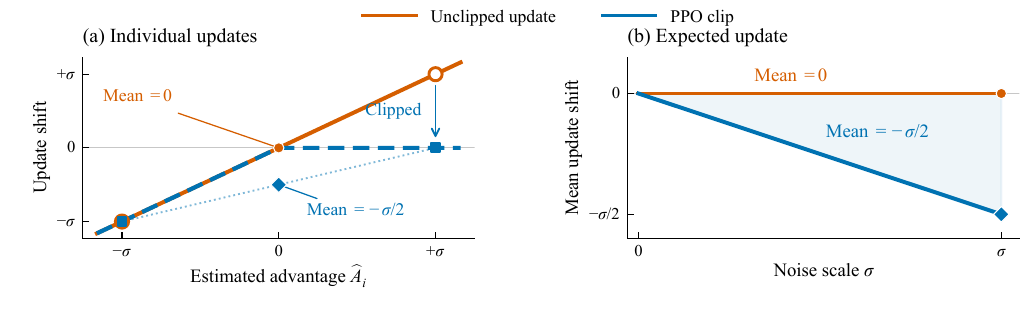}
 \caption{At fixed $\rho_i>1+\epsilon_{\mathrm{clip}}$, the unperturbed advantage is zero
 and $\widehat A_m=\pm\sigma$ with equal probability. (a) PPO clipping
 removes positive update shifts. (b) The mean shift is $-\sigma/2$ with
 clipping and zero without it; clipping also reduces conditional coefficient variance.
 The common importance-weighted policy score is factored out.}
 \label{fig:clipping-gradient}
\end{figure}

PPO clipping treats advantage noise differently according to the proximal ratio and
advantage sign \citep{schulman2017ppo,hilton2022batch}. Above the upper ratio bound, it
removes positive score-function contributions and can change the expected
update. For the symmetric noise example in
Figure~\ref{fig:clipping-gradient}, this reduces conditional coefficient variance but
shifts the mean. In the ideal unclipped, token-weighted update, a detached history-only
baseline cancels as long as the additive KL correction is unchanged
(Appendix~\ref{app:baseline-identities}).

\textbf{CISPO.} This update \citep{minimax2025m1} clips the detached
current-to-behavior importance weight without future KL correction.
With lower and upper bounds $c_{\mathrm{lo}}$ and $c_{\mathrm{hi}}$, its ascent surrogate is:
\begin{equation}
 \mathcal L_{\mathrm{CISPO}}(\theta)
 =\mathbb E_{\pi_{\mathrm{behav}}}\sum_i
 \operatorname{sg}\!\left[
 \operatorname{clip}\!\left(\frac{\pi_\theta}{\pi_{\mathrm{behav}}},c_{\mathrm{lo}},c_{\mathrm{hi}}\right)
 \widehat A_m\right]\log\pi_\theta(a_i\mid h_i).
 \label{eq:experiment-cispo}
\end{equation}
Both bounds apply to the detached weight while retaining the
score-function contribution.

\textbf{PG.} PG uses reward advantages with current-to-behavior importance
weighting and no future KL regularization. Its surrogate is equivalent
to the GRPO surrogate without PPO clipping:
\begin{equation}
 \mathcal L_{\mathrm{PG}}(\theta)
 =\mathbb E_{\pi_{\mathrm{behav}}}\sum_i
 \frac{\pi_\theta}{\pi_{\mathrm{behav}}}
 \operatorname{sg}[\widehat A_m].
 \label{eq:experiment-reward-only}
\end{equation}
Removing PPO clipping eliminates the bias introduced by its gate, but the
reward-gradient estimator still omits the regularization term in $\nabla J_k$
(Appendix~\ref{app:bias-reference}). Under learner-policy sampling, future KL
credit supplies that term when using raw rewards and a history-only baseline.
Token-wise importance weighting introduces a separate off-policy approximation, and large weights
can amplify stochastic gradient contributions. Adding future KL credit to
this update gives the \ours{} surrogate in Equation~\eqref{eq:experiment-proximal}.

\paragraph{Alternative importance weights.}
We compare the token-wise ratio in Equation~\eqref{eq:experiment-proximal}
with two alternatives that retain each token's corrected advantage.
Section~\ref{sec:ablation} evaluates how these choices affect training
stability and held-out accuracy.

\textit{Sequence-level IS.} This control replaces the token-wise ratio by its
geometric mean over the response. Following the ascent-surrogate convention
above, its update is
\begin{equation}
 \mathcal L_{\text{sequence-level IS}}(\theta)
 =\mathbb E_{\pi_{\mathrm{behav}}}\sum_i
 \operatorname{sg}\!\left[
 \left(\frac{\pi_\theta(a_{m,1:T_m}\mid x)}
 {\pi_{\mathrm{behav},m}(a_{m,1:T_m}\mid x)}\right)^{1/T_m}
 (\widehat A_m-C_i^{(H)})\right]
 \log\pi_\theta(a_i\mid h_i).
 \label{eq:experiment-sequence}
\end{equation}
The $1/T_m$ power forms the geometric mean over response $m$'s
$T_m$ action tokens, following GSPO-token \citep{zheng2025gspo}.

\textit{IS-free.} This control removes importance weighting, giving
\begin{equation}
 \mathcal L_{\text{IS-free}}(\theta)
 =\mathbb E_{\pi_{\mathrm{behav}}}\sum_i
 \operatorname{sg}[\widehat A_m-C_i^{(H)}]
 \log\pi_\theta(a_i\mid h_i).
 \label{eq:experiment-nois}
\end{equation}
As above, response indices on token quantities
and the common sampled-action-token normalization are suppressed;
training minimizes the negative surrogate.
The detached coefficient multiplies each token's log probability,
so each token retains its own future KL credit in both controls.
Neither uses PPO clipping or a separate proximal-to-behavior factor.
Appendix~\ref{app:gspo-proximal} discusses the corresponding token-averaged updates.

\FloatBarrier
\section{Theoretical Guarantees}
\label{sec:theory}
\label{sec:motivation}

Full future KL credit gives token-wise importance weighting a common proximal target,
even when the responses come from stale policies. We first state this mechanism for
population updates. We then summarize a finite-sample reward guarantee for a
projected group-gradient variant; Appendix~\ref{ext:shared} gives its full specification
and proof.

The key observation is that, \textbf{at the proximal target, the reward minus the full future
KL cost depends only on the current history.} The sampled continuation no longer changes
this coefficient. Current-token importance weighting then makes its expected score
contribution zero, even when later tokens were generated by an older policy. The first
result shows that, under the stated conditions, this cancellation also identifies the
target uniquely.

\begin{theorem}[Token-wise importance weighting preserves the proximal target]
\label{thm:prox-recursion}
Fix $\pi_k$ and assume the finite-support, behavior-coverage,
policy-expressiveness, and population-update conditions (A1) through (A4)
in Appendix~\ref{app:prox-assumptions}. Let $M_{\mathrm{behav}}(\theta)$
be the expected token-wise off-policy policy-gradient field
in Equation~\eqref{eq:main-single-token-surrogate}. For any stale behavior
policy, or mixture of behavior-policy versions with their logged probabilities,
\[
 M_{\mathrm{behav}}(\theta)=0
 \quad\Longleftrightarrow\quad
 \pi_\theta=\pi_k^*.
\]
Thus the token-wise update has the same unique stationary policy as the
proximal objective, although its expected update can differ from
$\nabla J_k(\theta)$ at other policies.
\end{theorem}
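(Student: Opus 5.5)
The plan is to rewrite $M_{\mathrm{behav}}(\theta)$ as an expectation over histories of a per-history score term, and then to show that each per-history term vanishes exactly at the Gibbs target. The cancellation comes from the soft path-consistency identity.

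\textbf{Step 1: conditioning on histories.} Fix a history $h_i=(x,a_{<i})$ reached with positive probability under $\pi_{\mathrm{behav}}$. Condition on $h_i$ and on $a_i$. The current-token weight converts the inner expectation over $a_i\sim\pi_{\mathrm{behav}}$ into one over $a_i\sim\pi_\theta$. This gives
\[
 M_{\mathrm{behav}}(\theta)=\sum_i\mathbb E_{h_i\sim d^{\mathrm{behav}}_i}\sum_{a}\pi_\theta(a\mid h_i)\,
 Q_{\mathrm{behav}}(h_i,a)\,\nabla_\theta\log\pi_\theta(a\mid h_i),
\]
where $Q_{\mathrm{behav}}(h_i,a)=\mathbb E[\,R-\beta\sum_{j\ge i}\log\rho_j\mid h_i,a_i=a\,]$ and the continuation $a_{>i}$ is drawn from the behavior policy. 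For a mixture of behavior versions, I first condition on the version index and use that version's logged probabilities, so the same identity holds component-wise. The baseline $b(h_i)$ drops out because $\sum_a\pi_\theta\nabla\log\pi_\theta=0$.

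\textbf{Step 2 ($\Leftarrow$).} Define the soft value $V_k(h)=\beta\log\mathbb E_{\pi_k(\cdot\mid h)}e^{R/\beta}$. At $\pi_\theta=\pi_k^*$, the standard path-consistency computation of PCL and Trust-PCL gives
\[
 \beta\log\rho_j=\beta\log\frac{\pi_k^*(a_j\mid h_j)}{\pi_k(a_j\mid h_j)}=V_k(h_{j+1})-V_k(h_j),
\]
with $V_k$ at a terminal history equal to $R$. Summing over $j\ge i$ telescopes to $R-\beta\sum_{j\ge i}\log\rho_j=V_k(h_i)$ along every path. This is deterministic, so the behavior continuation is irrelevant, and $Q_{\mathrm{behav}}(h_i,a)=V_k(h_i)$ is constant in $a$. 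Each inner sum then vanishes, and hence $M_{\mathrm{behav}}=0$.

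\textbf{Step 3 ($\Rightarrow$).} This is the main obstacle. Away from the target, $Q_{\mathrm{behav}}$ mixes behavior continuations with learner ratios, so it is not a true soft $Q$-function of $\pi_\theta$. The plan is induction backward over depth on the finite tree.

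The expressiveness assumption (A3) should let me pass from the parameter-space condition to the tabular one. That is, $M=0$ should imply, at every covered history, that $\sum_a\pi_\theta(a\mid h)Q_{\mathrm{behav}}(h,a)\nabla\log\pi_\theta(a\mid h)=0$ for all local logit directions. Equivalently, $Q_{\mathrm{behav}}(h,\cdot)$ is constant on the support of $\pi_\theta(\cdot\mid h)$. Coverage (A2), together with finite support (A1), makes every relevant history appear with positive weight and keeps the supports of $\pi_\theta$ and $\pi_k$ aligned.

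For the induction, at the last decision, $Q_{\mathrm{behav}}(h,a)=R(h,a)-\beta\log(\pi_\theta/\pi_k)(a\mid h)$ involves no behavior continuation at all. Constancy in $a$ forces $\pi_\theta(\cdot\mid h)\propto\pi_k e^{R/\beta}$, which is $\pi_k^*(\cdot\mid h)$. With the constant normalized, this gives $R-\beta\log\rho=V_k(h)$ for every action.

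Now suppose every deeper history already satisfies $\pi_\theta=\pi_k^*$, so that $R-\beta\sum_{j>i}\log\rho_j=V_k(h_{i+1})$ pathwise. Then the continuation, whoever generated it, contributes deterministically. This gives $Q_{\mathrm{behav}}(h_i,a)=V_k(h_i,a)-\beta\log\rho_i(a)$, and constancy in $a$ again forces the Gibbs conditional at $h_i$. Chaining the conditionals yields $\pi_\theta=\pi_k^*$ on all prompts.

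The delicate point I expect is making the reduction from $M=0$ to per-history constancy rigorous. A zero sum over histories does not by itself give zero per history, so I would rely on the population-update condition (A4) to state stationarity per history or per independent logit block. I would also use (A3) to ensure that the score vectors at distinct histories span independent directions.
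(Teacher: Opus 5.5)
This is essentially the paper's proof, step for step. Both arguments write the field in conditional form with a behavior-continuation coefficient. In the paper this is $\widetilde Q_{\mathrm{behav}}$, with version weights $\omega(v\mid h)$ conditioned on the history only, and your version-by-version conditioning aggregates to exactly that. Both then use pathwise telescoping at $\pi_k^*$ for the converse and backward induction over depth for uniqueness.

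The step you flag as delicate is closed by (A3) alone, not by (A4).

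\begin{itemize}
\item \textbf{Chain rule.} Collect the independent log odds into $z(\theta)$ and apply the chain rule to each score. This gives $M_{\mathrm{behav}}(\theta)=Dz(\theta)^\top g_z$, where $g_z$ is the same field in log-odds coordinates.
\item \textbf{Block at a history.} The block of $g_z$ at $h$ is $d_{\mathrm{behav}}(h)[\operatorname{diag}(\boldsymbol p)-\boldsymbol p\boldsymbol p^\top]\boldsymbol u$. Here $\boldsymbol p=\pi_\theta(\cdot\mid h)$ and $\boldsymbol u$ is your version-averaged coefficient $Q_{\mathrm{behav}}(h,\cdot)$, which includes the current-token cost. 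Dropping the redundant common-shift coordinate loses nothing, because this vector sums to zero.
\item \textbf{Injectivity.} Full row rank makes $Dz(\theta)^\top$ injective, so every block vanishes.
\item \textbf{Constancy.} We have $\boldsymbol u^\top[\operatorname{diag}(\boldsymbol p)-\boldsymbol p\boldsymbol p^\top]\boldsymbol u=\operatorname{Var}_{a\sim\boldsymbol p}[u(a)]$, with $\boldsymbol p>0$ by (A1) and $d_{\mathrm{behav}}(h)>0$ by (A2). So a vanishing block is exactly constancy of $\boldsymbol u$ over the allowed actions, which your backward induction needs.
\end{itemize}

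(A4) only fixes which field is being analyzed: the exact expectation, the full future tail, and a history-only baseline. It plays no role in decoupling the histories.
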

Here local expressiveness means that parameter changes span every independent token
log-odds direction. The proof and precise assumptions are in
Appendix~\ref{app:async-theory}. Local expressiveness lets us translate a zero
parameter update into a condition at each history, while behavior coverage ensures
that every relevant history contributes to that condition.

Different behavior policies can therefore change the update direction without changing
its stationary policy. Convergence additionally requires progress toward that target.
For independent tabular logits and history visit probabilities uniformly bounded away
from zero, global trajectories of the continuous-time population update converge
under time-varying behavior policies (Theorem~\ref{lem:prox-pg-flow}). This establishes convergence toward a fixed
reference's target when the expected update is available exactly. To account for
sampled responses and repeated reference refreshes, we next study a linear logistic
model with bounded fixed features. A frozen-feature linear output head is one example;
applying a linearized model to a nonlinear Transformer additionally requires control
of approximation errors. In this setting, the finite-sample result below shows how
repeatedly fitting and refreshing the reference can improve the final policy's reward.

\paragraph{Analyzed variant.}
Each round holds the reference fixed and uses fresh groups of responses to fit its
proximal target. The update combines token-wise importance weights with full future
KL credit, centers the reward-minus-KL residual within each group, and projects each
step onto a parameter ball. A randomly selected iterate becomes the next reference.
Appendix~\ref{ext:shared} specifies this analyzed variant.
Table~\ref{tab:theory-practice} summarizes how the population update
underlying Theorem~\ref{thm:prox-recursion} is adapted in
Algorithm~\ref{alg:proximal-kl}.

\begin{theorem}[Finite-sample reward guarantee, informal]
\label{ext:reward-informal}
Consider the analyzed variant in a linear logistic model with bounded fixed features,
deterministic binary rewards, and responses of at most $T$ tokens. At every reference,
assume its Gibbs target is exactly represented by a parameter within Euclidean
distance $B/\beta$ of the reference parameter, where $B\ge1$ is uniform across
rounds. Assume uniformly controlled token ratios and a sufficiently
small aggregate token-wise behavior mismatch throughout training, with precise conditions specified
in Theorem~\ref{ext:reward} in Appendix~\ref{ext:shared}. Let $\pi_\star$ be a comparator whose correct-response
probability at each prompt is at most $C_\star$ times that of the initial policy.
For any $0<\varepsilon<1$, suitable algorithm parameters and training budgets ensure
that the policy $\pi_K$ obtained after $K$ reference refreshes satisfies
\[
 \E J(\pi_K)\ge J(\pi_\star)-\varepsilon,
 \qquad J(\pi)=\E_x\Pr_\pi(R=1\mid x).
\]
For a fixed group size, the number of scored training responses is polynomial in
$T$, the representation bound $B$, $C_\star$, and $1/\varepsilon$.
The expectation includes training randomness and output selection.
\end{theorem}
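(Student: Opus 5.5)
The plan is to split the argument into an inner analysis, which fits a single fixed reference's Gibbs target, and an outer analysis, which propagates approximate Gibbs steps across the $K$ reference refreshes. The outer analysis is the simpler of the two. For a binary reward, the exact Gibbs step $\pi_k^*\propto\pi_k e^{R/\beta}$ acts on each prompt as a log-odds shift. Writing $p_k(x)=\Pr_{\pi_k}(R=1\mid x)$, it gives
\[
\log\tfrac{p_k^*(x)}{1-p_k^*(x)}=\log\tfrac{p_k(x)}{1-p_k(x)}+\tfrac1\beta .
\]
So after $K$ exact steps the log-odds have grown by $K/\beta$. Since $p_\star(x)\le C_\star p_0(x)$, we have $p_0(x)\ge p_\star(x)/C_\star$. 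Choosing $K/\beta\gtrsim\log(C_\star/\varepsilon)$ then forces $p_K(x)\ge p_\star(x)-\varepsilon/2$ on every prompt, with no need to control the unbounded ratio $\pi_\star/\pi_0$. With inexact steps, I would instead track a potential that each approximate step can only degrade additively. My first choice is $\Phi_k=\E_x\log\frac{1-p_k(x)}{p_k(x)}$, truncated at the $\varepsilon$ level. The goal is an inequality of the form $\Phi_{k+1}\le\Phi_k-\tfrac1\beta+O(\sqrt{\delta_k}\,T/\beta)$, where $\delta_k$ measures how far $\pi_{k+1}$ is from $\pi_k^*$.

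For the inner analysis, fix $\pi_k$ with parameter $\theta_k$, and let $\theta_k^*$ be the representer of $\pi_k^*$ with $\|\theta_k^*-\theta_k\|\le B/\beta$. The key population identity is the one behind Theorem~\ref{thm:prox-recursion}. At $\theta_k^*$, soft path consistency says the reward minus the full future KL cost equals a history-only quantity $\beta\log Z_k(h_i)$. For general $\theta$, I would therefore rewrite the group-centered residual $R-\beta\sum_{j\ge i}\log\rho_j$ as this history term plus a sum of per-token discrepancies $\beta\,(\log\pi_{\theta^*_k}-\log\pi_\theta)(a_j\mid h_j)$. Taking the inner product of the expected field with $\theta_k^*-\theta$ and using linearity of the logits in $\theta$ should then give a one-point monotonicity bound:
\[
\langle M_{\mathrm{behav}}(\theta),\theta_k^*-\theta\rangle\;\ge\; c\,\beta\,\mathcal E_k(\theta)\;-\;\mathrm{err}_{\mathrm{mis}} .
\]
Here $\mathcal E_k(\theta)$ is the learner-weighted sum of squared token log ratios between $\pi_\theta$ and $\pi_k^*$ along response paths. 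The term $\mathrm{err}_{\mathrm{mis}}$ arises because the suffixes are drawn from the behavior policy rather than the learner. It is controlled by the aggregate mismatch assumption together with the uniform token-ratio bounds, which also bound the stochastic group gradients by a polynomial in $T$, $B$, and $\beta$.

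With this bound, projected SGD on the ball of radius $B/\beta$, followed by selecting a random iterate, yields $\E\,\mathcal E_k(\theta_{k+1})\le\delta$. This holds after $\mathrm{poly}(T,B,1/\beta,1/\delta)$ fresh groups, plus a mismatch floor that is made small by assumption. Next, I would convert $\mathcal E_k$ into a bound on the log-odds error of success probability, via Pinsker or Cauchy--Schwarz over at most $T$ tokens. Feeding that into the potential recursion and choosing $\delta$ polynomially small in $\varepsilon$, $1/T$, and $1/C_\star$ makes the accumulated error over $K=O(\beta\log(C_\star/\varepsilon))$ rounds at most $\varepsilon/2$.

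I expect the main obstacle to be the monotonicity inequality in the off-policy setting. The cross terms between the current-token discrepancy and the future-token discrepancies are sampled under the behavior policy, not under the learner, so they need not telescope into a nonnegative quadratic form. My first attempt would follow the learner-weighted path norm implicit in the mismatch condition. I would bound the cross terms by Cauchy--Schwarz against $\mathcal E_k$ and absorb them into the main quadratic term using the smallness of the mismatch. A secondary difficulty is that the log-odds potential is only weakly controlled where $p_k(x)$ is near $0$. Correct-solution coverage, inherited from $p_0\ge p_\star/C_\star$ and preserved approximately across rounds, should keep those prompts from dominating.
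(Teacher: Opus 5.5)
Your ideal-sequence computation and the choice $K\approx\beta\log(C_\star/\varepsilon)$ are correct, but the outer recursion you aim for does not hold.

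\textbf{Outer step.} Your $\delta_k$ is averaged over prompts and response paths, so it barely registers changes confined to a rare success event. Suppose the correct responses at a prompt carry probability $p\ll1$ and lie in a subtree reached with probability about $p$. Moving token probabilities inside that subtree rescales their total mass by a factor $e^{\pm c}$, yet changes $\delta_k$ and the trajectory KL only by $O(p)$. Correspondingly, data processing gives at most $\KL(\mathrm{Bern}(p_{k+1}(x))\Vert\mathrm{Bern}(p_k^*(x)))\lesssim\delta_k$. The pair $p_{k+1}=p_k^*e^{-c}$ has Bernoulli KL of order $p_k^*c^2$, so the best available log-odds control is of order $\sqrt{\delta_k/p_k^*(x)}$. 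No inequality $\Phi_{k+1}\le\Phi_k-1/\beta+O(\sqrt{\delta_k}\,T/\beta)$ with a $p$-free error is available, whether via Pinsker or via pathwise Cauchy--Schwarz. This is the central difficulty, not a secondary one: small-$p_0$ prompts are exactly where coverage must do its work.

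The paper resolves it with angular coordinates $\alpha=\arcsin\sqrt p$.
\begin{itemize}
\item The Bernoulli squared Hellinger distance is $4\sin^2(\xi/2)\ge8\xi^2/\pi^2$, so $\E_x\xi_k^2\le\pi^2D_k/(8\beta)$ uniformly in $p$.
\item The Gibbs map $F(\alpha)=\arctan(e^{1/(2\beta)}\tan\alpha)$ is increasing and concave with $F(0)=0$. This gives the relative-deficit bound $a_K\le\sum_k\xi_k/\alpha_0^*$ (Lemma~\ref{ext:src:lem:concave-recursion}).
\item Since $(\alpha_0^*)^2\ge p_0\ge p_\star/C_\star$, the reward loss $16p_\star a_K^2$ is at most $16C_\star(\sum_k\xi_k)^2$. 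Coverage therefore cancels the $1/p_0$ amplification exactly.
\end{itemize}
A log-odds potential could be salvaged only with three additions: the explicit $1/p_k^*(x)$ factor, truncation at $p\gtrsim\varepsilon/C_\star$, and a separate argument that $p_k(x)$ stays above that level. Your sketch supplies none of these.

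\textbf{Inner step.} Your route (soft path consistency, one-point monotonicity, projected SGD, random output) is the paper's. Three pieces need changing.

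First, drop the additive $\mathrm{err}_{\mathrm{mis}}$ and the mismatch floor. Theorem~\ref{ext:reward} assumes only $\varepsilon$-independent conditions ($V_{\log}\le1/8$ and bounded $|\log w_i|$), so a floor would block arbitrary accuracy. It is also unnecessary. Write $U=V_k(h_i)-\beta z_i+e_i-\beta\sum_{t<i}\ell_t$ with $e_i=-\beta\sum_{t>i}z_t$. Then the cross term $\sum_i\E_q[w_iu_ie_i]$ is quadratic in $\theta-\theta_k^*$. Lemma~\ref{ext:martingale-continuation} reduces it to $\E[MA]$ with $M=\sum_iw_iu_i$ and $A=\sum_t\E_{q_t}z_t$, which yields the purely relative bound $\sqrt2\,\Gamma\beta\sum_iv_i$.

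Second, contrary to your description, these cross terms do not vanish on-policy. The conditional mean $\E_{p_t}z_t=\KL(p_t\Vert p_t^*)$ depends on earlier actions through $h_t$. Hence $\Gamma=\sqrt{V_{\log}}/\log2+\tfrac{8}{15}c_0\sqrt T$ has a part unrelated to mismatch. Keeping $\sqrt2\,\Gamma<1$ requires $c_0\sqrt T$ small, i.e.\ projection radius $B/\beta=O(1/\sqrt T)$. This is why the paper takes $\beta=16B\sqrt T$.

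Third, the drift controls the energy $\sum_i\E_q\sigma_i^2$ at behavior-sampled histories, whereas the outer step needs the learner's trajectory KL. You need a history-transfer step: bound the moments of the prefix ratios $W_i=\prod_{j<i}w_j$ through $V_{\log}$ (Lemma~\ref{ext:log-moment}), then apply Lemma~\ref{ext:moment-transfer}. This step is the source of the $c_\delta\varepsilon^{-1-\delta}$ term.

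Your bounded-noise SGD rate, by contrast, differs from the paper's only quantitatively. At the target every response to a prompt has residual $\beta\log Z_k(x)$, so group centering makes the gradient's second moment proportional to the energy. This gives an $O(1/L)$ rate instead of $O(1/\sqrt L)$, and both are polynomial.
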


\begin{remark}[Aggregate token-wise log-ratio second moment]
\label{rem:token-mismatch-example}
At history $h_i$, let $p_i=\pi_\theta(\cdot\mid h_i)$ and
$q_i=\pi_{\mathrm{behav}}(\cdot\mid h_i)$ be the learner and behavior token
distributions, and write $w_i(a)=p_i(a)/q_i(a)$.
Theorem~\ref{ext:reward} controls the sum of learner-weighted second moments
$\E_{p_i}[(\log w_i)^2]$, taking a uniform bound on their sum along each response path and over
training updates. It denotes this aggregate by $V_{\log}$.
For example, if $|\log w_i|\le a/\sqrt T$ uniformly, for a fixed
constant $a>0$ sufficiently small to meet the theorem's conditions, then
$V_{\log}\le a^2$ independently of response length. In contrast, the
sequence ratio $\prod_{i=1}^T w_i$ can grow exponentially in $\sqrt T$:
multiplying the tokenwise bounds gives $\prod_{i=1}^T w_i\le e^{a\sqrt T}$.
Higher moments of history likelihood ratios sharpen the dependence on accuracy.
The argument applies to any real moment order at least two. For each fixed
$\delta\in(0,1]$, under the theorem's $\delta$-dependent token-ratio condition,
the accuracy dependence is
$1/\varepsilon$ plus $c_\delta/\varepsilon^{1+\delta}$, apart from the
problem and round-count factors displayed there, where
$c_\delta=\exp((1+\delta^{-1})V_{\log})$.
Smaller $\delta$ can require tighter token-ratio control and increase $c_\delta$.
On policy, $V_{\log}=0$ and the direct history-transfer bound yields
$1/\varepsilon$ dependence, apart from the logarithmic round-count factors.
\end{remark}

The guarantee shows that repeated proximal updates can turn initial correct-solution
coverage into a policy whose single-response success probability approaches the
comparator's. Each round approximately fits the current reference's proximal target;
refreshing the reference then allows further reward improvement.
Appendix~\ref{ext:representation-examples} gives examples of the representation bound;
Remark~\ref{ext:coverage-specializations} gives the uniform-coverage case.

\paragraph{Single-response reward compared with best-of-$N$.}
The comparator can also describe a procedure that samples several responses at
inference time. This gives a concrete interpretation of the single-response guarantee.
Let $p_0(x)=\Pr_{\pi_0}(R=1\mid x)$. Drawing $N$ responses from the initial
policy and selecting a correct one whenever available gives oracle success probability
$1-(1-p_0(x))^N\le Np_0(x)$. Thus the guarantee applies with $C_\star=N$:
one response from the trained policy approaches the reward of oracle best-of-$N$
selection at initialization, under the theorem's conditions.
The training budget and the oracle's inference budget are accounted for separately.

These results explain the roles of the two policies in the practical algorithm.
For implementation, the reference determines the improvement target, while the
logged behavior policy determines the importance correction. Full future KL credit
preserves the target in the analyzed population update. Our experiments examine the
stability and performance trade-offs introduced by bounded costs and finite windows.

\section{Empirical Experiments}
\label{sec:experiments}

We evaluate \ours{} in two off-policy training settings. Periodic
rollout-policy refresh tests stability across model sizes. Training with controlled rollout staleness combines asynchronous generation
with response reuse across multiple optimizer minibatches. These settings
test stable optimization and held-out reasoning accuracy when the behavior
policy differs from the policy being optimized.

\subsection{Setup}
\label{sec:experiment-setup}

All compared methods omit the KL penalty to the initial model. The
reward-advantage normalization follows the training setting below and
is shared across methods within each setting.

\paragraph{Working Default.}
We recommend $H=\mathrm{full}$, $\beta=0.30$, and $\epsilon_F=0.05$
as a general starting configuration for \ours{}. This choice is shared
across all four model sizes in the periodic-refresh experiments. A shorter
horizon provides a model-specific adjustment; for 4B, we suggest $H=32$ as a
starting point with the same coefficient and cost bound. The ablations
examine how horizon tuning can further improve accuracy. The actual
configurations used for the reported results are listed in
Appendices~\ref{app:periodic-refresh} and~\ref{app:controlled-lag-hparams}.

\paragraph{Training with Periodic Policy Refresh.}
\label{sec:simulation}
At each training iteration, we collect a fresh batch of responses and use
that batch once for a single optimizer update. The training-policy weights
therefore change every iteration, whereas the rollout-policy weights are
synchronized with the training policy only once every 64 optimizer updates
and remain fixed between refreshes. Successive batches can thus be generated
by the same older policy while training progresses. This provides a controlled
simulation of delayed policy synchronization in asynchronous RL, where
response generation uses earlier policy versions while optimization continues.
We train Qwen3 models \citep{yang2025qwen3} on verifiable math tasks
using Slime \citep{zhu2025slime}, comparing PG, CISPO, GRPO, and \ours{}
across model sizes under this shared refresh schedule.
Reward advantages subtract the mean reward within each prompt group,
without standard-deviation normalization.
Figure~\ref{fig:simulation} reports training accuracy, defined in
Appendix~\ref{app:periodic-refresh-protocol}; all configurations are in
Appendix~\ref{app:periodic-refresh}.

\paragraph{Training with Controlled Rollout Staleness.}
\label{sec:controlled-lag-setup}
Following AReaL \citep{fu2025areal}, we combine asynchronous rollout
generation with multiple optimizer minibatches per batch on filtered math
prompts. We explicitly control the permitted rollout staleness and verify
the realized policy-version gaps from training logs.
Reward advantages standardize rewards within each prompt group, as
defined in Equation~\eqref{eq:group-normalized-advantage}.
We compare GRPO and \ours{}, adding \rewardonly{} for the stability
test, and include a separate comparison with fresh rollouts and one
optimizer minibatch. PPO clipping is disabled in that separate comparison
because it is designed to discourage large policy changes when reusing rollout
data, whereas this on-policy comparison has zero staleness and uses each batch
for only one update.
Held-out evaluation covers MATH500
\citep{lightman2023verify}, OlympiadBench \citep{he2024olympiadbench},
MinervaMath \citep{lewkowycz2022minerva}, Gaokao2023MathEn, AIME24/25
\citep{maa2024aime}, and AMC23/24. MATH500, OlympiadBench, MinervaMath,
and Gaokao2023MathEn use greedy pass@1. AIME24/25 and AMC23/24 use
mean@16, the fraction of correct responses among 16 attempts per problem.
The learning curves also report pass@16, the fraction of problems solved by
at least one of those attempts. Appendix~\ref{app:controlled-lag} gives the
reporting details; Appendix~\ref{app:controlled-lag-hparams} specifies the
data, training configurations, and evaluation budgets.

\subsection{Stability Test}
\label{sec:simulation-mismatch}
\label{sec:update-dynamics}

\begin{figure}[!ht]
 \centering
 \includegraphics[width=\linewidth]{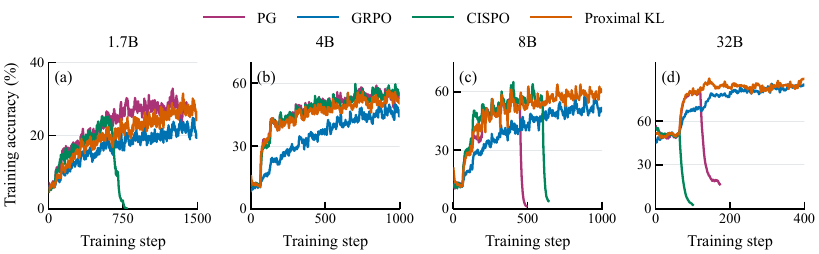}
 \caption{Periodic-refresh training with FP8 rollouts and BF16 optimization
 at $1\times10^{-6}$, with a rollout refresh interval of 64 optimizer updates.
 Panels (a) through (d): 1.7B, 4B, 8B, and 32B.
 All panels use BF16 KV.
 EMA $\alpha=0.10$. Configurations: Appendix~\ref{app:periodic-refresh}.}
 \label{fig:simulation}
\end{figure}

\textbf{Proximal KL sustains training gains when PG and CISPO collapse.}
Figure~\ref{fig:simulation} distinguishes early reward gains from stable
optimization. At 1.7B, CISPO initially improves alongside the other methods
before its training accuracy falls sharply. At 8B and 32B, both PG and CISPO
undergo large accuracy declines, despite substantial accuracy earlier in
training. \ours{} retains its gains across all four displayed model sizes
using the same working default. All four methods apply importance weights,
so importance correction alone
does not ensure stable optimization. The comparison instead supports combining
token-wise weighting with proximal KL feedback as model scale increases.

\begin{figure}[!ht]
 \centering
 \includegraphics[width=\linewidth]{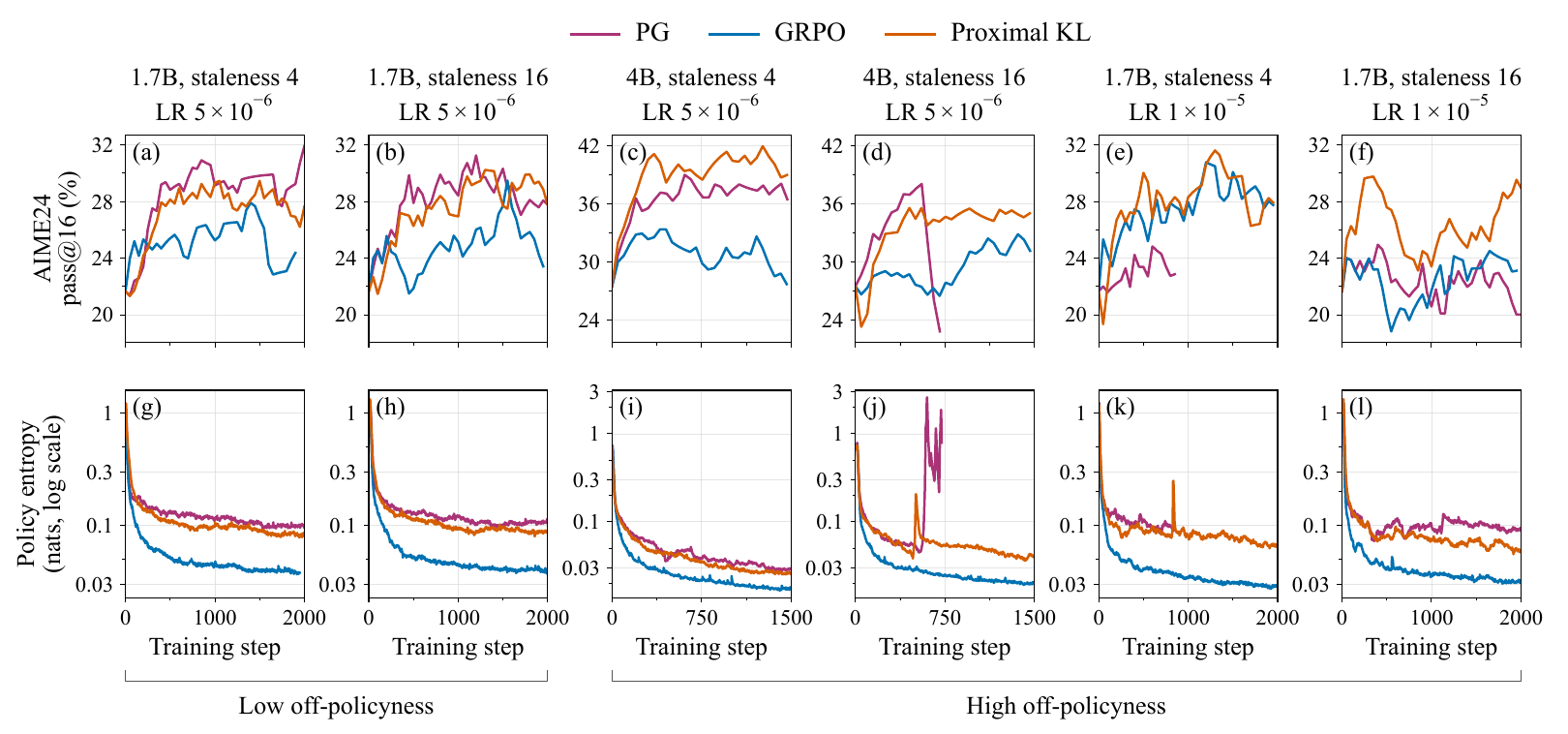}
 \caption{Stability across six settings. Top: AIME24 pass@16; bottom:
 policy entropy on logarithmic axes. The first four columns match
 Figure~\ref{fig:main-multiseed}; the last two use 1.7B, $1\times10^{-5}$,
 at staleness levels 4 and 16. EMA $\alpha=0.20$.
 Bottom groups distinguish the smaller-model/lower-rate settings from
 larger-model or higher-rate settings; each column specifies its rollout staleness.
 Display limits and configurations: Appendix~\ref{app:controlled-lag-hparams}.}
 \label{fig:large-lr-stability}
\end{figure}

Training with controlled staleness reveals a similar limitation of
\rewardonly{}: its competitive performance at 1.7B and the lower learning
rate does not carry over reliably to larger models or larger learning
rates. The 4B run with staleness 16 suffers a sharp AIME24 pass@16 decline with
large entropy excursions (Figure~\ref{fig:large-lr-stability}(d,j));
the higher-learning-rate runs show limited sustained gains and a later
decline at staleness 16 (panels e,f). In contrast, GRPO and \ours{} avoid
comparable collapses over their displayed trajectories in both settings.
We also ablate GRPO's clipping width: increasing
$\epsilon_{\mathrm{clip}}$ from $0.2$ to $0.4$ produces larger gradient
excursions and a temporary decline in training accuracy
(Appendix~\ref{app:grpo-clip-width}).

\textbf{Proximal KL improves training accuracy with fewer updates than GRPO.}
GRPO avoids the collapses of PG and CISPO, but its training accuracy rises
more slowly than that of \ours{} in Figure~\ref{fig:simulation}.
The difference is visible across model sizes: \ours{} moves ahead during
training at 1.7B, 4B, and 8B, and reaches the high-accuracy region earlier at
32B. For example, the 8B EMA-smoothed training accuracy first reaches $50\%$
after about 200 updates with \ours{}, compared with about 640 for GRPO.
Thus, proximal KL feedback improves update efficiency as well as stability
compared with GRPO;
removing ordinary PPO clipping need not sacrifice controlled progress.
The higher mean training accuracy over the common final 100-update window
at 1.7B, 4B, and 8B also shows that the advantage persists beyond the initial rise.

Under controlled rollout staleness, \ours{} also
achieves higher late AIME24 pass@16 in the four $5\times10^{-6}$ settings
spanning both model sizes and staleness levels (Figure~\ref{fig:large-lr-stability}, panels (a) through (d)).
At the larger learning rate $1\times10^{-5}$, the checkpoint evaluations
in Table~\ref{tab:main-results} also favor \ours{} in Average and on MATH500,
OlympiadBench, MinervaMath, and Gaokao greedy pass@1 at both staleness levels.
These results show higher training accuracy under periodic policy refresh
and improved held-out accuracy in the comparisons with controlled staleness. Given the stability
limitations of PG and CISPO and our computational budget, we concentrate
the benchmark comparison in Section~\ref{sec:main-results} on GRPO and \ours{}.

\subsection{Held-out Set Evaluation}
\label{sec:main-results}

\begin{table}[!ht]
\centering\small
\setlength{\tabcolsep}{2pt}
\caption{Raw checkpoint accuracy (\%). 1.7B: mean $\pm$ sample SD over three seeds. Average pools 4,149 responses per seed across eight benchmarks and reports the mean across seeds; Bold: higher mean in each column within a matched comparison; both bold if the unrounded gap is below $0.50\%$. Protocols are in Appendix~\ref{app:controlled-lag}.}
\label{tab:main-results}
\resizebox{\linewidth}{!}{\begin{tabular}{llll lllllllll}\toprule
Setting & Stale- & LR & Method & MATH500 & Olympiad & Minerva & Gaokao & AIME24 & AIME25 & AMC23 & AMC24 & Average \\
& ness & & & pass@1 & pass@1 & pass@1 & pass@1 & mean@16 & mean@16 & mean@16 & mean@16 & \\\midrule[0.8pt]
\multicolumn{13}{@{}l}{\textit{Model: Qwen3-1.7B}} \\
\midrule[0.8pt]
- & - & - & Base & 56.10 & 23.15 & 22.43 & 51.17 & 4.27 & 2.92 & 27.66 & 13.47 & 24.15 \\
\midrule
On-policy & 0 & \texttt{5e-6} & GRPO & \textbf{68.93}\,{\scriptsize$\pm$\,1.03} & \textbf{32.84}\,{\scriptsize$\pm$\,1.28} & 28.92\,{\scriptsize$\pm$\,0.77} & \textbf{59.53}\,{\scriptsize$\pm$\,0.45} & 8.89\,{\scriptsize$\pm$\,0.96} & \textbf{5.97}\,{\scriptsize$\pm$\,0.94} & \textbf{46.30}\,{\scriptsize$\pm$\,2.17} & \textbf{26.53}\,{\scriptsize$\pm$\,0.24} & \textbf{34.50} \\
On-policy & 0 & \texttt{5e-6} & \ours{} & 67.80\,{\scriptsize$\pm$\,1.20} & \textbf{33.14}\,{\scriptsize$\pm$\,0.67} & \textbf{29.90}\,{\scriptsize$\pm$\,1.06} & 58.40\,{\scriptsize$\pm$\,0.92} & \textbf{10.07}\,{\scriptsize$\pm$\,1.20} & \textbf{5.62}\,{\scriptsize$\pm$\,0.75} & 45.10\,{\scriptsize$\pm$\,0.39} & 25.42\,{\scriptsize$\pm$\,0.28} & \textbf{34.09} \\
\midrule
Off-policy & 0 & \texttt{5e-6} & GRPO & 65.40\,{\scriptsize$\pm$\,2.46} & 28.24\,{\scriptsize$\pm$\,0.62} & 28.68\,{\scriptsize$\pm$\,2.05} & \textbf{58.05}\,{\scriptsize$\pm$\,1.06} & 8.96\,{\scriptsize$\pm$\,1.57} & \textbf{6.74}\,{\scriptsize$\pm$\,0.43} & 42.24\,{\scriptsize$\pm$\,2.26} & 22.78\,{\scriptsize$\pm$\,0.97} & 31.99 \\
Off-policy & 0 & \texttt{5e-6} & \ours{} & \textbf{67.20}\,{\scriptsize$\pm$\,1.00} & \textbf{29.67}\,{\scriptsize$\pm$\,1.04} & \textbf{29.41}\,{\scriptsize$\pm$\,0.74} & \textbf{58.40}\,{\scriptsize$\pm$\,0.84} & \textbf{9.72}\,{\scriptsize$\pm$\,0.43} & 5.69\,{\scriptsize$\pm$\,1.39} & \textbf{44.84}\,{\scriptsize$\pm$\,2.86} & \textbf{24.17}\,{\scriptsize$\pm$\,1.60} & \textbf{33.13} \\
\midrule
Off-policy & 4 & \texttt{5e-6} & GRPO & 64.80\,{\scriptsize$\pm$\,1.11} & \textbf{28.98}\,{\scriptsize$\pm$\,0.94} & \textbf{29.17}\,{\scriptsize$\pm$\,1.18} & 56.74\,{\scriptsize$\pm$\,0.80} & 6.88\,{\scriptsize$\pm$\,0.55} & \textbf{5.83}\,{\scriptsize$\pm$\,1.08} & 44.06\,{\scriptsize$\pm$\,1.08} & \textbf{24.21}\,{\scriptsize$\pm$\,2.16} & 32.14 \\
Off-policy & 4 & \texttt{5e-6} & \ours{} & \textbf{68.00}\,{\scriptsize$\pm$\,0.20} & \textbf{29.48}\,{\scriptsize$\pm$\,1.63} & \textbf{28.92}\,{\scriptsize$\pm$\,0.77} & \textbf{57.27}\,{\scriptsize$\pm$\,0.60} & \textbf{8.68}\,{\scriptsize$\pm$\,0.60} & 5.07\,{\scriptsize$\pm$\,0.43} & \textbf{45.68}\,{\scriptsize$\pm$\,1.57} & \textbf{24.17}\,{\scriptsize$\pm$\,0.37} & \textbf{33.00} \\
\midrule
Off-policy & 16 & \texttt{5e-6} & GRPO & 66.13\,{\scriptsize$\pm$\,1.51} & \textbf{29.43}\,{\scriptsize$\pm$\,1.35} & 28.31\,{\scriptsize$\pm$\,0.37} & 56.22\,{\scriptsize$\pm$\,0.75} & 7.57\,{\scriptsize$\pm$\,1.56} & \textbf{6.88}\,{\scriptsize$\pm$\,0.83} & \textbf{45.05}\,{\scriptsize$\pm$\,1.33} & 23.01\,{\scriptsize$\pm$\,1.47} & 32.41 \\
Off-policy & 16 & \texttt{5e-6} & \ours{} & \textbf{68.60}\,{\scriptsize$\pm$\,1.74} & \textbf{29.57}\,{\scriptsize$\pm$\,0.23} & \textbf{29.78}\,{\scriptsize$\pm$\,0.37} & \textbf{57.18}\,{\scriptsize$\pm$\,0.45} & \textbf{9.51}\,{\scriptsize$\pm$\,1.18} & \textbf{6.39}\,{\scriptsize$\pm$\,1.20} & \textbf{44.64}\,{\scriptsize$\pm$\,1.73} & \textbf{23.94}\,{\scriptsize$\pm$\,0.29} & \textbf{33.18} \\
\midrule[0.8pt]
Off-policy & 4 & \texttt{1e-5} & GRPO & 58.67\,{\scriptsize$\pm$\,3.06} & 24.73\,{\scriptsize$\pm$\,1.77} & 22.30\,{\scriptsize$\pm$\,1.66} & 52.48\,{\scriptsize$\pm$\,0.90} & \textbf{11.60}\,{\scriptsize$\pm$\,0.43} & \textbf{5.28}\,{\scriptsize$\pm$\,2.84} & 47.66\,{\scriptsize$\pm$\,4.32} & 21.30\,{\scriptsize$\pm$\,1.95} & 30.39 \\
Off-policy & 4 & \texttt{1e-5} & \ours{} & \textbf{64.33}\,{\scriptsize$\pm$\,2.39} & \textbf{29.08}\,{\scriptsize$\pm$\,2.53} & \textbf{24.88}\,{\scriptsize$\pm$\,3.72} & \textbf{55.96}\,{\scriptsize$\pm$\,4.05} & 9.65\,{\scriptsize$\pm$\,1.15} & 4.03\,{\scriptsize$\pm$\,0.64} & \textbf{52.29}\,{\scriptsize$\pm$\,2.32} & \textbf{23.33}\,{\scriptsize$\pm$\,1.60} & \textbf{32.97} \\
\midrule
Off-policy & 16 & \texttt{1e-5} & GRPO & 61.20\,{\scriptsize$\pm$\,2.11} & 25.27\,{\scriptsize$\pm$\,1.12} & 25.25\,{\scriptsize$\pm$\,0.93} & 53.52\,{\scriptsize$\pm$\,1.20} & 7.50\,{\scriptsize$\pm$\,1.50} & \textbf{5.07}\,{\scriptsize$\pm$\,1.88} & \textbf{47.19}\,{\scriptsize$\pm$\,3.12} & \textbf{23.75}\,{\scriptsize$\pm$\,2.41} & \textbf{30.93} \\
Off-policy & 16 & \texttt{1e-5} & \ours{} & \textbf{63.73}\,{\scriptsize$\pm$\,0.61} & \textbf{27.35}\,{\scriptsize$\pm$\,0.37} & \textbf{27.21}\,{\scriptsize$\pm$\,2.24} & \textbf{55.00}\,{\scriptsize$\pm$\,0.66} & \textbf{9.10}\,{\scriptsize$\pm$\,1.67} & 3.19\,{\scriptsize$\pm$\,1.20} & 45.63\,{\scriptsize$\pm$\,2.98} & 20.37\,{\scriptsize$\pm$\,1.60} & \textbf{30.98} \\
\midrule[0.8pt]
\multicolumn{13}{@{}l}{\textit{Model: Qwen3-4B}} \\
\midrule[0.8pt]
- & - & - & Base & 63.75 & 35.61 & 25.74 & 56.92 & 8.59 & 8.33 & 41.25 & 30.42 & 34.01 \\
\midrule
Off-policy & 4 & \texttt{5e-6} & GRPO & 81.20 & 43.92 & 39.71 & \textbf{71.54} & 15.83 & \textbf{10.21} & \textbf{79.22} & 33.47 & 47.17 \\
Off-policy & 4 & \texttt{5e-6} & \ours{} & \textbf{82.40} & \textbf{44.66} & \textbf{40.44} & \textbf{71.28} & \textbf{18.33} & \textbf{10.63} & 74.84 & \textbf{39.31} & \textbf{48.13} \\
\midrule
Off-policy & 16 & \texttt{5e-6} & GRPO & 79.60 & 43.32 & 39.34 & 69.45 & 15.42 & \textbf{11.04} & \textbf{70.63} & 37.78 & 46.13 \\
Off-policy & 16 & \texttt{5e-6} & \ours{} & \textbf{81.00} & \textbf{45.99} & \textbf{40.44} & \textbf{72.32} & \textbf{21.04} & 10.21 & \textbf{70.31} & \textbf{42.22} & \textbf{48.35} \\
\bottomrule
\end{tabular}}
\end{table}

\textbf{Proximal KL improves overall reasoning accuracy in every matched off-policy comparison.}
Table~\ref{tab:main-results} reports higher Average for \ours{} than GRPO
in all seven off-policy method pairs. Average pools correct responses across
eight benchmarks, measuring response accuracy over the complete evaluation suite.
The gain holds across both model sizes, both tested learning rates, and all tested
rollout staleness levels, reaching an absolute improvement of $2.58\%$.
Thus, the stable training gains translate into improved held-out accuracy
across the tested off-policy settings.

The accuracy advantage also holds when response reuse is the source of
off-policy updates: with fresh rollouts reused across four minibatches,
\ours{} achieves higher Average. In the separate on-policy comparison,
which uses fresh rollouts and one optimizer minibatch, the two methods
attain comparable aggregate accuracy. This control assesses the practical
methods under fresh sampling; it is distinct from the idealized proximal
improvement results in Section~\ref{sec:setup}. We therefore characterize
the empirical result as comparable on-policy performance, with consistently
higher aggregate accuracy in the tested off-policy comparisons.
\par

\begin{figure}[ht]
 \centering
 \includegraphics[width=\linewidth]{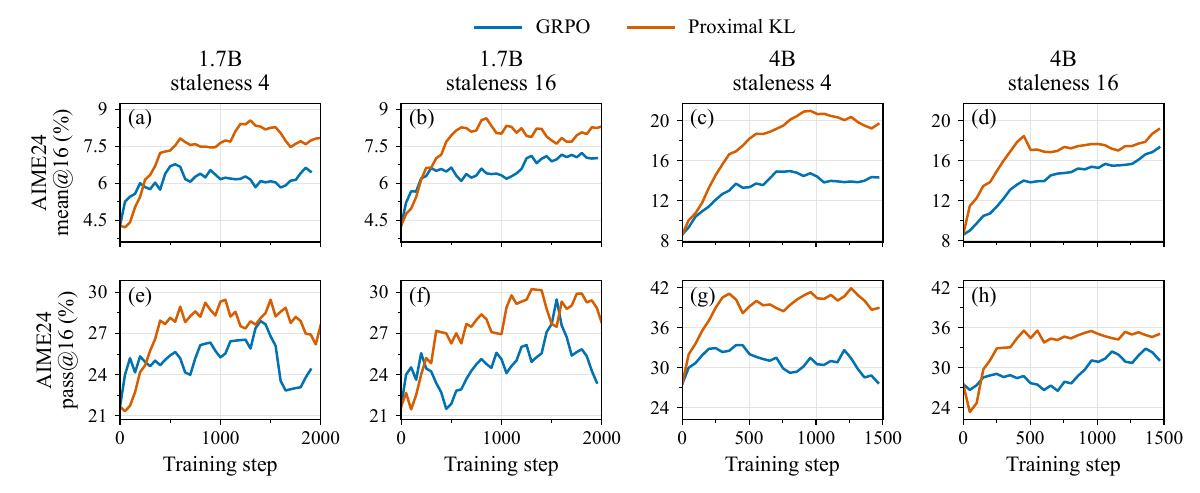}
 \caption{AIME24 at $5\times10^{-6}$. Top: mean@16; bottom: pass@16.
 Columns: 1.7B staleness 4, 1.7B staleness 16, 4B staleness 4, and 4B staleness 16.
 Configurations are given in
 Appendix~\ref{app:controlled-lag-hparams}. EMA $\alpha=0.20$; shared Base within each model
 at step 0.}
 \label{fig:main-multiseed}
\end{figure}

The AIME24 learning curves show that the gain extends to both response
accuracy and prompt coverage. \ours{} reaches higher later mean@16 and
pass@16 across the four settings in Figure~\ref{fig:main-multiseed}.
These trajectories complement the raw checkpoint evaluations: the
advantage is visible during training as well as in held-out checkpoint
comparisons.

\begin{table}[!htb]
\centering\footnotesize
\setlength{\tabcolsep}{3pt}
\caption{Qwen3-4B accuracy changes (\%) from staleness 4 to 16 at $5\times10^{-6}$. Differences use unrounded Table~\ref{tab:main-results} scores. $\uparrow$/$\downarrow$: increase/decrease; $\rightarrow$: unchanged. Bold: larger signed change.}
\label{tab:lag-4b}
\resizebox{\linewidth}{!}{\begin{tabular}{lllllllllll}
\toprule
Staleness & Method & MATH500 & Olympiad & Minerva & Gaokao & AIME24 & AIME25 & AMC23 & AMC24 & Average \\
& & pass@1 & pass@1 & pass@1 & pass@1 & mean@16 & mean@16 & mean@16 & mean@16 & \\
\midrule
$4\rightarrow16$ & GRPO & \textcolor{blue}{$\downarrow\,1.60$} & \textcolor{blue}{$\downarrow\,0.59$} & \textcolor{blue}{$\downarrow\,0.37$} & \textcolor{blue}{$\downarrow\,2.09$} & \textcolor{blue}{$\downarrow\,0.42$} & \textcolor{red}{$\uparrow\,\mathbf{0.83}$} & \textcolor{blue}{$\downarrow\,8.59$} & \textcolor{red}{$\uparrow\,\mathbf{4.31}$} & \textcolor{blue}{$\downarrow\,1.04$} \\
$4\rightarrow16$ & \ours{} & \textcolor{blue}{$\downarrow\,\mathbf{1.40}$} & \textcolor{red}{$\uparrow\,\mathbf{1.34}$} & $\rightarrow\,\mathbf{0.00}$ & \textcolor{red}{$\uparrow\,\mathbf{1.04}$} & \textcolor{red}{$\uparrow\,\mathbf{2.71}$} & \textcolor{blue}{$\downarrow\,0.42$} & \textcolor{blue}{$\downarrow\,\mathbf{4.53}$} & \textcolor{red}{$\uparrow\,2.92$} & \textcolor{red}{$\uparrow\,\mathbf{0.22}$} \\
\bottomrule
\end{tabular}}
\end{table}

\paragraph{Proximal KL retains its advantage in the higher-staleness configuration.}
\label{sec:rollout-lag}
Using the checkpoint scores already reported in Table~\ref{tab:main-results},
we examine how Qwen3-4B accuracy changes between rollout staleness levels 4 and 16.
Both settings use learning rate $5\times10^{-6}$ and four optimizer
minibatches per batch. Table~\ref{tab:lag-4b} reports each method's
score at staleness 16 minus its score at staleness 4; a larger change means better
retention of accuracy. We compare the reported training configurations,
which are specified in Appendix~\ref{app:controlled-lag-hparams}.

Average rises for \ours{} and falls for GRPO as rollout staleness increases.
The benchmark-level changes support this distinction: \ours{} improves
OlympiadBench and Gaokao greedy pass@1 and AIME24 mean@16, while GRPO
declines on these metrics. Across the eight benchmarks, \ours{} has the
more favorable change on six. These comparisons show that \ours{} retains its accuracy advantage over
GRPO in the tested higher-staleness configuration.

\subsection{Ablations}
\label{sec:ablation}

To understand why the working default is effective and how tuning can
improve it, we examine the update choices that sustain its gains. We
first test the need for current-token importance correction and the
sensitivity introduced by PPO clipping. We then examine the bound on
individual future costs and the correction horizon. These ablations
support bounded future KL feedback as a general choice. A finite horizon
can further improve accuracy, and separate fixed-policy measurements
show that truncation reduces correction variance.

\begingroup
\setlength{\intextsep}{5pt}
\begin{figure}[!ht]
 \centering
 \includegraphics[width=\linewidth]{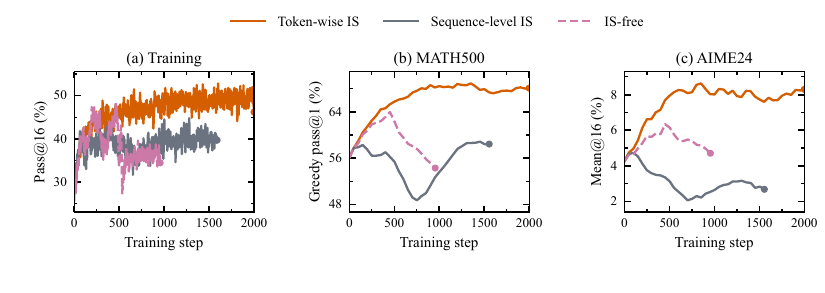}
 \caption{Importance-weighting comparison: 1.7B, $5\times10^{-6}$,
 staleness 16, full tail, $\beta=0.30$. Token-specific future KL credit is
 retained in all three updates. EMA $\alpha=0.20$; shared evaluation
 step 0.}
 \label{fig:sequence-correction}
\end{figure}
\endgroup

\noindent\textbf{Token-wise correction achieves higher held-out accuracy than sequence-level IS and IS-free updates.}
\label{sec:sequence-correction}
Motivated by Theorem~\ref{thm:prox-recursion}, we compare current-token
weighting with sequence-level and unweighted alternatives while retaining
future KL credit. Under the theorem's population assumptions, full future
KL credit lets current-token weighting preserve the proximal stationary
target without an exact response-level ratio product. Removing IS
altogether leaves the expected score under the behavior policy,
so the proximal optimizer need not remain stationary. Figure~\ref{fig:sequence-correction}
tests token-wise weighting against Sequence-level IS and IS-free in Equations~\eqref{eq:experiment-sequence}
and~\eqref{eq:experiment-nois}; all three retain token-specific future
KL credit. The sequence-level control uses the geometric mean of token ratios.

Token-wise IS sustains training pass@16 and achieves stronger MATH500
greedy pass@1 and AIME24 mean@16 than both controls in the displayed
evaluations. Both controls undergo substantial declines in held-out
accuracy, despite retaining the same future KL credit. Thus, reducing
correction to the current token is compatible with strong practical
performance, whereas discarding correction loses that benefit. These
results are consistent with the theoretical motivation for token-wise
correction and support its practical use alongside future KL credit.

\begin{figure}[!ht]
 \centering
 \includegraphics[width=\linewidth]{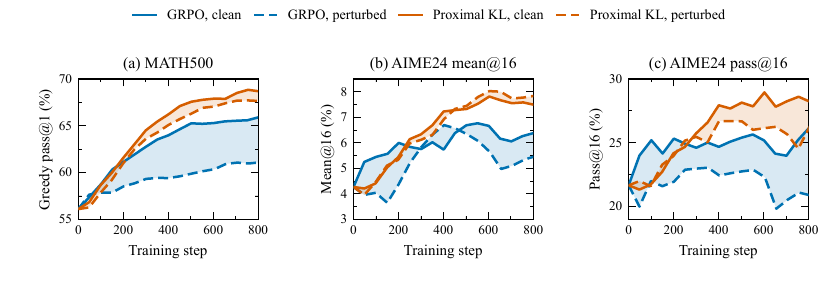}
 \caption{Baseline perturbation at 1.7B, $5\times10^{-6}$, staleness 4. Solid/dashed: unperturbed (clean)/perturbed. Shading shows the score difference between the two curves. EMA $\alpha=0.20$.}
 \label{fig:baseline-shift}
\end{figure}

\paragraph{Proximal KL is less sensitive to the tested baseline perturbation than PPO clipping.}
\label{sec:mechanism}\label{sec:baseline-shift}
To understand why \ours{} achieves higher accuracy than the PPO-clipped
baseline, we investigate whether the two updates respond differently
to errors in the estimated reward baseline. The intervention shifts the common baseline
within prompt groups with nonconstant rewards, preserving verifier rewards and
within-group advantage differences
(Appendix~\ref{app:baseline-shift}). We compare each method's perturbed trajectory with its own
unperturbed trajectory, using the configurations in
Appendix~\ref{app:ablation-hparams}. The separation between each pair of curves measures the
change under this intervention.

\ours{} better preserves performance than GRPO on MATH500 greedy
pass@1 and AIME24 mean@16 and pass@16
(Figure~\ref{fig:baseline-shift}). Comparing each method with its own
clean reference distinguishes sensitivity to the perturbation from the
difference between their original scores. The smaller changes for
\ours{} support using additive future KL feedback in place of an
advantage-dependent clipping gate. This experiment connects the
clipping mechanism discussed in Section~\ref{sec:algorithm} to changes
in held-out accuracy during practical training.

\begin{figure}[!ht]
 \centering
 \includegraphics[width=\linewidth]{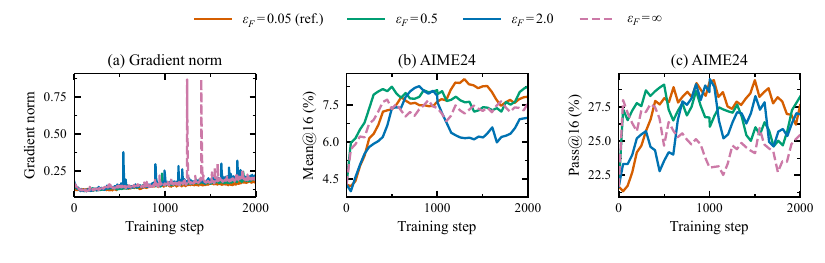}
 \caption{Future-cost clipping: 1.7B, $5\times10^{-6}$, staleness 4,
 full tail, $\beta=0.30$; EMA $0.20$, shared evaluation step 0.
 Gradient norms precede optimizer clipping.}
 \label{fig:epsilon-f}
\end{figure}

\noindent\textbf{\ours{} benefits from bounded future KL costs.}\label{sec:epsilon-f}
Having replaced PPO clipping with future KL feedback, we ask whether
the correction can be used without bounding its individual token costs. Figure~\ref{fig:epsilon-f} varies
$\epsilon_F$ while retaining the full future horizon and the same KL
coefficient. The unbounded setting retains regularization and removes
only the bound on each future cost. We compare AIME24 mean@16 and
pass@16 alongside gradient norms measured before optimizer clipping,
so accuracy and large gradient excursions can be assessed together.

The default bound achieves good AIME24 mean@16 and pass@16, with
smaller pre-clipping gradient-norm excursions than the loosest and
unbounded settings. A moderately larger bound also retains good accuracy,
showing that effective regularization does not require a single precise
clipping threshold. These observations support bounding individual
future costs before they accumulate in the correction. Removing PPO
clipping therefore remains compatible with controlling correction
magnitude through the future-cost bound.

\begin{figure}[!ht]
 \centering
 \includegraphics[width=\linewidth]{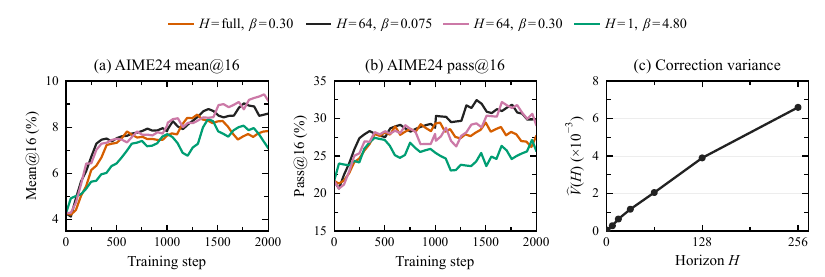}
 \caption{Future KL horizon ablation. (a,b) 1.7B,
 $5\times10^{-6}$, staleness 4; EMA $0.20$.
 $(H,\beta)=(1,4.80)$ and $(64,0.075)$ match $\beta H=4.8$.
 (c) Fixed-policy correction variance.}
 \label{fig:horizon-tradeoff}
\end{figure}

\paragraph{A finite horizon can outperform the full-tail working default.}\label{sec:tail-horizon}
Starting from the full-tail working default, we ask whether truncating
future credit can further improve accuracy. Figure~\ref{fig:horizon-tradeoff}
compares the full tail with intermediate windows and a one-token
correction. The comparison includes both a fixed KL coefficient and
a matched product $\beta H$, allowing us to examine whether increasing
the local coefficient can compensate for removing future terms.

Both intermediate-window settings outperform the full tail in AIME24
mean@16 and pass@16 near the end of their shared observation range,
including with the default KL coefficient held fixed. With $\beta H$
matched, the one-token correction improves mean@16 more slowly than the
intermediate window. Thus, shortening the tail can improve on the
working default, while increasing the local coefficient does not replace
future-token credit. This is the practical trade-off suggested by the
theory: truncation omits part of the ideal future credit, but can reduce
fluctuations in the correction. The fixed-policy diagnostic below measures
this second effect independently of training.

\paragraph{Truncating the future sum reduces correction variance at fixed policies.}
To test the variance argument for horizon truncation in
Section~\ref{sec:practical-bias-variance}, we measure how extending the future
sum changes correction variability at fixed policies. For each
retained response, we accumulate the scalar correction at its first
token, compute its sample variance across responses to the same prompt,
and average these variances across prompts. Policies, responses, the
correction coefficient, and the cost bound are held fixed across
horizons. Appendix~\ref{app:paired-horizon-variance} specifies the
calculation; this paired comparison isolates the effect of extending
the sum on the same data.

Figure~\ref{fig:horizon-tradeoff}(c) shows that, with policies and
responses held fixed, the average variance of the accumulated correction
increases over the tested horizons. Truncating the tail therefore reduces
correction variability in this diagnostic. Longer sums fluctuate more
across responses to the same prompt, on average, even without any change
in the policies. This supports horizon truncation as a way to control
fluctuations in future KL feedback. The measurement quantifies variation in the scalar correction; the
training curves show how horizon choice affects held-out accuracy.

\par
\begingroup
\section{Conclusion}

\ours{} combines token-wise importance weighting with future KL credit for off-policy
RLVR, where the behavior policy differs from the policy being optimized. Under the stated coverage and expressiveness conditions, we prove that the
full-tail critic-free update has the exact proximal optimizer as its unique stationary
policy and that tabular population flow converges to this target. Repeated reference
refreshes allow further reward improvement: for linear logistic models, we give a
finite-sample reward guarantee for a projected group update under the stated
representation, sampling, and correct-solution coverage assumptions. The bound depends
on an aggregate token-wise log-ratio second moment. The practical
algorithm uses clipped costs and an optional finite window. 
The held-out evaluations in Section~\ref{sec:main-results} show higher
micro-averaged accuracy for \ours{} than GRPO in all seven matched off-policy settings,
while the separate on-policy comparison yields comparable aggregate accuracy.
Periodic-refresh experiments
complement these gains with stable training where PG and CISPO can collapse,
and higher training accuracy than GRPO at comparable optimizer-update counts. The
ablations favor retaining current-token correction and future credit together,
while bounding individual costs and truncating the horizon control correction
fluctuations. An adaptive choice of $H$ could improve this balance by retaining
useful future credit without accumulating unnecessary variability.
\par

\par
\endgroup

\section*{Acknowledgments}
We thank Xuanchang Zhang for discussions on parts of the theoretical analysis and Chenlu Ye for discussions on the periodic policy-refresh experiments.

This work was supported in part by NSF grants 2416897, 2505932, and 2601666 and ONR grant N000142512318. Computing resources were provided by the National Center for Supercomputing Applications (NCSA) through Delta (NSF OAC-2005572) and DeltaAI (NSF OAC-2320345), as well as Illinois Computes and the NAIRR Pilot (NAIRR250157).

Generative AI tools were used under the authors' direction for assistance with literature searches, coding, conducting and analyzing experiments, deriving theoretical results and proofs, and polishing the manuscript.

\bibliographystyle{plainnat}
\bibliography{references}
\clearpage
\appendix
\numberwithin{equation}{section}
\numberwithin{theorem}{section}
\allowdisplaybreaks[1]
\section{Practical Update Design}
\label{app:estimators}

\paragraph{Proximal and behavior policies.}
The reference is frozen between its scheduled refreshes, as specified
in Appendix~\ref{app:controlled-lag-hparams}. The stored rollout probabilities identify the
behavior policy that generated each response. In GRPO, the ratio $\rho_i$
measures movement from the proximal policy for PPO clipping.
\ours{} instead uses the current-to-behavior weight
$\pi_\theta/\pi_{\mathrm{behav}}$ directly in
Equation~\eqref{eq:experiment-proximal}. Its proximal reference enters the
future KL correction; the importance weight need not be split into
two ratios.

\paragraph{Bias relative to the proximal objective.}
\label{app:bias-reference}

Fix $\theta$, the reference policy, and the sampling law.
With exact response importance correction, raw rewards, and a
detached history-only baseline, a reward-gradient estimator satisfies
\begin{equation}
 \mathbb E\widehat g_{\mathrm{reward}}-\nabla J_k
 =\beta\nabla_\theta\mathbb E_{x\sim\mathcal D}
 D_{\mathrm{KL}}\!\left(\pi_\theta(\cdot\mid x)
 \Vert\pi_k(\cdot\mid x)\right).
 \label{eq:reward-proximal-bias}
\end{equation}
The difference is exactly the omitted KL gradient, which vanishes at the reference
policy. Full future KL credit supplies this term under exact sampling, and response-wide
importance weighting preserves the equality off policy. Replacing response-wide weights
with token-wise weights generally changes the gradient field. Group normalization and
clipping the future log-ratio costs introduce additional approximations.

\paragraph{Group normalization and KL correction.}
\label{app:practical-loss}

For a group of $G$ responses, define
\begin{equation}
 \overline R=\frac1G\sum_{m=1}^{G}R_m,\qquad
 s_R^2=\frac1{G-1}\sum_{m=1}^{G}(R_m-\overline R)^2,\qquad
 \widehat A_m=\frac{R_m-\overline R}{s_R+1\times10^{-5}}.
 \label{eq:group-normalized-advantage}
\end{equation}
For each token, subtract the KL correction $C_{m,i}^{(H)}$ from its reward advantage
(Equation~\eqref{eq:practical-proximal-kl}). The correction includes the current token
and stops at the response boundary, with no normalization by window length. The window
determines which costs enter the sum, while $\beta$ sets their per-token strength.

\paragraph{Detached update coefficients.}
The loss in Equation~\eqref{eq:implemented-loss} uses the group-normalized
reward advantage and future KL correction defined above. Holding its coefficients
fixed during differentiation gives the score-function ascent direction of
Equation~\eqref{eq:experiment-proximal}. The loss averages over sampled tokens;
the population analysis averages token-score sums over responses.

\begin{table}[!htbp]
\centering
\small
\caption{From the stationary-policy result to the implemented update.}
\label{tab:theory-practice}
\begin{tabular}{@{}p{0.20\linewidth}p{0.25\linewidth}p{0.43\linewidth}@{}}
\toprule
Ideal condition & Implemented choice & Effect on the sampled update \\
\midrule
Full future KL tail & Horizon $H$, including $H=\mathrm{full}$ & Selects how far credit extends; a finite window omits later costs. \\
Unclipped future log ratio & Optional log-ratio clipping & Bounds each future cost and changes its expectation when active; the choice is independent of the sampling policy. \\
History-only baseline & Group-normalized reward advantage & Sets the reward scale from sampled responses; the resulting field can differ from the ideal one. \\
Population response average & Minibatch token average & Adds sampling noise; token normalization can change the relative weights of responses. \\
\bottomrule
\end{tabular}
\end{table}
The horizon controls how far sampled future credit extends, and the log-ratio bound
limits each cost. Both choices can introduce approximation bias. Their effect on total
update variance depends on the covariance between retained and omitted terms. Group
normalization and token averaging can also change the expected update field and its
stationary policy.

\paragraph{Scope of the implementation.}

The implementation uses shared Transformer parameters and Adam with
the estimator choices listed in Table~\ref{tab:theory-practice}.
Section~\ref{sec:motivation} explains their relation to the stationary-policy
guarantee, and Section~\ref{sec:ablation} evaluates clipping and
future-credit horizon. Appendix~\ref{app:controlled-lag-hparams} gives the settings.

\paragraph{Sequence-level IS and IS-free controls.}
\label{app:sequence-correction}\label{app:gspo-proximal}
Figure~\ref{fig:sequence-correction} compares the updates in
Equations~\eqref{eq:experiment-sequence} and~\eqref{eq:experiment-nois}
with token-wise weighting. All three retain token-specific future KL
credit and average the loss over nonpadding action tokens.
Sequence-level IS uses the geometric mean of the current-to-behavior
token ratios. Detaching this weight and multiplying each token's log
probability by its corrected advantage gives the GSPO-token gradient
\citep{zheng2025gspo}. The IS-free control sets the weight to one.
The geometric mean differs from the full-response ratio product;
both controls retain the cost bound $\epsilon_F$ and omit PPO clipping.

\FloatBarrier

\section{Training with Periodic Policy Refresh}
\label{app:periodic-refresh-experiments}

This appendix collects the training protocol, model and precision settings,
and method-specific hyperparameters for the periodic-refresh experiments.

\subsection{Training Protocol}
\label{app:periodic-refresh-protocol}
Each iteration collects a fresh response batch and uses it for one optimizer
update. Rollout weights are synchronized with the training policy every 64
optimizer updates and remain fixed between refreshes, so successive batches
can come from the same older policy while the training policy changes.
The key configurations are given in Appendix~\ref{app:periodic-refresh}.
Correct boxed answers receive reward $+1$; incorrect or invalid answers receive $-1$.
Training accuracy is the fraction of correct responses, computed as
$(1+\overline R)/2$, where $\overline R$ is the mean verifier reward before centering.
For the comparison between \ours{} and GRPO, we average raw training accuracy over the last
100 updates observed for both methods within each setting, restricting this
summary to settings with at least 100 updates for each method.

\subsection{Hyperparameters}
\label{app:periodic-refresh}
We train Qwen3-1.7B-Base, 4B-Base, 8B-Base, and Qwen3-32B on DAPO-Math-17K
\citep{yang2025qwen3,yu2025dapo} using Slime \citep{zhu2025slime}. We use a post-trained checkpoint for 32B because a corresponding
pre-trained checkpoint was unavailable to us. The smaller models use base checkpoints. Rollouts use FP8 W/A and
BF16 KV, while optimization uses BF16. Table~\ref{tab:periodic-refresh-hparams}
summarizes the key choices for the training curves in Figure~\ref{fig:simulation}.

All methods use Adam, with gradient-norm clipping at 1.
Advantages subtract the group mean without standard-deviation normalization,
and losses average over response tokens. PG and \ours{} use current-token
current-to-behavior importance weights; Section~\ref{sec:algorithm} defines
all four update rules. The curves use EMA $\alpha=0.10$.

\begin{table}[!ht]
\centering\small
\setlength{\tabcolsep}{3pt}
\renewcommand{\arraystretch}{1.08}
\caption{Key hyperparameters for training with periodic policy refresh.
Merged cells denote shared settings; refresh intervals count optimizer updates.}
\label{tab:periodic-refresh-hparams}
\begin{tabular*}{\linewidth}{@{\extracolsep{\fill}}llcccc@{}}\toprule
Method & Setting & \multicolumn{4}{c}{Qwen3} \\
\cmidrule(lr){3-6}
 & Model size & 1.7B & 4B & 8B & 32B \\
\midrule
Shared & Learning rate & \multicolumn{4}{c}{$1\times10^{-6}$} \\
 & Rollout refresh interval & \multicolumn{4}{c}{64} \\
 & Prompts $\times$ responses per batch & \multicolumn{4}{c}{$32\times8$} \\
 & Optimizer updates per batch & \multicolumn{4}{c}{1} \\
 & Prompt / response limits (tokens) & \multicolumn{4}{c}{1,024 / 16,384} \\
 & Sampling temperature & \multicolumn{4}{c}{1.0} \\
 & Displayed updates (maximum) & 1,500 & \multicolumn{2}{c}{1,000} & 400 \\
\midrule
GRPO & PPO clip (proximal reference) & \multicolumn{4}{c}{Yes ($\epsilon_{\mathrm{clip}}=0.2$)} \\
 & Reference refresh interval $L$ & \multicolumn{4}{c}{32} \\
\midrule
\ours{} & Future horizon $H$ & \multicolumn{4}{c}{Full} \\
 & Coefficient $\beta$ & \multicolumn{4}{c}{0.30} \\
 & Future log-ratio bound $\epsilon_F$ & \multicolumn{4}{c}{0.05} \\
 & Reference refresh interval $L$ & \multicolumn{4}{c}{16} \\
 & PPO clip & \multicolumn{4}{c}{No} \\
\midrule
PG & PPO clip & \multicolumn{4}{c}{No} \\
\midrule
CISPO & Importance-weight clipping & \multicolumn{4}{c}{$[0,5]$} \\
\bottomrule\end{tabular*}
\end{table}

\section{Training with Controlled Rollout Staleness}
\label{app:controlled-lag-experiments}
\label{app:main-results}

This appendix collects the training and evaluation protocols, hyperparameters,
and ablation diagnostics for the experiments with controlled staleness.
Appendix~\ref{app:estimators} describes the practical updates, and
Appendix~\ref{app:async-theory} gives the theoretical target guarantee.

\subsection{Training and Evaluation Protocols}
\label{app:training-settings}
\label{app:controlled-lag}
Rollout generation overlaps with policy optimization, and collected batches are reused
across successive optimizer minibatches. We control rollout staleness and
verify the realized policy-version gaps from training logs. The model,
data, optimization, reference-policy, and evaluation settings are given in
Appendix~\ref{app:controlled-lag-hparams}.

Figure~\ref{fig:main-multiseed} reports EMA-smoothed online evaluation curves,
whereas Table~\ref{tab:main-results} reports separate raw checkpoint evaluations.
Checkpoint scores use neither smoothing nor performance-based checkpoint selection.
The stability test reports held-out accuracy together with policy entropy;
the gradient diagnostics measure norms before optimizer clipping.
Appendix~\ref{app:figure-hparams} gives the shared settings and evaluation budgets.

\paragraph{Evaluation metrics.}
Mean@16 measures correctness per response, while pass@16 measures the fraction of
prompts solved at least once. For independent attempts with per-prompt success
probability $p$, expected pass@16 is $1-(1-p)^{16}$. Because this transformation is
nonlinear, the two metrics describe different aspects of performance.
The figures show both; Tables~\ref{tab:main-results} and~\ref{tab:lag-4b}
report response accuracy only. MATH500, OlympiadBench, MinervaMath, and
Gaokao2023MathEn use greedy pass@1 on 500, 674, 272, and 383 problems.
AIME24, AIME25, AMC23, and AMC24 use mean@16 on 30, 30, 40, and 45
problems, respectively, with 16 responses per problem.
Base rows report evaluations of the corresponding base models.
Average in Table~\ref{tab:main-results} is micro accuracy: the total
number of correct generated responses across the eight benchmarks divided
by 4,149 (500, 674, 272, 383, 480, 480, 640, and 720 responses, respectively).
Each generated response contributes once, without counting pass@16 as an
additional observation.

For three-seed rows, we compute this average within each seed and
report only its mean across seeds.

\subsection{Hyperparameters}
\label{app:hyperparameters}
\label{app:controlled-lag-hparams}
\label{app:figure-hparams}

\begin{table}[!ht]
\centering\small
\setlength{\tabcolsep}{3pt}
\renewcommand{\arraystretch}{1.08}
\caption{Hyperparameters for training with controlled staleness. Columns specify controlled rollout staleness 4; the comparison at staleness 16 uses the corresponding setting. At 1.7B, the $1\times10^{-5}$ \ours{} run with staleness 16 uses $H=64$, $\beta=0.60$. At 4B, staleness 4 uses a 32-token unit-weight prefix and a linearly decayed remainder; staleness 16 uses $H=8$, with $\beta=0.25$ in both cases. PG shares GRPO's batch, sampling, optimizer, reward-normalization, and staleness settings. Merged cells denote shared values.}
\label{tab:run-coverage}
\label{tab:hyperparameters}
\begin{tabular*}{\linewidth}{@{\extracolsep{\fill}}llccc@{}}\toprule
Method & Setting & \multicolumn{3}{c}{Off-policy} \\
\cmidrule(lr){3-5}
 & Model & \multicolumn{2}{c}{Qwen3-1.7B} & Qwen3-4B \\
 & Learning rate & $5\times10^{-6}$ & $1\times10^{-5}$ & $5\times10^{-6}$ \\
\midrule
GRPO & PPO clip & \multicolumn{3}{c}{Yes} \\
 & Prompts per batch & \multicolumn{3}{c}{128} \\
 & Responses per prompt & \multicolumn{3}{c}{16} \\
 & Response limit (tokens) & \multicolumn{3}{c}{8,192} \\
 & Sampling temperature & \multicolumn{3}{c}{1.0} \\
 & Optimizer & \multicolumn{3}{c}{Adam} \\
 & Gradient-norm limit & \multicolumn{3}{c}{1.0} \\
 & Minibatches per batch & \multicolumn{3}{c}{4} \\
 & Reward normalization & \multicolumn{3}{c}{Within each prompt group} \\
 & Rollout staleness & \multicolumn{3}{c}{4} \\
 & Evaluation budget (updates) & \multicolumn{2}{c}{2,000} & $\approx 1{,}500$ \\
\midrule
PG & PPO clip & \multicolumn{3}{c}{No} \\
 & Coefficient $\beta$ & \multicolumn{3}{c}{0} \\
 & Minibatches per batch & \multicolumn{3}{c}{4} \\
 & Batch, sampling, and optimizer & \multicolumn{3}{c}{As GRPO above} \\
\midrule
\ours{} & Future horizon $H$ & \multicolumn{2}{c}{Full} & 32 + linear tail \\
 & Coefficient $\beta$ & \multicolumn{2}{c}{0.30} & 0.25 \\
 & PPO clip & \multicolumn{3}{c}{No} \\
 & Future log-ratio bound $\epsilon_F$ & \multicolumn{3}{c}{0.05} \\
 & Reference refresh (global batches) & \multicolumn{3}{c}{1} \\
 & Prompts per batch & \multicolumn{3}{c}{128} \\
 & Responses per prompt & \multicolumn{3}{c}{16} \\
 & Response limit (tokens) & \multicolumn{3}{c}{8,192} \\
 & Sampling temperature & \multicolumn{3}{c}{1.0} \\
 & Optimizer & \multicolumn{3}{c}{Adam} \\
 & Gradient-norm limit & \multicolumn{3}{c}{1.0} \\
 & Minibatches per batch & \multicolumn{3}{c}{4} \\
 & Reward normalization & \multicolumn{3}{c}{Within each prompt group} \\
 & Rollout staleness & \multicolumn{3}{c}{4} \\
 & Evaluation budget (updates) & \multicolumn{2}{c}{2,000} & $\approx 1{,}500$ \\
\bottomrule\end{tabular*}
\end{table}

\paragraph{Hyperparameter Selection.}
We selected the final configurations for \ours{} through a small number
of manual adjustments under a limited tuning budget. Starting from a
working configuration, we inspected early learning curves and adjusted
hyperparameters when the improvement was insufficient. Each final
configuration was obtained after a few such adjustments and retained
for stable training. The horizon and clipping ablations separately
examine how these choices affect optimization.

We train Qwen3-1.7B-Base and Qwen3-4B-Base on 9,453 prompts derived from
FRPO's filtered DAPO-Math data \citep{yao2026futurekl,yu2025dapo}, using
AReaL \citep{fu2025areal}. Table~\ref{tab:hyperparameters} gives the shared
batch, sampling, and optimizer settings. Adam uses
$(\beta_1,\beta_2)=(0.9,0.999)$, $\epsilon_{\mathrm{Adam}}=1\times10^{-8}$, a constant
learning rate, and no weight decay. Off-policy batches use four
optimizer minibatches. Reward advantages use the group normalization in
Equation~\eqref{eq:group-normalized-advantage}. The off-policy GRPO control
uses symmetric PPO clipping with width $\epsilon_{\mathrm{clip}}=0.2$. At nonzero
staleness, the proximal reference is recomputed before each global batch
and frozen for its minibatches.

To improve training efficiency, all 4B experiments in this controlled-staleness
setting use dynamic sampling \citep{yu2025dapo}. We discard prompt groups
with identical verifier rewards and collect replacement groups to fill the
training batch, using the same filtering rule for all compared methods.

\paragraph{Staleness and reference settings.}
The asynchronous comparisons control rollout staleness at 4 or 16.
For the 1.7B GRPO comparison at staleness 4, the recorded mean staleness
is approximately 4. The separate comparisons with zero staleness use
either one or four optimizer minibatches.
Their \ours{} runs use the full tail, $\beta=0.03$, no future-cost
clipping, and a reference refreshed every 50 updates. The one-minibatch
controls omit PPO clipping.

\paragraph{Evaluation.}
Table~\ref{tab:main-results} reports three-seed means for 1.7B at
2,000 updates and 4B scores at approximately 1,500 updates.
The performance and stability curves use EMA $\alpha=0.20$.

\FloatBarrier

\subsection{Ablation Configurations}
\label{app:ablation-hparams}

This section gives the perturbation construction and variance estimator
used in Section~\ref{sec:ablation}.

\par\medskip\phantomsection\noindent\textbf{Baseline Perturbation.}\label{app:baseline-shift} An underestimated baseline makes every response in a group appear better, without changing their relative ranking. We test how GRPO and \ours{} respond by adding one to each normalized advantage in groups with nonconstant rewards:
\begin{equation}
 \widehat A'_{g,m}=\widehat A_{g,m}+I_g,\qquad
 I_g=\mathbf 1\{\max_m R_{g,m}>\min_m R_{g,m}\}.
 \label{eq:group-baseline-shift}
\end{equation}
Verifier rewards, reward normalization, and within-group advantage
differences remain unchanged. For \ours{}, future KL credit is computed
as usual and subtracted after the shift.

\paragraph{\normalfont\textit{Why clipping matters.}}
\label{app:baseline-identities}
PPO clipping reacts to the advantage sign: above the upper ratio bound,
positive advantages contribute no gradient, while negative ones still do.
A common upward shift can therefore switch some contributions off.
For detached advantages and a frozen positive behavior correction, the
score coefficient is
$\frac{\pi_\theta}{\pi_{\mathrm{behav}}}q_{\mathrm{clip}}(\rho_i,\widehat A_m)$,
where, away from the clipping boundaries,
\begin{equation}
 q_{\mathrm{clip}}(\rho,A)=\begin{cases}
 \max(A,0),&\rho<0.8,\\
 A,&0.8<\rho<1.2,\\
 \min(A,0),&\rho>1.2.
 \end{cases}
 \label{eq:ppo-gate}
\end{equation}
Above the upper bound, symmetric noise $\pm\sigma$ has zero mean before
clipping. Removing its positive half gives $q_{\mathrm{clip}}$ mean
$-\sigma/2$ and reduces its variance from $\sigma^2$ to $\sigma^2/4$
(Figure~\ref{fig:clipping-gradient}). These coefficient moments factor out
the common importance-weighted policy score.

Without clipping, a history-only baseline expresses no preference among
the possible actions and cancels in expectation: their probabilities sum
to one, whose gradient is zero. For a detached shift $b(h)$ and positive
sampling support,
\begin{equation}
 \mathbb E_{a\sim\pi_{\mathrm{behav}}(\cdot\mid h)}
 \left[\frac{\pi_\theta(a\mid h)}{\pi_{\mathrm{behav}}(a\mid h)}
 b(h)\nabla_\theta\log\pi_\theta(a\mid h)\right]
 =b(h)\nabla_\theta\sum_a\pi_\theta(a\mid h)=0.
 \label{eq:baseline-cancellation}
\end{equation}
An additive future KL correction unchanged by the shift preserves this
expected cancellation; the clipping gate need not.

This contrast motivates the experiment. The group-shared shift depends
on sampled rewards through $I_g$, so sensitivity is measured directly:
each method's perturbed run is compared with its unperturbed run to
assess how well it preserves held-out accuracy
(Section~\ref{sec:baseline-shift}).

\par\medskip\phantomsection\noindent\textbf{Correction Variance at Fixed Policies.}\label{app:paired-horizon-variance}\label{app:variance-hparams} We measure how the total correction varies across responses to the same prompt as $H$ increases. We reuse the same tokenized responses and two frozen Qwen3-1.7B checkpoints, which supply current and reference probabilities. This diagnostic pair differs from the original within-minibatch pair. We use 16 prompts with 16 responses each, $\beta=0.30$, and $\epsilon_F=0.05$.

Each response $y_{pm}$ to prompt $x_p$ gives one scalar: its first-token
correction, summing costs through $H$ or its final token $T_{pm}$,
\begin{equation}
 C_{pm}^{(H)} := C_1^{(H)}(x_p,y_{pm})
 = \beta\sum_{j=1}^{\min(H,T_{pm})}
 \operatorname{clip}(\log\rho_j,-\epsilon_F,\epsilon_F),
 \label{eq:diagnostic-response-correction}
\end{equation}
where $\rho_j$ is the current-to-reference token-probability ratio.
The sum is not divided by $H$ or response length.

We compute the sample variance of these numbers within each prompt,
then average equally across prompts. This excludes differences between prompt
means. With $P$ prompts and $M$ responses per prompt,
\begin{equation}
 \begin{aligned}
  \overline C_p^{(H)} &:= \frac{1}{M}\sum_{m=1}^{M}C_{pm}^{(H)}, \\
  \widehat V(H) &:= \frac{1}{P}\sum_{p=1}^{P}
  \left[\frac{1}{M-1}\sum_{m=1}^{M}
  \bigl(C_{pm}^{(H)}-\overline C_p^{(H)}\bigr)^2\right].
 \end{aligned}
 \label{eq:diagnostic-correction-variance}
\end{equation}
Larger $\widehat V(H)$ means correction sums are more spread out within
prompts, on average. The measured quantity is the scalar correction sum defined above.

Figure~\ref{fig:horizon-tradeoff}(c) plots the measured $\widehat V(H)$
in units of $1\times10^{-3}$, without smoothing.

\subsection{Learning-Rate Selection}
\label{app:learning-rate}
\label{app:dapo-lr}
\label{app:other-lr}

\begin{figure}[H]
 \centering
 \includegraphics[width=\linewidth]{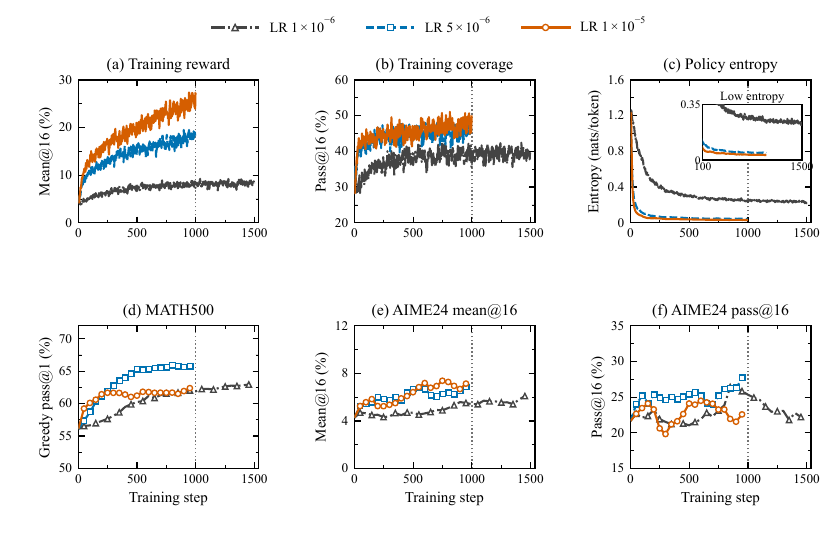}
 \caption{Qwen3-1.7B GRPO learning-rate sweep, staleness 4.
 Dotted line: 1,000 updates; EMA $\alpha=0.20$.}
 \label{fig:dapo-lr}
\end{figure}

The aggregate accuracy advantage of \ours{} over GRPO holds at both
learning rates: its micro-averaged accuracy is higher in every matched off-policy
$5\times10^{-6}$ and $1\times10^{-5}$ comparison in
Table~\ref{tab:main-results}. We choose $5\times10^{-6}$ for the main comparison because both methods achieve
higher greedy pass@1 on all four greedy-evaluated benchmarks at staleness levels 4 and 16
(Table~\ref{tab:main-results}). The Qwen3-1.7B rows at $1\times10^{-5}$ in
Table~\ref{tab:main-results} are discussed in the stability test
(Section~\ref{sec:update-dynamics}). The 1.7B GRPO sweep provides further context
(Figure~\ref{fig:dapo-lr}): averaging the displayed EMA-smoothed observations
at steps 650 to 999, $5\times10^{-6}$ gives the highest MATH500 greedy pass@1
and AIME24 pass@16, while AIME24 mean@16 favors $1\times10^{-5}$.

\subsection{GRPO Clipping Width}
\label{app:grpo-clip-width}

\begin{figure}[!ht]
 \centering
 \includegraphics[width=\linewidth]{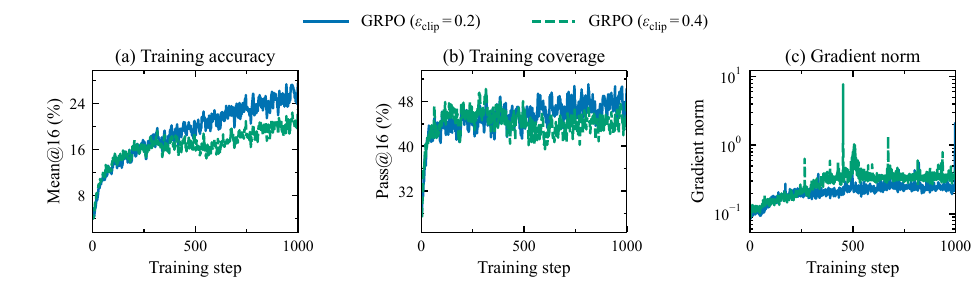}
 \caption{GRPO clipping-width comparison: Qwen3-1.7B, $1\times10^{-5}$,
 staleness setting 4. Accuracy uses EMA $\alpha=0.20$; gradient norms are raw
 and shown on a logarithmic axis.}
 \label{fig:grpo-clip-width}
\end{figure}

We ablate GRPO's PPO clipping width to assess the training stability of
the baseline. Increasing $\epsilon_{\mathrm{clip}}$ from $0.2$ to $0.4$
expands the interval from $[0.8,1.2]$ to $[0.6,1.4]$, allowing larger
changes in the policy ratio before clipping. Both configurations use
four optimizer minibatches and the same reward normalization.

Wider clipping leads to less stable training in this comparison
(Figure~\ref{fig:grpo-clip-width}). With $\epsilon_{\mathrm{clip}}=0.4$,
larger gradient-norm excursions accompany a temporary decline in training
mean@16 and pass@16. The narrower interval maintains a steadier increase
in training mean@16 with smaller gradient excursions. These observations
support retaining $\epsilon_{\mathrm{clip}}=0.2$ for the reported GRPO baseline.

\clearpage
\section{Proximal target and future KL gradient}
\label{app:proximal-theory}

For a fixed full-support reference $\pi_k$ and $\beta>0$, the proximal
objective has an explicit response-level maximizer. We recall it and its
objective-gap identity for completeness.

\begin{proposition}[Exact proximal improvement]
\label{thm:prox-improvement}
\label{thm:gibbs-complete}
For $\beta>0$ and a full-support reference $\pi_k$, the unrestricted
proximal maximizer is
\begin{equation}
 \pi_k^*(\tau\mid x)
 =\frac{\pi_k(\tau\mid x)e^{R(x,\tau)/\beta}}{Z_k(x)},\qquad
 Z_k(x)=\mathbb E_{\tau\sim\pi_k(\cdot\mid x)}e^{R(x,\tau)/\beta}.
 \label{eq:appendix-gibbs}
\end{equation}
For binary rewards, let $p_k(x)=\Pr_{\pi_k}(R=1\mid x)$ and
$p_k^*(x)=\Pr_{\pi_k^*}(R=1\mid x)$.
For every prompt with $0<p_k(x)<1$,
\begin{equation}
 p_k^*(x)=\frac{e^{1/\beta}p_k(x)}{1-p_k(x)+e^{1/\beta}p_k(x)}>p_k(x),
 \qquad
 \frac{p_k^*(x)}{1-p_k^*(x)}
 =e^{1/\beta}\frac{p_k(x)}{1-p_k(x)}.
 \label{eq:promptwise-improvement}
\end{equation}
The probability formula extends to $p_k(x)=0$ and $p_k(x)=1$ by continuity.
For positive initial success probability, repeated exact updates with fixed
$\beta$ make the success probability converge to one.
\end{proposition}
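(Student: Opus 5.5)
The plan is to prove the maximizer formula through the objective-gap identity, and then read off the binary-reward consequences by direct computation. Fix a prompt $x$. For any response policy $\pi(\cdot\mid x)$, absolutely continuous with respect to $\pi_k$ (otherwise the KL term is infinite), I would write
\[
 \mathbb E_{\tau\sim\pi}[R(x,\tau)]-\beta D_{\mathrm{KL}}(\pi\Vert\pi_k)
 =\beta\log Z_k(x)-\beta D_{\mathrm{KL}}(\pi\Vert\pi_k^*).
\]
This follows by substituting $\log\pi_k(\tau\mid x)=\log\pi_k^*(\tau\mid x)-R(x,\tau)/\beta+\log Z_k(x)$ into the KL term. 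The reward terms then cancel, leaving the stated gap.

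Since $Z_k(x)$ is finite and positive (finite support, full-support reference), $\pi_k^*$ is a valid distribution. Gibbs' inequality then shows that the per-prompt objective is maximized exactly when $\pi=\pi_k^*$, with maximum value $\beta\log Z_k(x)$. Averaging over $x\sim\mathcal D$ gives the unrestricted maximizer of \eqref{eq:mech-objective}, and it is unique on the support of $\mathcal D$. This is the identity \eqref{eq:prox-gap} cited in the main text.

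For binary rewards, I would sum $\pi_k^*$ over the correct responses. This gives numerator $e^{1/\beta}p_k(x)$ and normalizer
\[
 Z_k(x)=(1-p_k(x))+e^{1/\beta}p_k(x),
\]
which is the formula in \eqref{eq:promptwise-improvement}. The odds identity follows by dividing $p_k^*(x)$ by $1-p_k^*(x)=(1-p_k(x))/Z_k(x)$. Strict improvement for $0<p_k(x)<1$ follows from the odds identity: the odds are multiplied by $e^{1/\beta}>1$, and odds are strictly increasing in probability. At the endpoints $p_k(x)\in\{0,1\}$ the formula gives $0$ and $1$ directly, consistent with continuity of the rational expression.

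For the iteration, the odds after $n$ exact updates equal $e^{n/\beta}$ times the initial odds. If $0<p_0(x)<1$, these odds diverge, so the success probability tends to one. If $p_0(x)=1$, it stays at one.

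One subtlety needs checking: repeated updates stay full-support only where $\pi_k$ is. I would restate the argument with absolute continuity relative to the current reference. The multiplicative odds form never needs full support beyond what $\pi_0$ has, so this causes no difficulty.

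There is no real obstacle. The only care points are handling policies not absolutely continuous with respect to $\pi_k$ (infinite KL) and making the uniqueness statement prompt-wise on the support of $\mathcal D$.
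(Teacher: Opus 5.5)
Your proposal is correct and follows the paper's proof essentially step for step. Both arguments obtain the objective-gap identity by substituting the Gibbs form into the KL term and use nonnegativity of KL for maximality and uniqueness; both then sum the Gibbs weights over correct responses for the binary formula and multiply the odds by $e^{1/\beta}$ per refresh for convergence. The full-support caveat you raise is vacuous: $e^{R/\beta}>0$, so $\pi_k^*$ inherits full support from $\pi_k$, and every iterate remains a valid full-support reference.
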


We write $J_k(\pi)$ for the proximal objective evaluated at a policy, so
$J_k(\theta)=J_k(\pi_\theta)$. At the Gibbs target its value is
$J_k(\pi_k^*)=\beta\E_x\log Z_k(x)$.
For any response policy $\pi$, the corresponding objective-gap identity is
\begin{equation}
 J_k(\pi_k^*)-J_k(\pi)
 =\beta\mathbb E_xD_{\mathrm{KL}}\!\left(
 \pi(\cdot\mid x)\Vert\pi_k^*(\cdot\mid x)\right).
 \label{eq:prox-gap}
\end{equation}
\begin{proof}[Proof of Proposition~\ref{thm:prox-improvement}]
Substituting the Gibbs form into the KL divergence gives the
objective-gap identity. Its right-hand side is nonnegative and
vanishes only at $\pi=\pi_k^*$, proving maximality. For binary
rewards, conditioning the Gibbs form on success gives
Equation~\eqref{eq:promptwise-improvement}. Each exact reference refresh
adds $1/\beta$ to the success log-odds, so repeated exact updates send
the success probability to one.
\end{proof}

\subsection{Illustration of proximal feedback with shared parameters}
\label{app:shared-head-separation}

Proximal and trust-region methods are established tools for controlling policy changes
\citep{schulman2015trpo,tomar2020mdpo}. The following example illustrates how proximal
feedback can limit interference between prompts that share parameters. It compares
ordinary Euclidean reward-gradient flow with repeated exact proximal solves in the same
policy class, providing context for the on-policy experiment.

\paragraph{Construction.}
Take two equally likely prompts, a single action $a\in\{0,1\}$, and
reward $R=a$. Let $p_m=\pi_\theta(1\mid x_m)=\sigma(u_m^\top\theta)$,
where $\sigma$ is the logistic function, and set
\[
 \begin{gathered}
 u_1=(1,1)^\top/5,\qquad u_2=(1,-4)^\top/5,\qquad
 \theta_0=(-\Delta,\Delta)^\top,\\
 \Delta=\log\frac{1-p_{\rm init}}{p_{\rm init}},\qquad
 0<p_{\rm init}\le 1/64.
 \end{gathered}
\]
Initially $(p_1,p_2)=(1/2,p_{\rm init})$. The shared head can solve
both prompts: along $\theta=(c,0)^\top$, both success probabilities
approach one as $c\to\infty$.

\begin{proposition}[Reward-gradient flow versus exact proximal updates]
\label{prop:shared-head-separation}
From this initialization, Euclidean gradient flow for
$J(\theta)=(p_1+p_2)/2$ converges in policy space to $(p_1,p_2)=(1,0)$
and never achieves $J>1/2$. Repeated exact proximal updates with any
fixed $\beta>0$ are representable by the same head and converge to
$(p_1,p_2)=(1,1)$.
\end{proposition}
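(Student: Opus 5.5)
The plan is to reduce everything to the two prompt logits $z_m=u_m^\top\theta$. Because $u_1=(1,1)^\top/5$ and $u_2=(1,-4)^\top/5$ are linearly independent, the map $\theta\mapsto(z_1,z_2)$ is a linear bijection of $\mathbb R^2$. The policy class is therefore exactly the set of pairs $(p_1,p_2)=(\sigma(z_1),\sigma(z_2))$. At initialization $z_1=(-\Delta+\Delta)/5=0$ and $z_2=(-\Delta-4\Delta)/5=-\Delta$, which matches $(p_1,p_2)=(1/2,p_{\rm init})$.

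\textbf{Proximal part.} I would do this part first because it is short. By Proposition~\ref{thm:prox-improvement}, with a single binary action per prompt, the exact proximal step multiplies the success odds of each prompt by $e^{1/\beta}$. That is, it maps $(z_1,z_2)\mapsto(z_1+1/\beta,\,z_2+1/\beta)$. Since the logit map is onto, some $\theta$ realizes this pair, so the Gibbs target lies in the class and is representable by the same head. After $k$ refreshes we have $z_m=z_m(0)+k/\beta\to\infty$, and hence $(p_1,p_2)\to(1,1)$.

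\textbf{Gradient flow.} With $s_m=p_m(1-p_m)$, the gradient is $\nabla J=\tfrac12(s_1u_1+s_2u_2)$. The Gram entries are $|u_1|^2=2/25$, $u_1^\top u_2=-3/25$ and $|u_2|^2=17/25$, which give the logit dynamics
\[
 \dot z_1=\tfrac1{50}(2s_1-3s_2),\qquad
 \dot z_2=\tfrac1{50}(17s_2-3s_1),\qquad
 \tfrac{d}{dt}(z_1+z_2)=\tfrac1{50}(14s_2-s_1).
\]
Since $\sigma(a)+\sigma(b)<1$ exactly when $a+b<0$, the claim $J<1/2$ for all time is equivalent to $z_1+z_2<0$ for all time. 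I would prove it through a bootstrapped invariant region
\[
 \mathcal S=\{z_1\ge0,\ z_2\le-\Delta,\ z_1+z_2\le-\Delta\}.
\]
The initial point $(0,-\Delta)$ lies in $\mathcal S$. On $\mathcal S$ we have $p_1\ge1/2$, so $s_1\ge e^{-z_1}/4$, and we have $s_2\le e^{z_2}$. Together these give
\[
 \frac{s_2}{s_1}\le 4e^{z_1+z_2}\le 4e^{-\Delta}\le \frac{4}{63}<\frac1{14},
\]
using $e^{-\Delta}=p_{\rm init}/(1-p_{\rm init})\le 1/63$ for $p_{\rm init}\le1/64$. With $s_2<s_1/14$, all three displayed derivatives have the right sign: $\dot z_1>0$, $\dot z_2<0$, and $\tfrac{d}{dt}(z_1+z_2)<0$. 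A standard first-exit-time argument then shows the trajectory never leaves $\mathcal S$, so $z_1+z_2\le-\Delta<0$ and $J<1/2$ throughout.

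\textbf{Limits and main obstacle.} On $\mathcal S$, $z_1$ is increasing and $z_2$ is decreasing, and $\dot z_1\ge\tfrac{1}{50}(2-\tfrac3{14})s_1>0$. If $z_1$ had a finite limit, $s_1$ would stay bounded below, so $\dot z_1$ would stay bounded below and $z_1$ would grow without bound, a contradiction. Hence $z_1\to\infty$. The same reasoning applied to $\dot z_2\le\tfrac{1}{50}(\tfrac{17}{14}-3)s_1<0$ does not directly force $z_2\to-\infty$, because $s_1$ is now decaying. Instead I would use the sum: $z_2\le-\Delta-z_1\to-\infty$. Therefore $(p_1,p_2)\to(1,0)$.

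I expect the main obstacle to be closing the invariant-region argument cleanly. The key point is the choice of the quantity $z_1+z_2$. It simultaneously encodes $J<1/2$ and, on $\mathcal S$, controls the ratio $s_2/s_1$ that drives the sign of every derivative. The numerical margin $4/63<1/14$ is where the hypothesis $p_{\rm init}\le1/64$ enters, so that inequality should be checked carefully.
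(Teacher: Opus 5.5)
Your proposal is correct and follows essentially the same route as the paper. Both arguments reduce to the logits $z_m$, use the invariant region $\{z_1\ge0,\ z_1+z_2\le-\Delta\}$ with the ratio bound $s_2/s_1\le4/63<1/14$, force $z_1\to\infty$ by contradiction and then get $z_2\le-\Delta-z_1\to-\infty$, and note that the exact proximal step adds $1/\beta$ to both logits. The differences are cosmetic: your extra constraint $z_2\le-\Delta$ is implied by the other two, and you obtain representability from surjectivity of the logit map, whereas the paper exhibits the explicit increment $(5,0)^\top/\beta$.
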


\begingroup
\begin{proof}
Write $z_m=u_m^\top\theta$ and $d_m=p_m(1-p_m)$. Reward-gradient
flow satisfies
\[
 \dot z_1=(2d_1-3d_2)/50,\qquad
 \dot z_2=(-3d_1+17d_2)/50.
\]
In the region $z_1\ge0$, $z_1+z_2\le-\Delta$, we have
$d_1\ge e^{-z_1}/4$ and
$d_2\le e^{-z_1}p_{\rm init}/(1-p_{\rm init})$, hence
$d_2/d_1\le4/63<1/14$.
Thus $\dot z_1>0$ and $\dot z_1+\dot z_2=(-d_1+14d_2)/50<0$.
These strict inequalities hold on the boundary as well, so the flow points into
the region, which contains the initialization.
Invariance of this region implies $p_2\le1-p_1$ and $J\le1/2$.
If the increasing $z_1$ were bounded, $d_1$ and consequently
$\dot z_1$ would be bounded below by positive constants, a contradiction.
Therefore $z_1\to\infty$ and $z_2\to-\infty$.

By Proposition~\ref{thm:prox-improvement}, an exact proximal update
adds $1/\beta$ to each success logit. The parameter increment
$(5,0)^\top/\beta$ realizes both changes simultaneously and has norm
$5/\beta$. Thus this model satisfies the local target representation condition of Theorem~\ref{ext:reward}
at every reference when $B\ge5$. Repeating
these exact solves $K$ times, with $p_{K,m}$ denoting the success
probability on prompt $x_m$, gives
\[
 (p_{K,1},p_{K,2})
 =\bigl(\sigma(K/\beta),\sigma(-\Delta+K/\beta)\bigr)
 \longrightarrow(1,1).
\]
\end{proof}
\par
\endgroup

\paragraph{Proximal feedback.}
For a fixed reference with logits $z_{k,m}$, differentiation gives
\[
 \nabla J_k(\theta)=\frac12\sum_{m=1}^2
 u_m p_m(1-p_m)\{1-\beta(z_m-z_{k,m})\}.
\]
Because $u_1^\top u_2$ is negative, improving one prompt can interfere with the other.
The factor in braces vanishes when a prompt's logit increment reaches
$1/\beta$. At the exact proximal solution, both prompts have this
increment, realized simultaneously by the parameter change
$(5,0)^\top/\beta$. Repeated exact solves therefore improve both prompts,
whereas the reward-gradient flow drives the second prompt's success probability
to zero.

\subsection{The Gradient of Proximal Improvement}
\label{app:future-derivation}

FRPO identifies the regularization credit assigned to each generated token. Because an
earlier token affects later prefixes, its policy-gradient coefficient includes the
subsequent regularization costs. We apply this identity with a linear reward utility and
a reference held fixed throughout one proximal round.

For $h_i=(x,a_{<i})$, define
$\rho_i=\pi_\theta(a_i\mid h_i)/\pi_k(a_i\mid h_i)$,
$\ell_i=\log\rho_i$, and
$\psi_i=\nabla_\theta\log\pi_\theta(a_i\mid h_i)$.
These are the reference ratio, its log cost, and the token score, respectively.
The following derivation gives the future-KL identity directly; it is also the
linear-reward specialization of the Future-KL identity in FRPO
\citep[Theorem~4.1, Equations~(4.2) through (4.5)]{yao2026futurekl}.
The trajectory score is $\sum_i\psi_i$. Differentiating the proximal objective
in Equation~\eqref{eq:mech-objective} therefore gives
\[
 \nabla_\theta J_k(\theta)
 =\E_{\pi_\theta}\left[
    \left(\sum_i\psi_i\right)\left(R-\beta\sum_j\ell_j\right)
    -\beta\sum_j\psi_j\right].
\]
The last term has expectation zero because
$\E_{\pi_\theta}[\psi_j\mid h_j]=0$.
For $j<i$, the earlier cost $\ell_j$ is fixed given $h_i$, so
$\E_{\pi_\theta}[\psi_i\ell_j]=0$ for the same reason.
A history-only baseline $b(h_i)$ also cancels against the conditional score.
Removing these terms leaves the current and future costs:
\begin{align}
 \widehat A_i^{\mathrm{KL}}
 &=R-b(h_i)-\beta\sum_{j=i}^{T}\ell_j,
 \label{eq:full-regularization-term}\\
 \nabla_\theta J_k(\theta)
 &=\mathbb E_{\pi_\theta}\sum_{i=1}^{T}
 \psi_i\widehat A_i^{\mathrm{KL}}.
 \label{eq:mech-future-gradient}
\end{align}

We hold the sampled coefficient fixed when differentiating the loss. Its cost sum
includes the current token and all later tokens, so omitting future costs generally
changes the gradient. FRPO distinguishes assigning these regularization
costs from correcting for stale sampling. We address the sampling correction in
Appendix~\ref{app:async-theory}.

\section{Asynchronous updates with token-wise importance weighting}
\label{app:async-theory}

Asynchronous learning reuses responses from older policies. We study a policy-gradient
(PG) update that uses sampled rewards, full future KL credit, and one importance weight
per sampled token, without fitting a value function. We first characterize its
stationary policy, then prove convergence under stronger assumptions on the population
dynamics.

\subsection{Exact Response Correction}
\label{app:async-correction}

We first state the exact importance correction for a complete response,
then study the stationary target of current-token weighting.

\begin{proposition}[Exact response-level importance correction]
\label{thm:exact-response-correction}
Fix the learner, reference, and behavior policies on the finite
generation tree, with the same prompt distribution and positive token
probabilities. Let $\psi_i=\nabla_\theta\log\pi_\theta(a_i\mid h_i)$,
and use the full future KL advantage from
Equation~\eqref{eq:main-single-token-surrogate} with a history-only
baseline. Then
\begin{equation}
 \begin{gathered}
 \frac{\pi_\theta(\tau\mid x)}{\pi_{\mathrm{behav}}(\tau\mid x)}
 =\prod_{i=1}^T
 \left[
 \frac{\pi_k(a_i\mid h_i)}
      {\pi_{\mathrm{behav}}(a_i\mid h_i)}\rho_i
 \right],\\
 \nabla_\theta J_k(\theta)
 =\mathbb E_{\pi_{\mathrm{behav}}}
 \left[
 \frac{\pi_\theta(\tau\mid x)}{\pi_{\mathrm{behav}}(\tau\mid x)}
 \sum_i\psi_i\widehat A_i^{\mathrm{KL}}\right].
 \end{gathered}
 \label{eq:exact-offpolicy}
\end{equation}
\end{proposition}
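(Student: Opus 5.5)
The statement to prove is Proposition~\ref{thm:exact-response-correction}. It has two parts: a factorization of the response likelihood ratio, and an exact off-policy expression for $\nabla_\theta J_k$. The plan is to get the factorization from the autoregressive chain rule, then obtain the gradient formula by a change of measure applied to the on-policy Future-KL identity in Equation~\eqref{eq:mech-future-gradient}.

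For the factorization, both policies factor autoregressively on the finite generation tree:
\[
\pi_\theta(\tau\mid x)=\prod_i\pi_\theta(a_i\mid h_i),\qquad \pi_{\mathrm{behav}}(\tau\mid x)=\prod_i\pi_{\mathrm{behav}}(a_i\mid h_i).
\]
Padding tokens are deterministic, so they contribute factors equal to one. Dividing the two products and inserting $\pi_k(a_i\mid h_i)/\pi_k(a_i\mid h_i)=1$ in each factor gives $w_i=(\pi_k/\pi_{\mathrm{behav}})\rho_i$ at every token, which is the first identity. Positivity of all token probabilities makes every ratio well defined.

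For the gradient identity, start from Equation~\eqref{eq:mech-future-gradient}, which holds with a history-only baseline by the derivation in Appendix~\ref{app:future-derivation}. That formula writes $\nabla_\theta J_k(\theta)$ as the expectation under $\pi_\theta$ of the response-level function
\[
F(x,\tau)=\sum_i\psi_i\widehat A_i^{\mathrm{KL}}.
\]
Here $F$ is a deterministic function of $(x,\tau)$ once $\theta$, $\pi_k$ and $b$ are fixed. The space of responses is finite and $\pi_{\mathrm{behav}}$ has full support, so
\[
\E_{\tau\sim\pi_\theta}F=\sum_\tau\pi_{\mathrm{behav}}(\tau\mid x)\,\frac{\pi_\theta(\tau\mid x)}{\pi_{\mathrm{behav}}(\tau\mid x)}\,F(x,\tau).
\]
Taking the expectation over the shared prompt distribution $\mathcal D$ then gives Equation~\eqref{eq:exact-offpolicy}.

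The only point needing care is that the cancellation steps in the on-policy derivation, which drop the past costs, the $-\beta\sum_j\psi_j$ term and the baseline, are performed under $\pi_\theta$ before the change of measure. We never invoke conditional score cancellation under $\pi_{\mathrm{behav}}$, so the identity is exact rather than approximate. The importance weight must be the full response ratio rather than token-wise ratios, because $F$ depends on the whole trajectory. No real obstacle is expected here; the result is a finite-sum reweighting.
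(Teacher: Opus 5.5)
Your proposal is correct and follows the paper's proof. The first identity comes from the autoregressive factorization with $\pi_k/\pi_k$ inserted at each token, and the second is a change of measure from $\pi_{\mathrm{behav}}$ to $\pi_\theta$ applied to the on-policy Future-KL identity in Equation~\eqref{eq:mech-future-gradient}; as you note, the score cancellations are all done under $\pi_\theta$ before reweighting, which is why the result is exact.
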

\begin{proof}
The autoregressive factorization gives the response likelihood ratio.
Changing measure from the behavior policy to the learner policy then
recovers the full future KL gradient in
Equation~\eqref{eq:mech-future-gradient}.
\end{proof}

\subsection{Token-Wise Importance Weighting and Its Stationary Target}
\label{app:prox-cancellation}
\label{sec:prox-algorithm}

The proof uses the soft-consistency mechanism of PCL and Trust-PCL
\citep{nachum2017pcl,nachum2018trustpcl}. Steps 1 and 2 show that the future-KL advantage
cancels pathwise at the proximal target. Step 4 converts a zero expected score update
into conditions at each action and uses backward induction to identify the unique
stationary policy, even with stale continuations. The soft values are analytical
quantities used only in the proof; the learner needs terminal rewards and full future
credit. Here ``synchronous'' means $\pi_{\mathrm{behav}}=\pi_\theta$ at every update, a
stronger condition than collecting synchronously and then reusing rollouts. In Step 4,
$\widetilde Q(h,a)$ abbreviates the conditional coefficient $\widetilde Q_{\mathrm{behav}}(h,a)$ defined
there.

\begingroup
Algorithm~\ref{alg:proximal-kl} specifies how responses are collected, behavior
probabilities are logged, and the reference is refreshed in the practical update. The
stationary-policy theorem analyzes the full, unmodified population field in
Equation~\eqref{eq:main-single-token-surrogate}, with a detached history-only baseline.
Its assumptions are separate from those of the finite-sample theorem for the linear
logistic model.
\par
\endgroup

\begingroup
\par
\endgroup

\begingroup
\par\medskip\phantomsection\noindent\textbf{Assumptions for Theorem~\ref{thm:prox-recursion}.}
\label{app:prox-assumptions}
\begin{description}
\item[\textnormal{(A1) Finite supported model.}]
The prompt distribution has finite support with positive probability
on every included prompt. Each prompt has a finite token-generation
tree and a fixed terminal reward. Let $T$ be the maximum response length.
After termination, pad to depth $T$ with deterministic absorbing actions,
whose scores and KL costs are zero. The reference $\pi_k$ is fixed
during the round, and both $\pi_k$ and every learner policy under
consideration assign positive probability to every allowed token.
\item[\textnormal{(A2) Covered behavior sampling.}]
Responses come from a fixed behavior policy or a mixture of behavior-policy versions.
For each response, select one version before generation and hold it fixed throughout
that response. Retain the actual version-specific token probabilities used to sample it. Every nonterminal history in the
generation tree has positive behavior occupancy; all such histories
are relevant because the reference has full support. Every contributing
version assigns positive
probability to each allowed next token at that history.
\item[\textnormal{(A3) Representability and local expressiveness.}]
The learner class contains $\pi_k^*$. Collect all relevant independent
next-token log odds into $z(\theta)$. Its Jacobian $Dz(\theta)$ has full
row rank at every parameter considered, so local parameter changes span
every such log-odds direction. Independent tabular logits satisfy this
condition.
\item[\textnormal{(A4) Ideal population update.}]
The field is the exact expectation of Equation~\eqref{eq:prox-field}
under the behavior sampling law, held fixed during differentiation;
the conditional continuation expectations are exact population quantities. The field uses the full, unmodified
future KL sum and an action-independent, history-only baseline held
fixed during differentiation. No finite-horizon truncation, group normalization,
or finite-batch approximation enters
this field.
\end{description}
\par
\endgroup

We index context depth by the next token position, so
$\operatorname{depth}(h_i)=i$.

The token-wise expected update is
\begin{equation}
 M_{\mathrm{behav}}(\theta)=\mathbb E_{\pi_{\mathrm{behav}}}\sum_{i=1}^T
 \frac{\pi_k(a_i\mid h_i)}
      {\pi_{\mathrm{behav}}(a_i\mid h_i)}\rho_i
 \psi_i\widehat A_i^{\mathrm{KL}}.
 \label{eq:prox-field}
\end{equation}

\begingroup
\begin{proof}[Proof of Theorem~\ref{thm:prox-recursion}]
For a history $h$, define the soft partition function, value, and action value as
follows. The expectation in $Z_k(h)$ averages over the entire continuation from $h$
under the reference:
\begin{equation}
 \begin{gathered}
 Z_k(h)=\mathbb E_{\pi_k(\cdot\mid h)}e^{R/\beta},\qquad
 V_k(h)=\beta\log Z_k(h),\\
 Q_k(h,a)=V_k(ha).
 \end{gathered}
 \label{eq:prox-values}
\end{equation}
At a terminal history $h$, $V_k(h)=R$.

\paragraph{\normalfont\textit{Step 1: obtain the token-level proximal policy.}}
The partition function at context $h$ averages the partition functions of
its children:
\begin{equation}
 Z_k(h)=\sum_a\pi_k(a\mid h)Z_k(ha).
 \label{eq:prox-partition-recursion}
\end{equation}
Conditioning the response-level Gibbs law \eqref{eq:appendix-gibbs} on context
$h$ therefore gives
\begin{equation}
 \pi_k^*(a\mid h)=\pi_k(a\mid h)\frac{Z_k(ha)}{Z_k(h)}.
 \label{eq:prox-conditional-gibbs}
\end{equation}
Taking logs and using $V_k=\beta\log Z_k$ gives the regularized Q-function
identity
\begin{equation}
 \log\frac{\pi_k^*(a\mid h)}{\pi_k(a\mid h)}
 =\frac{Q_k(h,a)-V_k(h)}{\beta}.
 \label{eq:prox-soft-policy}
\end{equation}

\paragraph{\normalfont\textit{Step 2: telescope along a sampled suffix.}}
With $\ell_j=\log\rho_j$, define the future-only target and rewrite
the algorithm's advantage as
\begin{equation}
 \widehat Q_i^{\pi_\theta}=R-\beta\sum_{j>i}\ell_j,
 \qquad
 \widehat A_i^{\mathrm{KL}}
 =\widehat Q_i^{\pi_\theta}-\beta\ell_i-b(h_i).
 \label{eq:prox-signal}
\end{equation}
Apply \eqref{eq:prox-soft-policy} to every token after $i$:
\begin{align}
 \beta\sum_{j>i}\log\frac{\pi_k^*(a_j\mid h_j)}{\pi_k(a_j\mid h_j)}
 &=\sum_{j>i}\{V_k(h_{j+1})-V_k(h_j)\}\notag\\
 &=R-V_k(h_{i+1}).
 \label{eq:prox-tail-telescope}
\end{align}
All intermediate values cancel. Since $h_{i+1}=h_i a_i$, the remaining
value is $V_k(h_{i+1})=Q_k(h_i,a_i)$. Substitution into
\eqref{eq:prox-signal} proves the pathwise identities
\begin{equation}
 \widehat Q_i^{\pi_k^*}=Q_k(h_i,a_i),\qquad
 \widehat A_i^{\mathrm{KL}}(\pi_k^*,\tau)=V_k(h_i)-b(h_i).
 \label{eq:prox-pathwise}
\end{equation}
They hold for every supported suffix, before taking any expectation.

\paragraph{\normalfont\textit{Step 3: measure the error away from the solution.}}
To measure the error for an arbitrary learner $\pi_\theta$, subtract the telescoping
identity for $\pi_k^*$ from the learner's future-KL return. This gives
\begin{equation}
 \widehat Q_i^{\pi_\theta}-Q_k(h_i,a_i)
 =-\beta\sum_{j>i}\log\frac{\pi_\theta(a_j\mid h_j)}{\pi_k^*(a_j\mid h_j)}.
 \label{eq:prox-tail-error}
\end{equation}
The target error at token $i$ is exactly the sum of the downstream
log-policy errors, scaled by $-\beta$. This identity also controls the
perturbation term in the convergence proof below.

\paragraph{\normalfont\textit{Step 4: identify the zero of the expected update.}}
Let $d_{\mathrm{behav}}(h)$ include both prompt probability and behavior
prefix occupancy. Write $\psi_\theta(h,a)=\nabla\log\pi_\theta(a\mid h)$.
To include mixed behavior-policy versions, let $v$ denote the version that generated
a response and let $\omega(v\mid h)$ be its conditional probability given the
observed context, before sampling the next action. For
$i=\operatorname{depth}(h)$, define
\begin{equation}
 \widetilde Q_{\mathrm{behav}}(h,a)=\sum_v\omega(v\mid h)
 \mathbb E_{\pi_{\mathrm{behav},v}}
 [\widehat Q_i^{\pi_\theta}\mid h_i=h,a_i=a].
 \label{eq:prox-mixture-q}
\end{equation}
The coefficient averages over versions conditional on $h$, fixes the current action to
$a$, and samples the remaining tokens from that version. Current-action importance
weighting cancels the version's action probability. The resulting mixture weights are
therefore $\omega(v\mid h)$, rather than weights conditional on both the history and
action. A single behavior policy is the special case with one version. Taking the
expectation of \eqref{eq:prox-field} with each response's logged version-specific
probabilities gives
\begin{equation}
 \begin{aligned}
 M_{\mathrm{behav}}(\theta)=\sum_h d_{\mathrm{behav}}(h)\sum_a
 &\pi_\theta(a\mid h)\psi_\theta(h,a)\\
 &\left[\widetilde Q_{\mathrm{behav}}(h,a)-\beta\log\frac{\pi_\theta(a\mid h)}{\pi_k(a\mid h)}-b(h)\right].
 \end{aligned}
 \label{eq:prox-conditional-field}
\end{equation}
In synchronous training, this field equals $\nabla J_k(\theta)$. With stale responses,
it remains the expected update in \eqref{eq:prox-field} but generally differs from the
gradient of $J_k$. The ratio corrects the current action's distribution, while the
history and later tokens still follow the behavior policy. The sequence-wise correction in Equation~\eqref{eq:exact-offpolicy}
recovers the exact gradient.

For tabular logits
at $h$, define $u(h,a)=\widetilde Q_{\mathrm{behav}}(h,a)-\beta\log(\pi_\theta(a\mid h)/\pi_k(a\mid h))$.
Write $p(a)=\pi_\theta(a\mid h)$. The softmax score
$\nabla_{\boldsymbol z_h}\log p(a)=\boldsymbol e_a-\boldsymbol p$
gives the expected logit block
$d_{\mathrm{behav}}(h)
[\operatorname{diag}(\boldsymbol p)-\boldsymbol p\boldsymbol p^\top]
\boldsymbol u$.
For positive $\boldsymbol p$, the quadratic form of the matrix is
$\operatorname{Var}_{a\sim p}[u(h,a)]$; hence its kernel consists exactly
of constant vectors. Positive behavior occupancy therefore makes the block
zero exactly when $u(h,a)$ is constant over actions. A softmax has a
redundant common-shift direction; passing from its full logit vector to
independent log-odds removes only that direction and leaves this zero
condition unchanged.

At the last token,
$\widetilde Q(h,a)=R=Q_k(h,a)$, so this condition is
equivalent to \eqref{eq:prox-soft-policy}; hence the last-token policy must
be $\pi_k^*$. If all later-token policies equal $\pi_k^*$, Step 2 makes
$\widetilde Q(h,a)=Q_k(h,a)$ at the current history, and the same argument forces its
policy to equal $\pi_k^*$. Backward induction proves that the token-wise
field vanishes at every history exactly at the proximal policy.

Conversely, at $\pi_\theta=\pi_k^*$, Equation~\eqref{eq:prox-pathwise}
gives $\widehat A_i^{\mathrm{KL}}=V_k(h_i)-b(h_i)$ on every response.
Conditional on each history and behavior version $v$, let
$w_{i,v}=\pi_\theta(a_i\mid h_i)/\pi_{\mathrm{behav},v}(a_i\mid h_i)$. Then
\[
 \E_{\pi_{\mathrm{behav},v}}
 [w_{i,v}\psi_i\widehat A_i^{\mathrm{KL}}\mid h_i,v]
 =(V_k(h_i)-b(h_i))\E_{\pi_k^*}[\psi_i\mid h_i]=0.
\]
Thus the Gibbs policy also makes the expected field zero.

For a shared-parameter model, collect the independent log odds into a
vector $z(\theta)$ and let $g_z$ be this same field in those coordinates.
The chain rule gives $M_{\mathrm{behav}}(\theta)=Dz(\theta)^\top g_z$.
Full row rank of $Dz(\theta)$ makes its transpose injective, so the
parameter-space field is zero exactly when $g_z=0$. This proves
Theorem~\ref{thm:prox-recursion} under the stated local controllability
assumption.
\end{proof}
\par
\endgroup

\subsection{Population Convergence and Outer Improvement}
\label{app:prox-perturbation}

We next connect the stationary target to improvement across reference refreshes. First,
we prove convergence of the tabular population flow to the frozen-reference target. We
then show that stopping each inner solve at a sufficiently small success-log-odds error
still yields improvement after every refresh.

For the convergence result, we specialize Theorem~\ref{thm:prox-recursion} to
independent softmax logits at every history, continuous-time population
updates, and behavior occupancy uniformly bounded away from zero.  These
conditions yield a direct Lyapunov analysis of the update dynamics.

Fix $\pi_k$ and follow this ordinary population PG field. Let $t$ denote
time, let $d_t(h)$ be the behavior occupancy of history $h$, and define, for
$i=\operatorname{depth}(h)$,
\begin{equation}
 \widetilde Q_t(h,a)=\sum_v\omega_t(v\mid h)
 \mathbb E_{\pi_{\mathrm{behav},t,v}}
 [\widehat Q_i^{\pi_{\theta(t)}}\mid h_i=h,a_i=a].
 \label{eq:prox-stale-q}
\end{equation}

\begingroup
\begin{theorem}[Convergence of the asynchronous population flow]
\label{lem:prox-pg-flow}
Assume (A1) through (A4) at every time $t$, with independent softmax logits at every
nonterminal history and a fixed reference $\pi_k$. Allow the behavior
policy or mixture of behavior-policy versions to vary with time.
Require its mixture weights and conditional continuation probabilities to be
measurable in time, and hold the sampling law fixed when forming each update.
Consider a global absolutely continuous solution of the population logit flow
from finite initial logits. Suppose $d_t(h)\ge d_{\min}>0$ for every nonterminal history
and all $t\ge0$. As a visit probability, $d_t(h)\le1$.
Fix a history $h$ with $m$ allowed actions. Let
$\boldsymbol z_h(t)$ be its learner logit vector, so
$\pi_{\theta(t)}(\cdot\mid h)=\operatorname{softmax}(\boldsymbol z_h(t))$.
Choose target logits $\boldsymbol z_h^*$ with
$\operatorname{softmax}(\boldsymbol z_h^*)=\pi_k^*(\cdot\mid h)$;
these logits are determined up to an additive common shift. Write
$\boldsymbol p_h(t)=\pi_{\theta(t)}(\cdot\mid h)$,
$\boldsymbol p_h^*=\pi_k^*(\cdot\mid h)$, and
\begin{equation*}
 P_h=I-\frac1m\mathbf 1\mathbf 1^\top,
 \qquad
 \boldsymbol r_h(t)=P_h\log\frac{\boldsymbol p_h(t)}{\boldsymbol p_h^*}.
\end{equation*}
The logarithm and division are componentwise. Write
$\boldsymbol\epsilon_h(t)=\widetilde Q_t(h,\cdot)-Q_k(h,\cdot)$ and
$\mathsf C(\boldsymbol p)=\operatorname{diag}(\boldsymbol p)
-\boldsymbol p\boldsymbol p^\top$. Then the ordinary expected PG logit flow
at $h$ is
\begin{equation}
 \dot{\boldsymbol z}_h(t)
 =d_t(h)\mathsf C(\boldsymbol p_h(t))
   \{-\beta\boldsymbol r_h(t)+\boldsymbol\epsilon_h(t)\}.
 \label{eq:prox-pg-flow}
\end{equation}
Consequently, for $\mathcal V_h(t)=\tfrac12\|\boldsymbol r_h(t)\|_2^2$,
\begin{equation}
 \begin{aligned}
 \dot{\mathcal V}_h(t)
 &=-\beta d_t(h)\boldsymbol r_h(t)^\top
   \mathsf C(\boldsymbol p_h(t))\boldsymbol r_h(t)\\
 &+d_t(h)\boldsymbol r_h(t)^\top
   \mathsf C(\boldsymbol p_h(t))\boldsymbol\epsilon_h(t).
 \end{aligned}
 \label{eq:prox-pg-lyapunov}
\end{equation}
The policies satisfy $\boldsymbol p_h(t)\to\boldsymbol p_h^*$ at every
history, including under arbitrary time-varying supported behavior
mixtures satisfying the occupancy bound.
\end{theorem}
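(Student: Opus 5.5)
The proof has three parts: derive the logit flow, differentiate the Lyapunov function, and then run backward induction over depth using the tail-error identity \eqref{eq:prox-tail-error}.

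\textbf{The flow \eqref{eq:prox-pg-flow}.} Start from the conditional form \eqref{eq:prox-conditional-field} with tabular softmax logits. The block for $h$ is $d_t(h)\mathsf C(\boldsymbol p_h)\boldsymbol u_h$, where $u_h(a)=\widetilde Q_t(h,a)-\beta\log(p_h(a)/\pi_k(a\mid h))-b(h)$. The baseline is a constant vector, so $\mathsf C(\boldsymbol p_h)$ annihilates it. By \eqref{eq:prox-soft-policy},
\[
\beta\log\frac{\pi_k^*(a\mid h)}{\pi_k(a\mid h)}=Q_k(h,a)-V_k(h).
\]
Hence $\boldsymbol u_h$ equals $-\beta\log(\boldsymbol p_h/\boldsymbol p_h^*)+\boldsymbol\epsilon_h$ plus a constant vector. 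Replacing the log-ratio by its centered version $\boldsymbol r_h$ changes it only by a multiple of $\mathbf 1$, which $\mathsf C$ also kills. This gives \eqref{eq:prox-pg-flow}.

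\textbf{The Lyapunov identity \eqref{eq:prox-pg-lyapunov}.} Write $\log\boldsymbol p_h=\boldsymbol z_h-\log\!\big(\sum e^{z}\big)\mathbf 1$. Then $\boldsymbol r_h=P_h(\boldsymbol z_h-\boldsymbol z_h^*)$, so $\dot{\boldsymbol r}_h=P_h\dot{\boldsymbol z}_h$ almost everywhere, by absolute continuity. We also have $\boldsymbol r_h^\top P_h=\boldsymbol r_h^\top$. Substituting the flow gives \eqref{eq:prox-pg-lyapunov}.

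\textbf{Convergence, by backward induction on depth.} Two coercivity facts drive the argument.

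First, $\boldsymbol r^\top\mathsf C(\boldsymbol p)\boldsymbol r=\Var_{a\sim p}[r(a)]$. For $\boldsymbol r\perp\mathbf 1$ this is at least $p_{\min}\|\boldsymbol r\|^2/m$ or so, since $\sum_a p(a)(r(a)-c)^2\ge p_{\min}\|\boldsymbol r-c\mathbf 1\|^2\ge p_{\min}\|\boldsymbol r\|^2$.

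Second, $\mathcal V_h$ bounds the logit gap, so $p_{\min}$ stays bounded below on sublevel sets of $\mathcal V_h$. Concretely, $\|\boldsymbol r_h\|$ bounds the spread of $\log(\boldsymbol p_h/\boldsymbol p_h^*)$, and hence bounds $\log p_h(a)$ from below, because $\boldsymbol p_h^*$ has full support.

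\emph{Base case.} At a last-token history, $\widetilde Q=R=Q_k$, so $\boldsymbol\epsilon_h\equiv0$. Then $\mathcal V_h$ is nonincreasing, $p_{\min}$ is bounded below uniformly in time, and
\[
\dot{\mathcal V}_h\le-\beta d_{\min}c\,\mathcal V_h .
\]
This yields exponential convergence.

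\emph{Inductive step.} Assume every history deeper than $h$ has $\boldsymbol p\to\boldsymbol p^*$. By \eqref{eq:prox-tail-error}, $\boldsymbol\epsilon_h(t)$ is an average, under a time-varying behavior law, of $-\beta\sum_{j>i}\log(\pi_\theta/\pi_k^*)$ along downstream paths. The averaging weights are probabilities, so
\[
\|\boldsymbol\epsilon_h(t)\|_\infty\le\beta\sum_{h'\text{ deeper}}\max_a\Big|\log\frac{p_{h'}(a)}{p^*_{h'}(a)}\Big|\to0 ,
\]
uniformly over behavior choices. The tree is finite, so this is a finite sum.

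Now use $\|\mathsf C\|\le1$ and $d_t\le1$ to get
\[
\dot{\mathcal V}_h\le-\beta d_{\min}\Var_{p_h}(r_h)+\|\boldsymbol r_h\|\,\|\boldsymbol\epsilon_h\|.
\]

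\textbf{Main obstacle.} The hard step is getting boundedness first, since the coercivity constant depends on $p_{\min}$, which depends on $\mathcal V_h$. I would argue as follows.
\begin{itemize}
\item The error $\boldsymbol\epsilon_h$ is bounded for all time. Each deeper history converges and starts from finite logits, so its $\mathcal V$ stays bounded.
\item The coercive term has a lower bound that grows with $\|\boldsymbol r\|$. When $\|\boldsymbol r_h\|$ is large, the smallest coordinate of $\boldsymbol r_h$ is very negative. Probability mass therefore concentrates away from it while still giving variance at least about $\|\boldsymbol r\|^2e^{-c\|\boldsymbol r\|}$. I need to check that this beats the linear term $\|\boldsymbol r\|\,\|\boldsymbol\epsilon\|$. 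If it does not, fall back on an explicit computation: the most negative coordinate $r_{\min}$ has
\[
\dot r_{\min}\ge d_{\min}\,p_{a}\,(\beta|r_{\min}|-2\|\boldsymbol\epsilon\|)-\dots ,
\]
which is positive for large $|r_{\min}|$. That gives a bounded invariant region directly.
\end{itemize}

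Once $\mathcal V_h$ is bounded, $p_{\min}\ge\underline p>0$ and
\[
\dot{\mathcal V}_h\le-2\kappa\,\mathcal V_h+\sqrt{2\mathcal V_h}\,\eta(t),\qquad \eta(t)\to0 .
\]
A comparison argument on $\sqrt{\mathcal V_h}$ gives
\[
\tfrac{d}{dt}\sqrt{\mathcal V_h}\le-\kappa\sqrt{\mathcal V_h}+\eta(t)/\sqrt2 ,
\]
so $\mathcal V_h\to0$. That means $\boldsymbol r_h\to0$ and hence $\boldsymbol p_h\to\boldsymbol p_h^*$, which completes the induction up to the root.
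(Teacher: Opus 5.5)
Your flow and Lyapunov derivations, your base case, and your final coercivity step all match the paper's. The two routes split at the step you flagged, boundedness of $\boldsymbol r_h$ in the inductive step.

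\emph{The paper's route.} The paper carries \emph{exponential} convergence through the backward induction. This makes $\|\boldsymbol\epsilon_h(t)\|_\infty\le\beta\sum_{j>i}\operatorname{osc}_j(t)\le C_ie^{-c_it}$ integrable. Dropping the coercive term then gives $\frac{d}{dt}\sqrt{\|\boldsymbol r_h\|_2^2+\nu}\le d_t(h)\|\mathsf C(\boldsymbol p_h)\|_{\mathrm{op}}\|\boldsymbol\epsilon_h\|_2$, so boundedness follows in one line. Young's inequality and Gronwall then give exponential decay at depth $i$, which feeds the next level.

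\emph{Your route.} You assume only convergence at deeper depths. Your forcing is therefore bounded and vanishing but not necessarily integrable, so you need a different boundedness argument. Your first bullet does not supply it: a variance lower bound of order $\|\boldsymbol r\|^2e^{-c\|\boldsymbol r\|}$ tends to zero and cannot dominate $\|\boldsymbol r\|\,\|\boldsymbol\epsilon\|$.

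The minimum-coordinate fallback does work once made exact.
\begin{itemize}
\item Because $\dot{\boldsymbol z}_h$ sums to zero, $\dot r_a=d_t(h)\,p_a[\beta(\E_pr-r_a)+\epsilon_a-\E_p\epsilon]$, with no further terms.
\item For a minimizing coordinate, exponential tilting ($p\propto p^*e^{r}$) gives $\E_pr\ge\E_{p^*}r$. Together with $\sum_ar_a=0$ this yields $\E_pr-r_{\min}\ge m\,p^*_{\min}|r_{\min}|$.
\item So the coefficient is $\beta\,m\,p^*_{\min}$, not the $\beta|r_{\min}|$ you wrote, because $\E_pr$ can be negative.
\item Hence $\dot r_{\min}>0$ almost everywhere whenever $|r_{\min}|>2\sup_t\|\boldsymbol\epsilon_h\|_\infty/(\beta m\,p^*_{\min})$. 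So $r_{\min}$ is bounded below, and $\max_ar_a\le(m-1)|r_{\min}|$ then bounds $\boldsymbol r_h$.
\end{itemize}
Your last step needs only $\eta(t)\to0$. At $\mathcal V_h=0$, work with $\sqrt{\mathcal V_h+\nu}$ or apply Young's inequality to $\mathcal V_h$ directly.

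\emph{What each route buys.} The paper's route gives exponential rates at every depth and a trivial boundedness step. It pays for this by needing integrable forcing, and hence the stronger inductive hypothesis. Yours gives only $\boldsymbol p_h\to\boldsymbol p_h^*$, with no rate. In exchange it tolerates merely bounded, slowly vanishing forcing and a weaker inductive hypothesis. The cost is a nonsmooth argument about the minimum coordinate: absolute continuity of $\min_ar_a$ and a Danskin-type identification of its derivative.
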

\par
\endgroup

\begingroup
\begin{proof}
For a prescribed measurable behavior law, interpret the flow as an absolutely
continuous solution satisfying the differential equation almost everywhere in time
(a Carath\'eodory solution). On the finite response trees, the vector field is
measurable in time and locally Lipschitz in the logits, uniformly over behavior laws.
Log-softmax terms grow at most linearly in the logits, and probabilities and scores
are bounded. Thus the vector field has at most linear growth, which ensures global
existence from finite initial logits. The derivative identities and inequalities
below hold almost everywhere; their integrated forms follow from absolute continuity.
For a behavior law chosen from past learner policies, the convergence assertion
applies to any global absolutely continuous solution satisfying the assumptions,
conditional on existence of such a solution.

Let $\boldsymbol e_a$ be the one-hot vector for action $a$. The softmax
identity $\nabla_{\boldsymbol z_h}\log p_h(a)=\boldsymbol e_a-\boldsymbol p_h$
and \eqref{eq:prox-conditional-field} give the ordinary PG logit block
\begin{equation*}
 d_t(h)\mathsf C(\boldsymbol p_h)
 \left\{\widetilde Q_t(h,\cdot)-\beta\log
 \frac{\boldsymbol p_h}{\pi_k(\cdot\mid h)}\right\};
\end{equation*}
the action-independent baseline vanishes because
$\mathsf C(\boldsymbol p_h)\mathbf 1=\boldsymbol 0$.
The proximal identity \eqref{eq:prox-soft-policy} implies
\begin{equation*}
 Q_k(h,\cdot)-\beta\log
 \frac{\boldsymbol p_h^*}{\pi_k(\cdot\mid h)}=V_k(h)\mathbf 1.
\end{equation*}
Subtracting this identity inside the PG block and again removing constants
with $\mathsf C(\boldsymbol p_h)\mathbf 1=\boldsymbol 0$ proves
\eqref{eq:prox-pg-flow}. Finally,
$\boldsymbol r_h=P_h(\boldsymbol z_h-\boldsymbol z_h^*)$ and the right-hand
side of \eqref{eq:prox-pg-flow} sums to zero, so
$\dot{\boldsymbol r}_h=\dot{\boldsymbol z}_h$. Taking the inner product with
$\boldsymbol r_h$ proves \eqref{eq:prox-pg-lyapunov}.

\paragraph{Backward convergence of ordinary PG.}
At the last token position the future cost sum is empty, so
$\widehat Q_T^{\pi_{\theta(t)}}=R=Q_k(h,a)$ for every learner policy.
Thus $\boldsymbol\epsilon_h(t)=\boldsymbol 0$. Equation~\eqref{eq:prox-pg-lyapunov}
therefore makes $\mathcal V_h(t)$ nonincreasing. The centered log ratios remain
bounded, so normalization and the positive support of $\boldsymbol p_h^*$
keep every component of $\boldsymbol p_h(t)$ uniformly away from zero.
Consequently, $\lambda_h:=\inf_{t\ge0}\min_a p_h(t,a)>0$. For centered
$\boldsymbol r_h$, the weighted-variance identity gives
\begin{equation*}
 \boldsymbol r_h^\top \mathsf C(\boldsymbol p_h)\boldsymbol r_h
 =\operatorname{Var}_{a\sim\boldsymbol p_h}[r_h(a)]
 \ge\lambda_h\|\boldsymbol r_h\|_2^2.
\end{equation*}
Thus $\dot{\mathcal V}_h\le-2\beta d_{\min}\lambda_h \mathcal V_h$, and the last-token
policies converge exponentially to $\boldsymbol p_h^*$.

Proceed backward by induction. Define
\begin{equation*}
 \operatorname{osc}_j(t)=\max_{h:\,\operatorname{depth}(h)=j}
 \operatorname{osc}_a\log
 \frac{\pi_{\theta(t)}(a\mid h)}{\pi_k^*(a\mid h)}.
\end{equation*}
Here $\operatorname{osc}_a f(a)=\max_a f(a)-\min_a f(a)$ is the
range across allowed actions.
Because both policies are normalized, their log probability ratio either
vanishes for every action or takes both nonpositive and nonnegative values.
Its absolute value is therefore bounded by its oscillation $\operatorname{osc}_j(t)$.
If all depths after $i$ converge exponentially, the tail identity
\eqref{eq:prox-tail-error} gives, uniformly over histories at depth $i$,
\begin{equation*}
 \|\boldsymbol\epsilon_h(t)\|_\infty
 \le\beta\sum_{j>i}\operatorname{osc}_j(t)
 \le C_i e^{-c_i t}
\end{equation*}
for some $C_i,c_i>0$. To handle $\boldsymbol r_h=\boldsymbol 0$ without a
separate case, fix $\nu>0$. The flow equation and the negative
semidefiniteness of its first term imply
\begin{equation*}
 \frac{d}{dt}\sqrt{\|\boldsymbol r_h(t)\|_2^2+\nu}
 \le d_t(h)\|\mathsf C(\boldsymbol p_h(t))\|_{\mathrm{op}}
       \|\boldsymbol\epsilon_h(t)\|_2.
\end{equation*}
The right-hand side is integrable. Integrating and then letting
$\nu\downarrow0$ shows that $\boldsymbol r_h(t)$ remains bounded, so
$\boldsymbol p_h(t)$ again remains uniformly in the interior. Applying
Young's inequality to \eqref{eq:prox-pg-lyapunov} then gives constants
$c_h,C_h>0$ such that
\begin{equation*}
 \dot{\mathcal V}_h(t)\le-c_h\mathcal V_h(t)+C_h\|\boldsymbol\epsilon_h(t)\|_2^2.
\end{equation*}
The forcing term decays exponentially, so Gronwall's inequality yields
$\mathcal V_h(t)\to0$ exponentially. Backward induction proves convergence to
$\pi_k^*$ at every history.
\end{proof}
\par
\endgroup

\begin{corollary}[Reference refresh with approximate inner solves]
\label{cor:prox-async-outer}
Assume a finite prompt set, binary reward, and a full-support initial
reference with $0<p_0(x)<1$ for every prompt, where
$p_k(x)=\Pr_{\tau\sim\pi_k(\cdot\mid x)}[R(x,\tau)=1]$.
Fix $\beta>0$ and $\varepsilon_{\mathrm{logit}}\in(0,1/\beta)$.
At round $k$, let $\pi_k^*$ maximize the proximal objective with
reference $\pi_k$, write
$p_k^*(x)=\Pr_{\tau\sim\pi_k^*(\cdot\mid x)}[R(x,\tau)=1]$,
and refresh the reference to a full-support
approximation $\pi_{k+1}$ satisfying, for every prompt,
\begin{equation}
 \left|\operatorname{logit}p_{k+1}(x)
 -\operatorname{logit}p_k^*(x)\right|\le\varepsilon_{\mathrm{logit}}.
 \label{eq:prox-async-stopping}
\end{equation}
Then, with $\operatorname{logit}(p)=\log(p/(1-p))$ and
$\sigma(u)=(1+e^{-u})^{-1}$,
\[
 p_k(x)\ge\sigma\!\left(\operatorname{logit}p_0(x)
       +k(1/\beta-\varepsilon_{\mathrm{logit}})\right)
 \longrightarrow 1.
\]
Under the assumptions of Theorem~\ref{lem:prox-pg-flow} in each round,
its inner flow reaches the required tolerance in finite time.
\end{corollary}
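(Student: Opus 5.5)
The plan works entirely in success log-odds, prompt by prompt. Fix a prompt $x$ and write $u_k=\operatorname{logit}p_k(x)$. Since $0<p_0(x)<1$ and each $\pi_{k}$ has full support, we get $0<p_k(x)<1$ at every round by induction: a full-support policy puts positive mass on at least one correct and one incorrect response, because $p_0$ being strictly between zero and one means both exist in the finite tree. Hence every $u_k$ is finite.

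Proposition~\ref{thm:prox-improvement} then gives the exact identity $\operatorname{logit}p_k^*(x)=u_k+1/\beta$. Combining it with the stopping rule \eqref{eq:prox-async-stopping} yields
\[
 u_{k+1}\ge \operatorname{logit}p_k^*(x)-\varepsilon_{\mathrm{logit}}=u_k+1/\beta-\varepsilon_{\mathrm{logit}} .
\]
Iterating, we get $u_k\ge u_0+k(1/\beta-\varepsilon_{\mathrm{logit}})$. Since $\sigma$ is increasing, $p_k(x)=\sigma(u_k)$ is bounded below as claimed. Because $\varepsilon_{\mathrm{logit}}<1/\beta$, the increment is strictly positive, and the bound tends to one. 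The prompt set is finite, so all of this holds uniformly over prompts.

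For the finite-time claim, apply Theorem~\ref{lem:prox-pg-flow} within round $k$. The flow gives $\boldsymbol p_h(t)\to\pi_k^*(\cdot\mid h)$ at every history of the finite tree. The response-level probability $\pi_{\theta(t)}(\tau\mid x)$ is a finite product of these token probabilities, so it converges to $\pi_k^*(\tau\mid x)$. Summing over the finitely many correct responses then gives $p_{\theta(t)}(x)\to p_k^*(x)\in(0,1)$. Since logit is continuous on $(0,1)$, the logit of the success probability converges to $\operatorname{logit}p_k^*(x)$.

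Convergence therefore gives a finite time at which the tolerance $\varepsilon_{\mathrm{logit}}>0$ is met at every one of the finitely many prompts. Along the flow the learner logits stay finite, so $\pi_{\theta(t)}$ keeps full support and is an admissible $\pi_{k+1}$.

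The only delicate point is checking that $p_{k+1}$ stays strictly inside $(0,1)$ so the logits are finite. Full support of the approximation, together with the existence of both correct and incorrect responses, settles this. I expect no real obstacle beyond this bookkeeping.
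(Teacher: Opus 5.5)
Your proposal is correct and follows the paper's proof. Proposition~\ref{thm:prox-improvement} gives the exact shift $\operatorname{logit}p_k^*=\operatorname{logit}p_k+1/\beta$, the stopping rule then yields a per-round gain of at least $1/\beta-\varepsilon_{\mathrm{logit}}$, and the finite-time claim follows from Theorem~\ref{lem:prox-pg-flow} together with the finite prompt set and continuity of success log-odds. Your extra bookkeeping, which keeps $p_k(x)$ strictly inside $(0,1)$ through full support and passes from historywise to response-level convergence through finite products, makes explicit what the paper leaves implicit.
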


\begin{proof}
By Proposition~\ref{thm:prox-improvement},
\begin{equation*}
 \operatorname{logit}p_k^*(x)
 =\operatorname{logit}p_k(x)+1/\beta.
\end{equation*}
Hence each approximate round increases every success log-odds by at least
$1/\beta-\varepsilon_{\mathrm{logit}}>0$. Summing these increments and
applying $\sigma$ proves the bound and convergence.
Theorem~\ref{lem:prox-pg-flow} gives convergence to $\pi_k^*$ in each
inner flow. The finite prompt set and continuity of success log-odds
then give a finite stopping time satisfying
Equation~\eqref{eq:prox-async-stopping} for all prompts.
\end{proof}

The stopping condition measures each inner solve by its success-log-odds error relative
to the proximal target. Proposition~\ref{thm:prox-improvement} gives the exact step's
improvement, and any fixed tolerance below $1/\beta$ leaves a positive increment at
every outer round. Appendix~\ref{ext:shared} gives a finite-sample reward guarantee using
expected trajectory-KL gaps to measure inner accuracy.

\section{Finite-sample convergence with a linear logistic model}
\label{ext:shared}

This appendix gives the precise version of Theorem~\ref{ext:reward-informal}.
We first specify the model, sampling rules, and projected group update.
Lemmas~\ref{ext:log-moment} through~\ref{ext:inner} establish inner accuracy, and
Lemmas~\ref{ext:src:lem:concave-recursion} through~\ref{ext:src:lem:relative-angular-transfer}
propagate proximal errors across reference refreshes.
Theorem~\ref{ext:reward} then gives the complete reward guarantee and budgets.

\paragraph{Token ratios and their second-moment budget.}
Let $T$ bound the response length and $h_i=(x,a_{<i})$ be the history before
token $i$. At each training update, write
$p_i=\pi_\theta(\cdot\mid h_i)$ for the learner token distribution,
$q_i=\pi_{\mathrm{behav}}(\cdot\mid h_i)$ for the behavior distribution, and
$w_i=p_i/q_i$. Throughout this appendix, $V_{\log}\ge0$ denotes a deterministic
upper bound satisfying
\begin{equation}\label{ext:log-budget-definition}
 \operatorname*{ess\,sup}_{\text{rounds, updates},\,x,\tau}\sum_{i=1}^T
 \E_{a\sim p_i}[(\log w_i)^2]\le V_{\log}.
\end{equation}
Here $\tau=(a_1,\ldots,a_T)$ ranges over supported responses and
$h_i=(x,a_{<i})$ follows that same response path. The essential supremum
includes training randomness. For a fixed pair of policies and prompt, it is
only over response paths. Thus the bound controls the sum of predictable
second moments along a path, rather than summing separate worst-case histories.

\paragraph{Linear logistic model and training rounds.}
For the finite-sample analysis, we use a linear logistic model on finite prompt
and response trees, with deterministic binary rewards.
Let $T$ bound response length and $h_i=(x,a_{<i})$ denote the entire
prefix before token $i$. Short responses can be padded with a
deterministic absorbing action whose score and KL cost are zero.
The response horizon $T\ge1$ is distinct from the practical credit window $H$.
Let $d$ denote the parameter dimension and let
$\phi(h,a)\in\mathbb R^d$ be a fixed feature vector describing
candidate token $a$ at prefix $h$, with $\|\phi(h,a)\|_2\le1$.
In the linear logistic model, the token logit is $\theta^\top\phi(h,a)$,
and softmax converts these logits into probabilities:
\begin{equation}
 \pi_\theta(a\mid h)=
 \frac{\exp(\theta^\top\phi(h,a))}
      {\sum_{a'}\exp(\theta^\top\phi(h,a'))},
 \qquad \theta\in\mathbb R^d.
 \label{ext:head}
\end{equation}
The same vector $\theta$ controls all histories and positions. Features may depend on
the full prefix, but remain fixed throughout training. One example is fine-tuning a
Transformer's linear output layer while freezing its hidden representations. The
analysis also applies to a first-order approximation of full-layer fine-tuning around a
fixed parameter vector: the features are the logit gradients at that vector, and the
logits may include a fixed baseline term. This gives a neural tangent kernel
interpretation of the linearized policy. Applying the result to the nonlinear
Transformer requires additional control of the approximation error.

Each round $k$ holds the reference $\pi_k=\pi_{\theta_k}$ fixed while updating the
learner toward its Gibbs target. The behavior policy supplies the responses used for
these inner updates.

\paragraph{Group update for the convergence analysis.}
The analyzed variant of Algorithm~\ref{alg:proximal-kl} uses full KL credit and
projected gradient steps. Each inner update uses a fresh group of $G\ge2$ responses
under the \hyperref[ext:sampling]{sampling rules below}. This replaces batch collection and
minibatch selection in lines~\ref{algline:collect} and~\ref{algline:minibatch}. With the
full future window, we also replace the normalized advantages and loss in
lines~\ref{algline:advantage}, \ref{algline:credit}, and~\ref{algline:loss} by the full
residual and centered group estimator below.
Let $B\ge1$ be the representation bound in the target representation condition
below, and set $\beta=16B\sqrt T$ and $r_{\mathrm{proj}}=B/\beta$.
Use step size $\eta=[2048\beta T(1+T/G)]^{-1}$.
Let $\Pi_k$ be Euclidean projection onto the parameter ball
$\{\theta:\|\theta-\theta_k\|_2\le r_{\mathrm{proj}}\}$, centered at the frozen reference.
Projection leaves a proposed step unchanged inside the ball and rescales
its displacement from $\theta_k$ to length $r_{\mathrm{proj}}$ otherwise:
\begin{align}
 U_m&=R_m-\beta\sum_{i=1}^T
       \ell_{m,i},\notag\\
 \Psi_m&=\sum_{i=1}^T w_{m,i}
              \nabla_\theta\log\pi_\theta(a_{m,i}\mid h_{m,i}),\qquad
 \bar U=\frac1G\sum_{l=1}^G U_l,\notag\\
 \widehat g_j&=\frac1{G-1}\sum_{m=1}^G\Psi_m
              (U_m-\bar U),\qquad
 \vartheta_{j+1}=\Pi_k(\vartheta_j+\eta\widehat g_j).
 \label{ext:update}
\end{align}
Here $m=1,\ldots,G$ indexes responses and $i=1,\ldots,T$ indexes
tokens, with $h_{m,i}=(x,a_{m,<i})$. The reward of response $m$ is
$R_m=R(x,\tau_m)$, and $\vartheta_j$ is the learner parameter after $j$
inner updates, starting from $\vartheta_0=\theta_k$. At the current learner
$\theta=\vartheta_j$, the token weight and reference log ratio are
\[
 w_{m,i}=\frac{\pi_\theta(a_{m,i}\mid h_{m,i})}
 {\pi_{\mathrm{behav}}(a_{m,i}\mid h_{m,i})},\qquad
 \ell_{m,i}=\log\frac{\pi_\theta(a_{m,i}\mid h_{m,i})}
 {\pi_k(a_{m,i}\mid h_{m,i})}.
\]
The denominator of $w_{m,i}$ is the logged probability under the single
behavior policy selected for the entire group; this policy generates all $G$ responses.

\paragraph{Sampling for the analyzed update.}\label{ext:sampling}
Before each inner update, select a supported behavior policy using only information
available before drawing the group. Independently of the past, draw a fresh prompt
$x\sim\mathcal D$. Conditional on this prompt and the selected behavior policy,
draw $G\ge2$ independent responses and log their actual token probabilities.
Group admission is independent of realized actions, rewards, and lengths;
consumed responses are not reused.

The normalization by $G-1$ corrects the finite-group scaling;
using $1/G$ instead gives the same parameter update after multiplying
the step size by $G/(G-1)$.
At line~\ref{algline:update}, replace Adam by the projected gradient
step in Equation~\eqref{ext:update}. Algorithm~\ref{alg:proximal-kl}
as implemented does not perform this projection; the finite-sample
theorem concerns the projected group variant specified here.

At line~\ref{algline:refresh}, select an independent uniform pre-update
output $\theta_{k+1}=\vartheta_I$,
$I\sim\operatorname{Unif}\{0,\ldots,L-1\}$, where $L$ is the number of inner
updates in round $k$. The coefficients are detached as in Algorithm~\ref{alg:proximal-kl}.
Earlier costs in the full residual cancel against each weighted token
score in expectation, giving the full-tail population field.

\paragraph{Target representation and bounded iterates.}\label{ext:representation-condition}
Assume some $\theta_k^*$ with $\|\theta_k^*-\theta_k\|_2\le r_{\mathrm{proj}}$
exactly represents the Gibbs policy at every legal history.
Choose $\theta_k^*$ as the exact representing parameter closest to $\theta_k$.
For fixed features on the finite trees, exact representation imposes linear
constraints on token log odds, so the set of representers is a nonempty closed
affine set with a fixed constraint matrix. The target log odds depend continuously
on $\theta_k$, so the closest representer is unique and depends continuously,
and hence measurably, on $\theta_k$.
The assumed existence of a representer in the ball ensures that this choice
also lies in the ball.
Starting at $\vartheta_0=\theta_k$, projection ensures that every
inner iterate satisfies
\begin{equation}
 \|\vartheta_j-\theta_k\|_2\le r_{\mathrm{proj}}=B/\beta,
 \qquad j=0,\ldots,L.
 \label{ext:local-iterates}
\end{equation}
Thus projection enforces the iterate bound, and bounded features turn it into the
uniform token bound $|\log(\pi_{\vartheta_j}/\pi_k)|\le2r_{\mathrm{proj}}$. The estimator retains the
full, unmodified KL costs. For binary rewards, the Gibbs target also stays near the
reference: $|\log(\pi_k^*/\pi_k)|\le1/\beta$.

With the choice $\beta=16B\sqrt T$ in Theorem~\ref{ext:reward},
the radius in Equation~\eqref{ext:local-iterates} is $1/(16\sqrt T)$. If exact Gibbs targets admit parameter representations
whose displacement from the reference tends uniformly to zero as
$\beta$ increases, a sufficiently large $B$ places them inside this ball.
Appendix~\ref{ext:representation-examples} gives tabular and shared-feature
examples and the corresponding condition for fixed Transformer tangent
features. Approximate representation introduces additional error terms
(Remark~\ref{ext:approximate-representation}); application to a nonlinear
Transformer also requires control of linearization and gradient errors.

\paragraph{Relation to representation assumptions in prior work.}
Compatible function approximation links log-linear policy updates to
value-function approximation in natural policy-gradient analyses
\citep{agarwal2021pg} and to inexact policy mirror descent
\citep{yuan2023loglinear}.
Closure of the policy class under policy improvement is another related
structural condition \citep{bhandari2024global}.
Our assumption requires exact representation of each reference-relative
Gibbs target with parameter displacement at most $B/\beta$.
Let $Q_k$ be the reference's soft action value in
Equation~\eqref{eq:prox-values}. A sufficient condition is that some
history-dependent offset $c_k(h)$ and vector $v_k\in\mathbb R^d$ with
$\|v_k\|_2\le B$ satisfy $Q_k(h,a)-c_k(h)=v_k^\top\phi(h,a)$ at every legal
history and action; then $\theta_k+v_k/\beta$ represents the target.
\citet[Remark 24]{agarwal2021pg} identify zero transfer bias in linear MDPs;
Appendix~\ref{ext:representation-examples} verifies this representation condition
for tabular policies and a deterministic linear-MDP subclass.

\paragraph{Outer rounds and the final reward guarantee.}
Over outer rounds $k=0,\ldots,K-1$, each inner solve produces an independently selected
output that becomes the reference for the following round. Theorem~\ref{ext:reward} specifies how many inner updates
and outer rounds are sufficient to obtain a reward guarantee for one response from the
final policy.

For a comparator policy $\pi_\star$, write
$p_\star(x)=\Pr_{\pi_\star}[R=1\mid x]$ and
$p_0(x)=\Pr_{\pi_0}[R=1\mid x]$. Assume initial correct-solution coverage
\begin{equation}
 p_\star(x)\le C_\star p_0(x),\qquad C_\star\ge1,
 \quad\text{for every prompt of positive probability}.
 \label{ext:coverage}
\end{equation}
The coverage condition compares total probability mass on correct responses, so it can
hold without bounded likelihood ratios for individual trajectories.
Let $J(\pi)=\E_x\Pr_\pi[R=1\mid x]$.

\paragraph{Best-of-$N$ specialization.}
Let $q_N$ be the oracle distribution obtained by drawing $N$ independent
responses from $\pi_0$ for each prompt and selecting a correct one
when any exists. Its success probability satisfies
\begin{equation}
 p_{q_N}(x)=1-(1-p_0(x))^N\le Np_0(x).
 \label{ext:best-n}
\end{equation}
Thus the theorem applies with $\pi_\star=q_N$ and $C_\star=N$.
The resulting guarantee compares the reward of one response from the trained policy with
that of oracle best-of-$N$ selection at initialization. The theorem accounts separately
for the training budget and the oracle's $N$ draws per prompt.

\paragraph{Uniform behavior mismatch.}
For the supporting lemmas below, assume the token ratios satisfy
$1/2\le w_i\le2$ at every allowed action and history at every actual iterate,
almost surely, and assume $V_{\log}\le1/8$, with $V_{\log}$ as in
Equation~\eqref{ext:log-budget-definition}.
The parameter $B$, local target representation, and these mismatch conditions
are required at every encountered reference. The full reward theorem is stated
after the inner and outer lemmas, as Theorem~\ref{ext:reward}.

At the represented target, every response to a prompt has full residual
$\beta\log Z_k(x)$. Group centering therefore makes every sampled update zero
at the target, a property that controls its sampling second moment.

\subsection{Accuracy of one proximal round}
\label{ext:inner-analysis}
\label{ext:local-proof}

We establish Lemma~\ref{ext:inner} using log-moment and history-transfer
bounds, a drift estimate, and a group-estimator noise bound.

\subsubsection*{Log-moment bounds and history transfer}

We introduce the token errors, then use one conditional log-moment bound
for likelihood-ratio moments and the conditional means needed in the drift
analysis. A single transfer lemma converts these moments into learner-history
expectations.

\paragraph{Policies, expectations, and token errors.}
Fix a reference $\pi_k$, its Gibbs target $\pi_k^*$ from
Equation~\eqref{eq:prox-conditional-gibbs}, and one current learner $\pi_\theta$.
At token position $i$, the history is $h_i=(x,a_{<i})$. Write
\begin{equation}\label{ext:notation-policies}
 p_i(a)=\pi_\theta(a\mid h_i),\qquad
 q_i(a)=\pi_{\mathrm{behav}}(a\mid h_i),\qquad
 p_i^*(a)=\pi_k^*(a\mid h_i),\qquad w_i=\frac{p_i(a_i)}{q_i(a_i)}.
\end{equation}
The expectation $\E_{p_i}$ averages only the next action at the fixed history $h_i$;
$\E_{q_i}$ does the same under the behavior policy. By contrast, $\E_q$ averages a
whole behavior response for a fixed prompt, including its random histories;
$\E_p$ averages a learner response.
All policies have the same finite support, and responses have at most $T$ tokens.
Absorbing padding tokens have probability and importance ratio one and log error zero.

The error relative to the Gibbs target and the cost relative to the reference are
\begin{equation}\label{ext:notation-errors}
 z_i=\log\frac{p_i(a_i)}{p_i^*(a_i)},\qquad
 \ell_i=\log\frac{p_i(a_i)}{\pi_k(a_i\mid h_i)},\qquad
 m_i=\E_{p_i}z_i,\qquad u_i=z_i-m_i.
\end{equation}
These two log ratios serve different purposes: $z_i$ measures error to the target,
whereas $\ell_i$ is the KL cost used by the algorithm. The mean
$m_i=\KL(p_i\Vert p_i^*)$ is a function of the history. The centered error $u_i$
has $\E_{p_i}u_i=0$. Measure its squared size first at one history and then over
behavior histories:
\begin{equation}\label{ext:notation-energy}
 \sigma_i^2=\E_{p_i}u_i^2,\qquad
 v_i=\E_q\sigma_i^2.
\end{equation}
Thus $v_i$ is not a new type of error: it averages the same conditional variance
$\sigma_i^2$ over the histories visited at position $i$.

\paragraph{Centered log error as a directional score.}
For the linear logistic model in Equation~\eqref{ext:head}, suppose the target is
represented exactly by $\theta^*$, so $p_i^*=\pi_{\theta^*}(\cdot\mid h_i)$.
Define the parameter error $\Delta=\theta-\theta^*$ and the token score
$\psi_i=\nabla_\theta\log p_i(a_i)$. At a fixed history let
$Z_i(\theta)=\sum_a\exp(\theta^\top\phi(h_i,a))$ be the softmax normalizer.
Taking the log ratio of the two softmax policies gives
\[
 z_i=\Delta^\top\phi(h_i,a_i)-\log Z_i(\theta)+\log Z_i(\theta^*).
\]
The last two terms do not depend on the sampled action. Subtracting the
$p_i$-mean therefore removes them, while differentiating the log softmax gives
\[
 u_i=\Delta^\top\bigl(\phi(h_i,a_i)-\E_{p_i}\phi(h_i,a)\bigr),\qquad
 \psi_i=\phi(h_i,a_i)-\E_{p_i}\phi(h_i,a).
\]
Consequently the centered log error is exactly the score in the parameter-error
direction. Summing the importance-weighted token scores defines the response score:
\begin{equation}\label{ext:notation-scores}
 u_i=\Delta^\top\psi_i,\qquad
 \Psi=\sum_iw_i\psi_i,\qquad
 \Delta^\top\Psi=\sum_iw_i u_i,\qquad
 \sigma_i^2=\Var_{p_i}[\Delta^\top\phi(h_i,a)].
\end{equation}
Finally, normalization of $p_i^*$ gives
$1=\E_{p_i}e^{-z_i}=e^{-m_i}\E_{p_i}e^{-u_i}$, hence
\begin{equation}\label{ext:notation-log-mean}
 m_i=\log\E_{p_i}e^{-u_i}.
\end{equation}
The following scalar bounds use only these centering and normalization identities;
the linear model is needed when connecting them to parameter updates.

\begin{lemma}[Conditional log moments and sequential composition]
\label{ext:log-moment}
For a centered random variable $X\le b$ with $b>0$, the Bernstein log-MGF bound is
\begin{equation}\label{ext:bernstein-centered}
 \log\E e^{tX}\le\frac{e^{bt}-1-bt}{b^2}\E X^2
 \le\frac{t^2\E X^2}{2(1-bt/3)},\qquad 0\le bt<3,
\end{equation}
where the first inequality holds for every $t\ge0$.
Now let $W_i=\prod_{j<i}w_j$ be a prefix likelihood ratio and assume
$w_i\ge1/2$. Use the pathwise budget $V_{\log}$ from
Equation~\eqref{ext:log-budget-definition}, and let $b>0$ uniformly bound
$\log w_i-\E_{p_i}\log w_i$ from above. For every $t\ge0$,
\begin{equation}\label{ext:bernstein-log-moment}
 \log\E_qW_i^{1+t}
 \le V_{\log}\left[t+\frac{e^{bt}-1-bt}{b^2}\right].
\end{equation}
For $bt<3$, the right side is at most
$V_{\log}[t+t^2/(2(1-bt/3))]$.
In particular, for $0<\delta\le1$, if $|\log w_i|\le3\delta/2$,
\begin{equation}\label{ext:bernstein-prefix-specialization}
 \delta\log\E_qW_i^{1+1/\delta}\le(1+\delta^{-1})V_{\log}.
\end{equation}
These bounds hold for every prefix and the full-response ratio $W_{T+1}$.
\end{lemma}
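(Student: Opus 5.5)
The plan has three parts: prove the scalar Bernstein bound directly, get the prefix moment bound by a change of measure to the learner plus a sequential supermartingale argument, and then plug in constants for the specialization.

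\emph{Scalar Bernstein bound.} Let $f(u)=(e^u-1-u)/u^2$, with $f(0)=1/2$. This function is nondecreasing on $\mathbb R$. Since $tX\le bt$, we have
\[
 e^{tX}\le 1+tX+t^2X^2 f(bt)=1+tX+X^2\frac{e^{bt}-1-bt}{b^2}.
\]
Taking expectations, using $\E X=0$, and applying $\log(1+y)\le y$ gives the first inequality in \eqref{ext:bernstein-centered} for every $t\ge0$. For the second, expand $e^{u}-1-u=\sum_{k\ge2}u^k/k!$. Because $k!\ge 2\cdot3^{k-2}$, summing the geometric series gives $e^u-1-u\le \frac{u^2}{2}\cdot\frac{1}{1-u/3}$ for $0\le u<3$.

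\emph{Prefix moments.} Changing measure along the prefix gives $\E_qW_i^{1+t}=\E_pW_i^{t}=\E_p\exp\bigl(t\sum_{j<i}y_j\bigr)$, where $y_j=\log w_j$. Decompose $y_j=m_j'+X_j$ with $m_j'=\E_{p_j}y_j=\KL(p_j\Vert q_j)$ and $X_j$ centered under $p_j$, so $X_j\le b$ by assumption. Two conditional estimates are needed, both in terms of $s_j=\E_{p_j}y_j^2$.

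First, the mean: $m_j'\le s_j$. Since $\E_{p_j}e^{-y_j}=\E_{q_j}1=1$, and since $w_j\ge1/2$ gives $-y_j\le\log 2$, Taylor's theorem yields $e^{-y}\le 1-y+\tfrac{y^2}{2}e^{\log 2}=1-y+y^2$. Taking expectations gives $1\le 1-m_j'+s_j$.

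Second, the fluctuation: $\E_{p_j}X_j^2\le s_j$, because a variance is at most the second moment.

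Combining these with the scalar bound, each conditional factor satisfies
\[
 \E_{p_j}\bigl[e^{t y_j}\mid h_j\bigr]\le\exp\bigl(s_j\,c(t)\bigr),\qquad c(t)=t+\frac{e^{bt}-1-bt}{b^2}.
\]
Now I would show that $\exp\bigl(t\sum_{j<i}y_j-c(t)\sum_{j<i}s_j\bigr)$ is a supermartingale under the learner law. The $s_j$ are predictable, so conditioning backward from the last token handles each factor in turn. The pathwise budget \eqref{ext:log-budget-definition} gives $\sum_{j<i}s_j\le V_{\log}$ on every path, and this yields \eqref{ext:bernstein-log-moment}.

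I expect this step to be the main obstacle. The budget is a bound on a path-dependent sum of conditional second moments, not on separate per-token worst cases. The difficulty is pulling $e^{c(t)\sum s_j}\le e^{c(t)V_{\log}}$ out of the expectation only after the supermartingale argument, and not token by token. The bound for $bt<3$ then follows from the second form of \eqref{ext:bernstein-centered}. The same argument with $i=T+1$ covers the full response.

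\emph{Specialization.} Assume $|\log w_i|\le3\delta/2$. Then $X_i=y_i-m_i'\le y_i\le 3\delta/2$, because $m_i'\ge0$, so we may take $b=3\delta/2$ (this also gives $w_i\ge e^{-3/2}$). I will keep the earlier hypothesis $w_i\ge1/2$ for the mean estimate. Set $t=1/\delta$, so $bt=3/2$ and
\[
 \frac{e^{bt}-1-bt}{b^2}=\frac{e^{3/2}-5/2}{9/4}\,\frac1{\delta^2}\le\frac{0.89}{\delta^2}.
\]
Hence $\delta\log\E_qW_i^{1+1/\delta}\le V_{\log}(1+0.89/\delta)\le(1+\delta^{-1})V_{\log}$, which is \eqref{ext:bernstein-prefix-specialization}.
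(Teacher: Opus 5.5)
Your proposal is correct and follows the paper's proof: change measure to $\E_pW_i^t$, use normalization and $w_j\ge1/2$ to get $0\le\E_{p_j}\log w_j\le\E_{p_j}(\log w_j)^2$, apply the Bernstein bound to the centered part, and combine a sequential exponential (super)martingale with the pathwise budget $V_{\log}$. The differences are cosmetic. The paper cites the scalar Bernstein bound rather than proving it, and it uses the exact sequential identity with true conditional log-MGFs before bounding their sum pathwise. For the specialization, the paper uses the rational form, which gives exactly $t^2=\delta^{-2}$, instead of your numerical constant $0.89$.
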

\begin{proof}
Equation~\eqref{ext:bernstein-centered} is the conditional log-MGF estimate
of \citet[Lemma~2.20 and Equation~(2.13)]{TZ23-lt}.
At a fixed history, write $Y_i=\log w_i$, $\mu_i=\E_{p_i}Y_i$, and
$\nu_i(h_i)=\E_{p_i}Y_i^2$.
Normalization gives $\E_{p_i}e^{-Y_i}=1$. Since $w_i\ge1/2$,
$e^{-y}-1+y\le y^2$ for $y\ge-\log2$, and hence
\[
 0\le\mu_i=\E_{p_i}[e^{-Y_i}-1+Y_i]\le\nu_i(h_i),
 \qquad \Var_{p_i}Y_i\le\nu_i(h_i).
\]
Let $g_b(t)=t+(e^{bt}-1-bt)/b^2$.
Equation~\eqref{ext:bernstein-centered} gives
$\log\E_{p_i}e^{tY_i}\le g_b(t)\nu_i(h_i)$.
The sequential exponential identity
\citep[Lemma~13.1]{TZ23-lt}, applied under the learner policy, yields
\[
 \E_p\exp\!\left(t\sum_{j<i}Y_j-
        \sum_{j<i}\log\E_{p_j}e^{tY_j}\right)=1.
\]
The pathwise budget implies
$\sum_{j<i}\log\E_{p_j}e^{tY_j}\le g_b(t)V_{\log}$.
Since $\E_qW_i^{1+t}=\E_pW_i^t$, this proves
Equation~\eqref{ext:bernstein-log-moment}. The rational bound follows from
Equation~\eqref{ext:bernstein-centered}. Taking $b=3\delta/2$ and $t=1/\delta$
proves Equation~\eqref{ext:bernstein-prefix-specialization}, because $\mu_i\ge0$.

\end{proof}

\begin{lemma}[Moment bounds and history transfer]
\label{ext:moment-transfer}
Let $W=dP/dQ$ for distributions on the same finite support, let
$0\le f\le M$, and write $a=\E_Qf$. For every real $s\ge2$,
\begin{equation}\label{ext:moment-transfer-bound}
 \E_Pf\le a+(\E_QW^s-1)^{1/s}M^{1/s}a^{1-1/s}.
\end{equation}
If $W\le C$ almost surely, then $\E_Pf\le Ca$.
For prefix histories with $1/2\le w_i\le2$ and pathwise budget
$V_{\log}\le1/8$, the second-moment specialization gives
\begin{equation}\label{ext:token-history-bound}
 \E_q(W_i-1)^2\le e^{5V_{\log}/3}-1\le2V_{\log},\qquad
 \E_{h_i\sim p}f_i\le\E_{h_i\sim q}f_i+
 c_0\sqrt{2V_{\log}\E_{h_i\sim q}f_i}
\end{equation}
for any $c_0>0$ and $0\le f_i\le c_0^2$.
\end{lemma}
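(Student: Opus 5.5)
The plan is to write $\E_Pf=\E_Q[Wf]=a+\E_Q[(W-1)f]$ and bound the correction term by H\"older's inequality with exponents $s$ and $s'=s/(s-1)$:
\[
 \E_Q[(W-1)f]\le\bigl(\E_Q|W-1|^s\bigr)^{1/s}\bigl(\E_Qf^{s'}\bigr)^{1/s'}.
\]
Because $0\le f\le M$, we have $f^{s'}\le M^{s'-1}f$. Hence $(\E_Qf^{s'})^{1/s'}\le M^{(s'-1)/s'}a^{1/s'}=M^{1/s}a^{1-1/s}$.

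It then remains to show $\E_Q|W-1|^s\le\E_QW^s-1$. I would derive this from the pointwise inequality
\[
 |w-1|^s\le w^s-1-s(w-1)\qquad(w\ge0,\ s\ge2).
\]
Taking $Q$-expectations and using $\E_Q(W-1)=0$ gives the claim. The bounded case is immediate: $\E_Pf=\E_Q[Wf]\le C\,\E_Qf$ since $f\ge0$.

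The main obstacle is the pointwise inequality, which I would split at $w=1$.
\begin{itemize}
 \item For $w=1+x$ with $x\ge0$, define $g(x)=(1+x)^s-1-sx-x^s$. Then $g(0)=g'(0)=0$ and $g'(x)=s[(1+x)^{s-1}-1-x^{s-1}]$. This is nonnegative because $y\mapsto y^{s-1}$ is superadditive on $[0,\infty)$ for $s-1\ge1$, namely $(1+x)^{s-1}\ge1+x^{s-1}$.
 \item For $w=1-x$ with $x\in(0,1]$, define $k(x)=(1-x)^s-1+sx-x^s$. Then $k(0)=0$ and $k'(x)=s[1-(1-x)^{s-1}-x^{s-1}]$. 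This is $\ge0$ by the same superadditivity applied to $x$ and $1-x$, which sum to $1$.
\end{itemize}
This proves \eqref{ext:moment-transfer-bound}.

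For \eqref{ext:token-history-bound}, the steps are as follows.
\begin{itemize}
 \item Take $s=2$ and $W=W_i$, the ratio of the learner to behavior prefix laws, so that $\E_q(W_i-1)^2=\E_qW_i^2-1$.
 \item Apply Lemma~\ref{ext:log-moment} with $t=1$. For the upper bound $b$ on $\log w_i-\E_{p_i}\log w_i$, the proof of that lemma gives $\mu_i=\E_{p_i}\log w_i\ge0$, and $w_i\le2$ gives $\log w_i\le\log2$. Hence $b=\log2$ suffices.
 \item Since $b<3$, the rational form gives $\log\E_qW_i^2\le V_{\log}[1+1/(2(1-\log2/3))]\le\tfrac53V_{\log}$. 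This uses $1/(2(1-0.231))\approx0.65\le2/3$.
 \item For $V_{\log}\le1/8$, convexity of $v\mapsto e^{5v/3}-1-2v$ gives $e^{5V_{\log}/3}-1\le2V_{\log}$: the function vanishes at $0$ and is negative at $1/8$ ($e^{5/24}-1\approx0.232<0.25$).
 \item Finally, apply the $s=2$ bound to $f=f_i$ with $M=c_0^2$. This gives $\E_pf_i\le a+\sqrt{2V_{\log}}\,c_0\sqrt a$, which is the stated history-transfer inequality.
\end{itemize}
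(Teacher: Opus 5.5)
Your proposal is correct and follows the paper's proof essentially step for step. The steps match: the pointwise inequality $|w-1|^s\le w^s-1-s(w-1)$, which you verify by superadditivity where the paper only says ``differentiation on either side of $w=1$''; H\"older with $f^{s/(s-1)}\le M^{1/(s-1)}f$; Lemma~\ref{ext:log-moment} with $t=1$ and $b=\log2$ (you also justify this $b$, via $\E_{p_i}\log w_i=\KL(p_i\Vert q_i)\ge0$ and $w_i\le2$, which the paper leaves implicit); the convexity bound on $[0,1/8]$; and finally $s=2$, $M=c_0^2$.
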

\begin{proof}
For $w\ge0$ and $s\ge2$,
$|w-1|^s\le w^s-1-s(w-1)$; differentiation on either side of $w=1$
verifies this inequality. Since $\E_QW=1$,
$\E_Q|W-1|^s\le\E_QW^s-1$.
H\"older applied to $\E_Q[(W-1)f]$, together with
$f^{s/(s-1)}\le M^{1/(s-1)}f$, proves
Equation~\eqref{ext:moment-transfer-bound}. The bounded-ratio case follows
from $\E_Pf=\E_Q[Wf]$.
For the prefix specialization, apply the rational bound in
Lemma~\ref{ext:log-moment} with $t=1$ and $b=\log2$:
\[
 \log\E_qW_i^2\le
 \left(1+\frac1{2(1-\log2/3)}\right)V_{\log}
 \le\frac53V_{\log}.
\]
Normalization gives $\E_q(W_i-1)^2=\E_qW_i^2-1$.
For $0\le v\le1/8$, convexity yields
$e^{5v/3}-1\le8(e^{5/24}-1)v\le2v$.
Taking $s=2$ and $M=c_0^2$ proves the transfer bound.
\end{proof}

\begin{lemma}[Martingale continuation bounds]
\label{ext:martingale-continuation}
Let $(\mathcal F_i)_{i=1}^{T+1}$ be a filtration, with $\mathcal F_i$
representing the history before step $i$. Let $\xi_i,z_i$ be
square-integrable $\mathcal F_{i+1}$-measurable variables,
for $i=1,\ldots,T$. Suppose
$\E[\xi_i\mid\mathcal F_i]=0$, and write
$\mu_i=\E[z_i\mid\mathcal F_i]$.
For nonnegative $\mathcal F_i$-measurable $v_i$ with
$\E\sum_i v_i<\infty$ and constants $D,B_0,\Gamma\ge0$, assume
\[
 \E[\xi_i^2\mid\mathcal F_i]\le Dv_i,\qquad
 \Var(z_i\mid\mathcal F_i)\le B_0v_i,\qquad
 \E\left(\sum_i\mu_i\right)^2\le\Gamma^2\E\sum_iv_i.
\]
Then
\begin{equation}\label{ext:martingale-continuation-bound}
 \left|\E\sum_i\xi_i\sum_{t>i}z_t\right|
 \le\sqrt D\,\Gamma\E\sum_iv_i,\qquad
 \Var\left(\sum_iz_i\right)
 \le2(B_0+\Gamma^2)\E\sum_iv_i.
\end{equation}
The same statements hold conditionally on initial information when all
hypotheses hold under that conditioning.
\end{lemma}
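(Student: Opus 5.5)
The argument has two parts. For the first bound, I swap the order of summation and condition on the history; for the second, I split $\sum_i z_i$ into a martingale part and a predictable part. Write $S_i=\sum_{t>i}z_t$ and decompose each $z_t=\mu_t+\zeta_t$ with $\zeta_t=z_t-\mu_t$. Then $\E[\zeta_t\mid\mathcal F_t]=0$ and $\E[\zeta_t^2\mid\mathcal F_t]=\Var(z_t\mid\mathcal F_t)\le B_0v_t$.

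\textbf{First bound.} Write $\E\sum_i\xi_iS_i=\sum_{t}\E\bigl[z_t\sum_{i<t}\xi_i\bigr]$ and set $N_t=\sum_{i<t}\xi_i$, which is $\mathcal F_t$-measurable. The quantity to bound is $\E\sum_t N_tz_t$.

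The natural move is to replace $z_t$ by $\mu_t$, since $\E[N_t\zeta_t]=\E[N_t\E[\zeta_t\mid\mathcal F_t]]=0$. This leaves $\E\sum_tN_t\mu_t$. A direct Cauchy--Schwarz estimate then only gives $\sqrt{\E\sum_tN_t^2}$ against something, which is not of the stated form.

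Instead I regroup by $\xi_i$: $\E\sum_t N_t\mu_t=\E\sum_i\xi_i\sum_{t>i}\mu_t$. Because $\E[\xi_i\mid\mathcal F_i]=0$, the predictable sum $\sum_{t\le i}\mu_t$, which is $\mathcal F_i$-measurable, can be added inside without changing the expectation. This gives $\E\sum_i\xi_i\,\Sigma$ with $\Sigma=\sum_t\mu_t$. Now $\E[\Sigma\sum_i\xi_i]\le\sqrt{\E\Sigma^2}\sqrt{\E(\sum_i\xi_i)^2}$. By orthogonality of martingale increments, $\E(\sum_i\xi_i)^2=\sum_i\E\xi_i^2\le D\,\E\sum_iv_i$, and the hypothesis gives $\E\Sigma^2\le\Gamma^2\E\sum_iv_i$. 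The product is $\sqrt D\,\Gamma\,\E\sum_iv_i$, as required.

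The step needing care is this add-then-subtract manoeuvre. It requires $\mu_t$ for $t\le i$ to be $\mathcal F_i$-measurable, which holds since $\mu_t$ is $\mathcal F_t$-measurable and $\mathcal F_t\subseteq\mathcal F_i$. It also requires integrability, which follows from square-integrability together with $\E\sum v_i<\infty$.

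\textbf{Second bound.} Write $\sum_iz_i=\sum_i\zeta_i+\Sigma$. Then $\Var(\sum_iz_i)\le\E(\sum_iz_i)^2\le2\E(\sum_i\zeta_i)^2+2\E\Sigma^2$. Orthogonality of the martingale differences $\zeta_i$ gives $\E(\sum_i\zeta_i)^2=\sum_i\E\zeta_i^2\le B_0\E\sum_iv_i$. Combined with the $\Gamma$ hypothesis on $\E\Sigma^2$, this yields $2(B_0+\Gamma^2)\E\sum_iv_i$.

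\textbf{Conditional version.} Every step above uses only the tower property, conditional orthogonality and Cauchy--Schwarz. Each of these holds verbatim under a conditional expectation given initial information, so the same statements hold conditionally whenever all hypotheses do.
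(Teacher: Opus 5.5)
Your proof is correct and matches the paper's argument step for step. You reduce the cross term to $\E[M\Sigma]$ with $M=\sum_i\xi_i$ by replacing each later $z_t$ with $\mu_t$ and adding the $\mathcal F_i$-measurable terms $\sum_{t\le i}\mu_t$, then apply Cauchy--Schwarz with martingale orthogonality, and you get the variance bound from the same martingale-plus-predictable split. One small correction: square-integrability alone makes the conditioned products integrable, so $\E\sum_i v_i<\infty$ is needed only for the bounds to be finite.
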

\begin{proof}
Set $M=\sum_i\xi_i$ and $A=\sum_i\mu_i$. Conditional centering and
martingale orthogonality give $\E M^2\le D\E\sum_i v_i$ and
\[
 \E\sum_i\xi_i\sum_{t>i}z_t=\E[MA].
\]
Indeed, replace each later $z_t$ by its conditional mean; terms with
$t\le i$ in $MA$ vanish by conditioning on $\mathcal F_i$.
Cauchy-Schwarz proves the first bound. The increments $z_i-\mu_i$
are martingale differences, so
$\E(\sum_i(z_i-\mu_i))^2\le B_0\E\sum_i v_i$.
Writing $\sum_i z_i=\sum_i(z_i-\mu_i)+A$ and using
$(x+y)^2\le2x^2+2y^2$ proves the second bound.
\end{proof}

\subsubsection*{Progress toward the proximal target}

The next lemma combines the exact soft-Q identity with control of the
error introduced by behavior-sampled continuation values. Its conditional
mean estimate uses the same Bernstein bound as the history-transfer argument.

\begin{lemma}[Progress and continuation error]
\label{ext:local-q-aggregate}
Use the token errors and scores in
Equations~\eqref{ext:notation-errors} through~\eqref{ext:notation-scores}.
Suppose that, for some $c_0>0$, the span of $\Delta^\top\phi(h_i,\cdot)$ is at most $2c_0$,
$2c_0\le3/16$, and $1/2\le w_i\le2$, and use the pathwise budget
in Equation~\eqref{ext:log-budget-definition}. Set
\[
 \Gamma=\frac{\sqrt{V_{\log}}}{\log2}+\frac8{15}c_0\sqrt T,
 \qquad e_i=-\beta\sum_{t>i}z_t.
\]
Then $\sigma_i\le c_0$, $0\le m_i\le(8/15)\sigma_i^2$, and
\begin{equation}\label{ext:local-mean-sum}
 \E_q\left(\sum_i\E_{q_i}z_i\right)^2\le\Gamma^2\sum_i v_i.
\end{equation}
For the full residual $U$ in Equation~\eqref{ext:update},
\begin{equation}\label{ext:local-q-decomposition}
 -\E_q[(\Delta^\top\Psi)U]
 =\beta\sum_i v_i-\sum_i\E_q[w_i u_i e_i],
 \qquad \E\widehat g=\E_q[\Psi U].
\end{equation}
The continuation error satisfies
\begin{equation}\label{ext:local-q-error-bound}
 \left|\sum_i\E_q[w_i u_i e_i]\right|
 \le\sqrt2\,\Gamma\beta\sum_i v_i.
\end{equation}
All expectations condition on the reference, learner, behavior policy, and prompt.
\end{lemma}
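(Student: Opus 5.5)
The proof goes in four steps, matching the four claims of Lemma~\ref{ext:local-q-aggregate}. Throughout, the prompt and all three policies are held fixed, and the identities in Equations~\eqref{ext:notation-errors}--\eqref{ext:notation-log-mean} are used.

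\emph{Scalar bounds on $\sigma_i$ and $m_i$.} By Equation~\eqref{ext:notation-scores}, $\sigma_i^2=\Var_{p_i}[\Delta^\top\phi(h_i,a)]$. A variable of span at most $2c_0$ has variance at most $c_0^2$ (Popoviciu), so $\sigma_i\le c_0$. For $m_i$, nonnegativity holds because $m_i=\KL(p_i\Vert p_i^*)$. For the upper bound, use $m_i=\log\E_{p_i}e^{-u_i}$ from Equation~\eqref{ext:notation-log-mean}. Apply the Bernstein bound~\eqref{ext:bernstein-centered} to $X=-u_i\le 2c_0=:b$ with $t=1$. Since $b\le 3/16$, we have $1-b/3\ge15/16$, which gives $m_i\le\sigma_i^2/(2\cdot15/16)=(8/15)\sigma_i^2$.

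\emph{The conditional-mean bound~\eqref{ext:local-mean-sum}.} I expect this to be the main obstacle, because it is where the behavior mismatch enters. Write
\[
 \E_{q_i}z_i=m_i+\E_{q_i}u_i,\qquad \E_{q_i}u_i=\E_{p_i}[u_i(w_i^{-1}-1)],
\]
where the second identity uses $\E_{p_i}u_i=0$.

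On $[1/2,2]$ we have $|w^{-1}-1|\le|\log w|/\log2$. To check this, note that $|1-e^{-y}|/|y|$ is monotone in $y$ on $[-\log 2,\log 2]$ and equals $1$ at $y=-\log2$. Cauchy--Schwarz then gives
\[
 |\E_{q_i}z_i|\le m_i+\sigma_i\sqrt{\nu_i}/\log2,\qquad \nu_i=\E_{p_i}(\log w_i)^2.
\]
Now work pathwise:
\begin{itemize}
\item $\sum_i\sigma_i\sqrt{\nu_i}\le(\sum_i\sigma_i^2)^{1/2}V_{\log}^{1/2}$, using the pathwise budget~\eqref{ext:log-budget-definition}.
\item $\sum_im_i\le(8/15)\sum_i\sigma_i^2\le(8/15)c_0\sqrt T(\sum_i\sigma_i^2)^{1/2}$, using $\sum_i\sigma_i^2\le Tc_0^2$.
\end{itemize}
Together these give $|\sum_i\E_{q_i}z_i|\le\Gamma(\sum_i\sigma_i^2)^{1/2}$ pathwise. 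Squaring and taking $\E_q$ yields~\eqref{ext:local-mean-sum}, since $v_i=\E_q\sigma_i^2$.

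\emph{The decomposition~\eqref{ext:local-q-decomposition}.} By the telescoping identity~\eqref{eq:prox-tail-telescope} applied at the root, $R-\beta\sum_j\log(p_j^*/\pi_k)=\beta\log Z_k(x)$ pathwise. Hence
\[
 U=\beta\log Z_k(x)-\beta\sum_tz_t.
\]
Multiply by $\Delta^\top\Psi=\sum_iw_iu_i$ and split the sum over $t$ into $t<i$, $t=i$, and $t>i$.

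Because $\E_{q_i}[w_iu_i\,g(h_i)]=\E_{p_i}[u_i]\,g(h_i)=0$, the constant term and all terms with $t<i$ vanish. The diagonal term $t=i$ gives
\[
 -\beta\,\E_q\E_{p_i}[u_i(u_i+m_i)]=-\beta v_i.
\]
The terms with $t>i$ form $-\E_q[w_iu_ie_i]$. Adjusting signs gives the displayed identity.

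For $\E\widehat g=\E_q[\Psi U]$, the $G$ responses are i.i.d.\ given the prompt and behavior policy, and $\E_q\Psi=\sum_i\E_q\E_{p_i}\psi_i=0$. Expanding $\sum_m\Psi_m(U_m-\bar U)$, the cross terms between distinct responses vanish, and the diagonal terms leave $(G-1)\E_q[\Psi U]$. Dividing by $G-1$ gives the claim.

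\emph{The continuation bound~\eqref{ext:local-q-error-bound}.} Apply Lemma~\ref{ext:martingale-continuation} under the behavior law with:
\begin{itemize}
\item filtration given by the histories;
\item $\xi_i=w_iu_i$, which satisfies $\E_{q_i}\xi_i=\E_{p_i}u_i=0$;
\item the same $z_t$ as above;
\item $v_i$ in the lemma replaced by the $\mathcal F_i$-measurable quantity $\sigma_i^2$.
\end{itemize}
The hypotheses hold as follows. First, $\E_{q_i}\xi_i^2=\E_{p_i}[w_iu_i^2]\le2\sigma_i^2$, so $D=2$. Second, the $\Gamma$-condition is exactly~\eqref{ext:local-mean-sum}. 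The first conclusion of the lemma then gives $|\E_q\sum_i\xi_i\sum_{t>i}z_t|\le\sqrt2\,\Gamma\sum_iv_i$. Multiplying by $\beta$ gives~\eqref{ext:local-q-error-bound}.
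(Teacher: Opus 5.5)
Your proof is correct and follows essentially the paper's route: Bernstein with $t=1$, $b=2c_0$ for $m_i$; the bound $|w^{-1}-1|\le|\log w|/\log 2$ plus pathwise Cauchy--Schwarz for \eqref{ext:local-mean-sum}, where the paper merges your two sums into one coefficient $\gamma_i(h_i)$; the telescoped residual for the decomposition, where your root form $U=\beta\log Z_k(x)-\beta\sum_t z_t$ is equivalent to the paper's split $U=V_k(h_i)-\beta z_i+e_i-\beta\sum_{t<i}\ell_t$; and Lemma~\ref{ext:martingale-continuation} with $D=2$ for the continuation term. There are two small slips to fix: at $y=-\log 2$ the ratio $|1-e^{-y}|/|y|$ equals $1/\log 2$, not $1$ (which is exactly the constant you need), and to invoke Lemma~\ref{ext:martingale-continuation} as stated you should also check its variance hypothesis, which holds with $B_0=2$ because $\Var_{q_i}z_i=\Var_{q_i}u_i\le\E_{p_i}[u_i^2/w_i]\le2\sigma_i^2$.
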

\begin{proof}
The span condition gives $|u_i|\le2c_0$ and $\sigma_i\le c_0$.
By normalization, $m_i=\log\E_{p_i}e^{-u_i}$.
Apply Equation~\eqref{ext:bernstein-centered} to $-u_i$ with $t=1$ and
$b=2c_0$ to obtain
\[
 0\le m_i\le\frac{\sigma_i^2}{2(1-2c_0/3)}
 \le\frac8{15}\sigma_i^2.
\]
For $|y|\le\log2$, $|e^{-y}-1|\le|y|/\log2$.
Since $\E_{p_i}u_i=0$, Cauchy-Schwarz therefore gives
\[
 |\E_{q_i}u_i|=|\E_{p_i}[(w_i^{-1}-1)u_i]|
 \le\frac{\sqrt{\E_{p_i}[(\log w_i)^2]}}{\log2}\sigma_i.
\]
Write $\gamma_i(h_i)=\sqrt{\E_{p_i}[(\log w_i)^2]}/\log2+(8/15)c_0$.
Then $|\E_{q_i}z_i|\le\gamma_i(h_i)\sigma_i$ and, along every response path,
$\sum_i\gamma_i(h_i)^2\le\Gamma^2$.
Pathwise Cauchy-Schwarz gives
$(\sum_i\E_{q_i}z_i)^2\le\Gamma^2\sum_i\sigma_i^2$;
taking expectations proves Equation~\eqref{ext:local-mean-sum}.

Apply Lemma~\ref{ext:martingale-continuation} with
$\xi_i=w_i u_i$, predictable variance bound $\sigma_i^2$, and $D=B_0=2$.
Indeed, $\E_{q_i}(w_i u_i)=0$ and
$\E_{q_i}(w_i^2u_i^2)\le2\sigma_i^2$, while
$\Var_{q_i}z_i=\Var_{q_i}u_i\le2\sigma_i^2$.
Equation~\eqref{ext:local-mean-sum} supplies its conditional-mean hypothesis.
Its first conclusion, multiplied by $\beta$, proves
Equation~\eqref{ext:local-q-error-bound}.

Finally, let $V_k$ and $Q_k$ be the reference's soft value and action value in
Equation~\eqref{eq:prox-values}. The soft-policy identity in
Equation~\eqref{eq:prox-soft-policy} telescopes along the deterministic response tree to give
\[
 R-\beta\sum_{t>i}\ell_t=Q_k(h_i,a_i)-\beta\sum_{t>i}z_t,
 \qquad Q_k(h_i,a_i)-\beta\ell_i=V_k(h_i)-\beta z_i.
\]
Hence $U=V_k(h_i)-\beta z_i+e_i-\beta\sum_{t<i}\ell_t$.
The history-dependent terms cancel against $w_i u_i$ because
$\E_{q_i}(w_i u_i)=0$, while
$\E_{q_i}(w_i u_i z_i)=\E_{p_i}u_i^2=\sigma_i^2$.
Summing proves the first identity in Equation~\eqref{ext:local-q-decomposition}.
The group-mean identity follows from Lemma~\ref{ext:group-covariance} and
$\E_q\Psi=0$.
\end{proof}

\subsubsection*{Group-estimator mean and noise}

\begin{lemma}[Group covariance moments]
\label{ext:group-covariance}
Let $(S_m,U_m)$, $m=1,\ldots,G$, be independent copies of a random pair
$(S,U)$ with $S\in\mathbb R^d$, $\E S=0$, and $G\ge2$.
Assume $\E\|S\|^2$, $\E U^2$, and $\E[\|S\|^2(U-\E U)^2]$ are finite.
Define $\bar U=G^{-1}\sum_m U_m$ and
\[
 g=\frac1{G-1}\sum_{m=1}^G S_m(U_m-\bar U).
\]
Then $\E g=\E[SU]$. If $\E\|S\|^2\le M$ and $\|S\|\le R_S$ almost surely, then
\begin{equation}\label{ext:group-covariance-bound}
 \E\|g\|^2\le\left(M+\frac{R_S^2}{G-1}\right)\Var(U).
\end{equation}
Both conclusions also hold conditionally when the pairs are independent
copies given the conditioning information.
\end{lemma}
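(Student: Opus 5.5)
The plan is a direct moment computation that exploits the centering structure of the group estimator. Write $\mu=\E U$ and $\tilde U_m=U_m-\mu$. Because $\sum_m(U_m-\bar U)=0$ is unchanged when the same constant is subtracted from every $U_m$, we have $g=\frac1{G-1}\sum_m S_m(\tilde U_m-\bar{\tilde U})$. We may therefore assume $\E U=0$ throughout, and this will be used in both conclusions.

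\emph{Mean.} Expand $U_m-\bar U=(1-1/G)U_m-G^{-1}\sum_{l\ne m}U_l$. For $l\ne m$, independence and $\E S=0$ give $\E[S_mU_l]=\E S_m\,\E U_l=0$. Hence $\E[S_m(U_m-\bar U)]=(1-1/G)\E[SU]$, and summing over the $G$ indices and dividing by $G-1$ gives $\E g=\E[SU]$. Finiteness of $\E\|S\|^2$ and $\E U^2$ makes every product integrable.

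\emph{Second moment.} Write $(G-1)g=A-B$ with $A=\sum_mS_mU_m$ (centered $U$) and $B=\bar U\sum_mS_m$. Rather than splitting with a crude $(x+y)^2$ bound, which would lose a factor of two, I will compute $\E\|A-B\|^2$ exactly, or at least bound it sharply.

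For $\E\|A\|^2=\sum_m\E\|S_m\|^2U_m^2+\sum_{m\ne l}\E[S_mU_m]^\top\E[S_lU_l]$, the first sum equals $G\,\E[\|S\|^2U^2]\le GR_S^2\Var U$, and the cross sum equals $G(G-1)\|\E[SU]\|^2$. That cross term is the dangerous one, so the goal is to show it cancels against terms coming from $-2\E[A^\top B]+\E\|B\|^2$.

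A cleaner route is to use $\sum_m(U_m-\bar U)=0$, which makes $g$ invariant under shifts of $S$ as well. This gives the U-statistic form $g=\frac{1}{2G(G-1)}\sum_{m\ne l}(S_m-S_l)(U_m-U_l)$. Its kernel $k=\frac12(S_m-S_l)(U_m-U_l)$ has mean $\E[SU]$, and the Hoeffding variance formula for a degree-two U-statistic gives $\Var g=\frac{4(G-2)}{G(G-1)}\zeta_1+\frac{2}{G(G-1)}\zeta_2$, where $\zeta_1=\Var(\E[k\mid 1])$ and $\zeta_2=\Var k$. Here $\E[k\mid m]=\frac12(S_mU_m+\E[SU])$ for centered $U$, so $\zeta_1=\frac14\Var(SU)\le\frac14\E\|S\|^2U^2$.

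\emph{Closing the bound.} I expect the main obstacle to be producing $M\Var U$ rather than $R_S^2\Var U$ in the leading $O(1)$ term. The bound $\E\|S\|^2U^2\le R_S^2\Var U$ only comes with a $1/(G-1)$ factor, while $\|\E[SU]\|^2\le M\Var U$ follows by Cauchy--Schwarz. Thus $\E\|g\|^2=\|\E g\|^2+\Var g$ contributes $M\Var U$ through the mean, and the remaining terms are of order $\E[\|S\|^2U^2]/(G-1)$ plus cross terms like $\E\|S\|^2\Var U/(G-1)$.

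The steps are to collect $\zeta_2\le\frac12(\E\|S\|^2U^2\cdot2+2\E\|S\|^2\Var U)$ and check that the constants sum to at most $R_S^2/(G-1)$. If the constants do not close, the fallback is the direct expansion of $\E\|A-B\|^2$. There I would verify term by term that the $\|\E SU\|^2$ contributions total $(G-1)^2\|\E SU\|^2$ minus nonnegative terms, and that every other term is at most $(G-1)R_S^2\Var U$. This is where the arithmetic is delicate. The conditional version follows verbatim, since every step uses only independence and moments given the conditioning information.
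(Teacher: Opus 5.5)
\textbf{There is a genuine gap in your closing step.} Your mean computation, the reduction to $\E U=0$, and the U-statistic form $g=\binom{G}{2}^{-1}\sum_{m<l}k(m,l)$ with $k=\frac12(S_m-S_l)(U_m-U_l)$ are all correct, and the Hoeffding route is a legitimate alternative to the paper's direct expansion. The trouble is the closing step, which is the whole content of the lemma.

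Write $a=\E[\|S\|^2U^2]$, $s=\E\|S\|^2$ and $\mu=\E[SU]$, with $U$ centered. Your bound $\zeta_2\le a+s\Var U$ is exactly the $\|S_1-S_2\|^2\le2\|S_1\|^2+2\|S_2\|^2$ loss you said you wanted to avoid. Together with $\zeta_1\le a/4$ it gives
\[
\E\|g\|^2\le\|\mu\|^2+\frac{a}{G-1}+\frac{2s\Var U}{G(G-1)}.
\]
After $\|\mu\|^2\le M\Var U$ and $a\le R_S^2\Var U$, the last term has nowhere to go. Shuffling constants cannot fix this, because the lemma is sharp at $G=2$. Take $U=\pm1$ with equal probability and $S=cU$. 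Then $g=\frac c2(U_1-U_2)^2$ and $\E\|g\|^2=2\|c\|^2=(M+R_S^2)\Var U$ with $M=R_S^2=\|c\|^2$, whereas your chain only yields $3\|c\|^2$. For the same reason, a remainder with both terms carrying a $1/(G-1)$ factor, as your sketch suggests, cannot fit. The budget $R_S^2\Var U/(G-1)$ only accommodates $a/G+s\Var U/(G(G-1))$.

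\textbf{The fix is to compute $\zeta_2$ exactly.} Expand $\frac14\E[\|S_1-S_2\|^2(U_1-U_2)^2]$ using independence and $\E S=\E U=0$. Every mixed term vanishes except $\E[S_1^\top S_2U_1U_2]=\|\mu\|^2$. Hence $\E\|k\|^2=\frac12(a+s\Var U)+\|\mu\|^2$ and $\zeta_2=\frac12(a+s\Var U)$, half your bound. Even keeping your crude $\zeta_1\le a/4$, this gives
\[
\E\|g\|^2\le\|\mu\|^2+\frac aG+\frac{s\Var U}{G(G-1)}
\le\Bigl(M+\frac{R_S^2}{G}+\frac{R_S^2}{G(G-1)}\Bigr)\Var U
=\Bigl(M+\frac{R_S^2}{G-1}\Bigr)\Var U.
\]
The second step uses $s\le R_S^2$, which follows from $\|S\|\le R_S$; your sketch never invokes this. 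Keeping $\zeta_1$ exact instead recovers the paper's identity
\[
\E\|g\|^2=\frac aG+\frac{G-1}{G}\|\mu\|^2+\frac{s\Var U+\|\mu\|^2}{G(G-1)}.
\]

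\textbf{How the paper gets there.} It writes $g$ as the diagonal part $\frac1G\sum_mS_mY_m$ minus the off-diagonal part $\frac1{G(G-1)}\sum_{m\ne l}S_mY_l$. These two parts are orthogonal in $L^2$, so no cross term needs computing. Your fallback split $A-B$ with $B=\bar U\sum_mS_m$ has $\E[A^\top B]=a+(G-1)\|\mu\|^2\ne0$. The totals you assert for that fallback are in fact true, but the proposal leaves them unverified. The paper then closes with a separate $G=2$ case, whereas the absorption via $s\le R_S^2$ above handles all $G\ge2$ at once.
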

\begin{proof}
Write $Y=U-\E U$, $\mu=\E[SY]$, and $g=A-B_{\mathrm{cross}}$, where
\[
 A=\frac1G\sum_m S_mY_m,\qquad
 B_{\mathrm{cross}}=\frac1{G(G-1)}\sum_{m\ne l}S_mY_l.
\]
Centering and independence give $\E A=\mu$, $\E B_{\mathrm{cross}}=0$, and
\begin{align*}
 \E\|A\|^2
 &=\frac{\E[\|S\|^2Y^2]}G+\frac{G-1}G\|\mu\|^2,\\
 \E\|B_{\mathrm{cross}}\|^2
 &=\frac{\E\|S\|^2\Var(U)+\|\mu\|^2}{G(G-1)},\qquad
 \E\langle A,B_{\mathrm{cross}}\rangle=0.
\end{align*}
Only identical and reversed ordered index pairs survive in the second
moment of $B_{\mathrm{cross}}$; all other terms contain an independent centered factor.
Put $s=\E\|S\|^2\le\min(M,R_S^2)$. Cauchy-Schwarz gives
$\|\mu\|^2\le s\Var(U)$, while
$\E[\|S\|^2Y^2]\le R_S^2\Var(U)$.
The coefficient of $\Var(U)$ is therefore at most
\[
 \frac{R_S^2}G+\left(1+\frac{3-G}{G(G-1)}\right)s.
\]
For $G\ge3$ this is at most $M+R_S^2/(G-1)$.
For $G=2$ it is $R_S^2/2+3s/2\le R_S^2+M$.
This proves Equation~\eqref{ext:group-covariance-bound}, including groups of size two.
\end{proof}

\subsubsection*{Inner accuracy guarantee}

\paragraph{Coverage moments.}
At each actual inner iterate, let $P_i$ and $Q_i$ be the learner and
behavior distributions of the history before token $i$, conditional on the
prompt and the training information before the fresh group is drawn.
Write $W_i=dP_i/dQ_i$. For $2\le s<\infty$, define
\begin{equation}\label{ext:coverage-moments}
 \mathcal C_s=\mathop{\mathrm{ess\,sup}}_{k,j,x,i}
       (\E_{Q_i}W_i^s)^{1/(s-1)},
 \qquad
 \mathcal C_\infty=\mathop{\mathrm{ess\,sup}}_{k,j,x,i,h_i}W_i(h_i).
\end{equation}
The suprema include all encountered reference rounds, inner iterates,
positive-probability prompts, token positions, and selected behavior policies;
they also include training randomness. Assume the coefficient used below is
finite. These coefficients serve exclusively as auxiliary quantities in the proof.

\begin{lemma}[Accuracy of one proximal round]
\label{ext:inner}
Assume the linear logistic model in Equation~\eqref{ext:head} and the
\hyperref[ext:representation-condition]{local target representation condition stated above}. For $B\ge1$ and $G\ge2$, set
$\beta=16B\sqrt T$ and $r_{\mathrm{proj}}=B/\beta$. Use the projected group update in
Equation~\eqref{ext:update} under the \hyperref[ext:sampling]{sampling conditions stated with that update}.
Assume $1/2\le w_i\le2$ for every allowed action and history at every
actual iterate, almost surely. Assume $V_{\log}\le1/8$, with the uniform
second-moment budget in Equation~\eqref{ext:log-budget-definition}.
For one outer round $k$, freeze $\pi_k$ and start at
$\vartheta_0=\theta_k$. Run $L$ inner updates of
Equation~\eqref{ext:update} with
$\eta=[2048\beta T(1+T/G)]^{-1}$. Set
$\theta_{\mathrm{out}}=\vartheta_I$, with
$I\sim\operatorname{Unif}\{0,\ldots,L-1\}$ drawn independently.
Write $A=640B\sqrt T(1+T/G)$, $c_0=(2B+1)/(2\beta)$, and $\Omega=\beta T c_0^2$.
For every finite $s\ge2$, the coverage coefficient in
Equation~\eqref{ext:coverage-moments} gives
\begin{equation}\label{ext:inner-moment-rate}
 \beta\E\KL(\pi_{\theta_{\mathrm{out}}}\Vert\pi_k^*)
 \le\frac{3A}{5L}
 +\frac35[\Omega(\mathcal C_s^{s-1}-1)]^{1/s}
       \left(\frac AL\right)^{1-1/s}.
\end{equation}
If $\mathcal C_\infty<\infty$, the same gap is at most
$3\mathcal C_\infty A/(5L)$.
Trajectory KL is averaged over prompts. The outer expectation is over
sampled groups and the output index.
\end{lemma}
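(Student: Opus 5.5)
The plan is a standard nonconvex projected stochastic-gradient argument driven by the distance $\|\vartheta_j-\theta_k^*\|_2^2$. The progress identity of Lemma~\ref{ext:local-q-aggregate} supplies the drift, Lemma~\ref{ext:group-covariance} bounds the step noise, and Lemma~\ref{ext:moment-transfer} converts a behavior-history quantity into the trajectory KL.

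First, I fix the round and write $\Delta_j=\vartheta_j-\theta_k^*$. Since $\theta_k^*$ lies in the projection ball, nonexpansiveness of $\Pi_k$ gives
\[
\|\Delta_{j+1}\|^2\le\|\Delta_j\|^2+2\eta\Delta_j^\top\widehat g_j+\eta^2\|\widehat g_j\|^2 .
\]
Both $\vartheta_j$ and $\theta_k^*$ lie within $r_{\mathrm{proj}}$ of $\theta_k$, so $\|\Delta_j\|\le 2B/\beta$. With $\|\phi\|\le1$, the span of $\Delta^\top\phi(h,\cdot)$ is at most $4B/\beta$. I will then need the (perhaps slightly adjusted) constant $c_0=(2B+1)/(2\beta)$ to cover the required span bound $2c_0\le 3/16$; this follows from $\beta=16B\sqrt T$ and $B\ge1$.

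Second, for the drift I take the conditional expectation over the fresh group. By Lemma~\ref{ext:group-covariance}, $\E\widehat g=\E_q[\Psi U]$, and Equation~\eqref{ext:local-q-decomposition} together with Equation~\eqref{ext:local-q-error-bound} gives
\[
\E\,\Delta^\top\widehat g\le-\beta(1-\sqrt2\,\Gamma)\sum_iv_i .
\]
Under $V_{\log}\le1/8$ and $c_0\sqrt T\lesssim 1/8$, I expect $\sqrt2\,\Gamma$ to be below a constant less than one, so the drift is $\le-c\beta\sum_iv_i$. I flag this constant-checking as the first delicate point, since $\sqrt{1/8}/\log 2\approx0.51$ is already sizable.

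Third, for the noise I apply Equation~\eqref{ext:group-covariance-bound}. Here $\|\Psi\|\le 2\cdot 2T$ and $\E\|\Psi\|^2\lesssim T$, by martingale orthogonality and $w_i\le2$. To bound $\Var(U)$, I write $U$ via the telescoping as the constant $\beta\log Z_k(x)$ minus $\beta\sum_iz_i$, and use the second conclusion of Lemma~\ref{ext:martingale-continuation}:
\[
\Var(U)\le 2\beta^2(2+\Gamma^2)\sum_iv_i .
\]
The noise is therefore $\lesssim\beta^2 T(1+T/G)\sum_iv_i$. The step size $\eta=[2048\beta T(1+T/G)]^{-1}$ then makes $\eta^2\E\|\widehat g\|^2$ at most half the drift term. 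Telescoping over $j$ and averaging over the uniform index $I$ yields
\[
\beta\,\E\sum_iv_i(\vartheta_I)\le \frac{\|\Delta_0\|^2}{c\eta L}\lesssim \frac{A}{L},
\]
using $\|\Delta_0\|\le B/\beta$, and the constants should reproduce $A=640B\sqrt T(1+T/G)$.

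Finally, I convert $\sum_iv_i$, a behavior-history average of $\sigma_i^2$, into $\KL(\pi_\theta\Vert\pi_k^*)=\E_p\sum_im_i$. By Lemma~\ref{ext:local-q-aggregate}, $m_i\le(8/15)\sigma_i^2$ and $\sigma_i^2\le c_0^2$. Applying Equation~\eqref{ext:moment-transfer-bound} at each position with $M=c_0^2$ and $\E_QW^s-1\le\mathcal C_s^{s-1}-1$, and then Hölder/Jensen over the $T$ positions, gives
\[
\E_p\sum_i\sigma_i^2\le\sum_iv_i+\bigl[(\mathcal C_s^{s-1}-1)Tc_0^2\bigr]^{1/s}\Bigl(\sum_iv_i\Bigr)^{1-1/s} .
\]
Multiplying by $\beta$, using concavity of $x^{1-1/s}$ to pass the expectation inside, and substituting $\Omega=\beta Tc_0^2$ should give Equation~\eqref{ext:inner-moment-rate}, with the factor $3/5$ absorbing $8/15$ and the drift constant. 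The bounded case uses $\E_Pf\le\mathcal C_\infty\E_Qf$ instead. I expect the main obstacle to be the drift step, namely verifying $\sqrt2\,\Gamma<1$ with enough margin under the stated constants, along with the bookkeeping that the projection keeps every needed bound valid at all iterates.
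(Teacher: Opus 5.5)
Your outline matches the paper's proof step for step: projection and nonexpansiveness, drift from Lemma~\ref{ext:local-q-aggregate} with $\sqrt2\,\Gamma<4/5$, noise from Lemma~\ref{ext:group-covariance} and the variance part of Lemma~\ref{ext:martingale-continuation}, telescoping with $\eta\Lambda\le1/5$ to get $\sum_j\E\mathcal E_j\le5r_{\mathrm{proj}}^2/\eta=A$, then history transfer with H\"older over positions and Jensen over prompts and the output index. There is, however, a genuine gap in your first step.

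\textbf{The gap.} The triangle inequality $\|\vartheta_j-\theta_k^*\|_2\le2B/\beta$ only gives a span of $4B/\beta$ for $\Delta^\top\phi(h,\cdot)$. Since $4B/\beta>(2B+1)/\beta=2c_0$ whenever $B>1/2$, you have not verified the span hypothesis of Lemma~\ref{ext:local-q-aggregate} with the stated $c_0$. Your remark that $2c_0\le3/16$ ``follows from $\beta=16B\sqrt T$'' concerns a constant your argument does not produce. With the constant you actually get, $c_0'=2B/\beta=1/(8\sqrt T)$, several things fail:
\begin{itemize}
\item At $T=1$, $2c_0'=1/4>3/16$, so the lemma does not apply at all.
\item For $T\ge2$, the $c_0\sqrt T$ term in $\Gamma$ becomes $1/15$ instead of $1/20$. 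This pushes $\sqrt2\,\Gamma$ to about $0.816>4/5$, so the drift bound $\langle\E\widehat g_j,\theta^*-\vartheta_j\rangle\ge\frac15\mathcal E_j$ no longer follows. This is exactly the margin you flagged as delicate.
\item Most decisively, the transfer step with $f_i=\sigma_i^2\le c_0'^2$ yields $\beta Tc_0'^2=4B^2T/\beta$. This exceeds the stated $\Omega=\beta Tc_0^2=(2B+1)^2T/(4\beta)$, so you would prove a weaker inequality than the one claimed.
\end{itemize}

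\textbf{The fix.} Bound the span through log-odds rather than parameter distance, using the structure of the exact Gibbs target. In the linear model, $\Delta^\top\phi(h,a)$ differs from $\log(\pi_\theta(a\mid h)/\pi_k^*(a\mid h))$ only by an action-independent constant. That log ratio equals $\log(\pi_\theta/\pi_k)-\log(\pi_k^*/\pi_k)$.
\begin{itemize}
\item The first term has span at most $2\|\vartheta_j-\theta_k\|_2\le2B/\beta$.
\item The second equals $(Q_k(h,\cdot)-V_k(h))/\beta$ by Equation~\eqref{eq:prox-soft-policy}. Its span is at most $1/\beta$ because binary-reward soft values lie in $[0,1]$.
\end{itemize}
Hence the span is at most $(2B+1)/\beta=2c_0$. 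This gives $\sigma_i\le c_0$, $2c_0\le3/(16\sqrt T)$ (using $2B+1\le3B$), $\Gamma<3/5$, $\sqrt2\,\Gamma<4/5$, and the stated $\Omega$. The bound $\|\theta_k^*-\theta_k\|_2\le r_{\mathrm{proj}}$ alone cannot supply this; you need the exact soft-$Q$ form of the target.

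With this repair the rest of your plan goes through. One bookkeeping correction: the factor $3/5$ in the final bound absorbs only $m_i\le(8/15)\sigma_i^2$. The drift constant $1/5$ is already inside $A=5r_{\mathrm{proj}}^2/\eta$.
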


\begin{proof}[Proof of Lemma~\ref{ext:inner}]
Steps 1 through 4 control the update using histories sampled from the behavior policy. Step 5
then converts the resulting energy bound into a bound on learner trajectory KL.
Current-token weights suffice to control the update; prefix weights enter only when
changing the history distribution in Step 5.

Let $\mathcal F_j$ contain the reference, past updates, current learner, and
selected behavior policy before the fresh prompt group at update $j$ is drawn.
Fix one reference. Write
$\theta^*=\theta_k^*$, and retain $\vartheta_j$ for the inner iterates. All expectations within an update condition
on the reference, learner, and selected behavior policy before drawing the fresh
prompt group. The behavior policy may depend on past updates. For a fixed prompt,
write $q_i$, $p_i$, and $p_i^*$ for the behavior, learner, and
represented Gibbs conditional distributions at prefix $h_i$. Let $\psi_i=\nabla\log p_i(a_i)$ and $w_i=p_i(a_i)/q_i(a_i)$.

\paragraph{Step 1: projection and uniform token bounds.}
Projection in Equation~\eqref{ext:update} ensures that every iterate
satisfies $\|\vartheta_j-\theta_k\|_2\le r_{\mathrm{proj}}$.
Bounded features give
\[
 \left|\log\frac{\pi_\theta(a\mid h)}{\pi_{\theta_k}(a\mid h)}\right|
 \le2r_{\mathrm{proj}}=2B/\beta.
\]
Indeed, $t_a=(\theta-\theta_k)^\top\phi(h,a)$ has span at most $2r_{\mathrm{proj}}$,
and the log ratio is $t_a-\log\E_{\pi_{\theta_k}}e^t$.
The log moment lies between the minimum and maximum of $t$.
These bounds hold at every legal history, including histories absent
from the sampled group.

Let $c_0=(2B+1)/(2\beta)$.
The learner's pairwise log-odds change from the reference is at most
$2B/\beta$. The represented Gibbs target's change is at most $1/\beta$
by Equation~\eqref{eq:prox-soft-policy}, since binary-reward soft values
lie in $[0,1]$. Subtracting these changes shows that the span of
$(\theta-\theta^*)^\top\phi(h,\cdot)$ is at most $2c_0$.
Consequently its variance is at most $c_0^2$.
The quantity $c_0$ bounds the conditional token error $\sigma_i$ from
Equation~\eqref{ext:notation-energy}. Lemma~\ref{ext:local-q-aggregate} combines
this local bound with the token-mismatch budget $V_{\log}$ from Equation~\eqref{ext:log-budget-definition}.
Its aggregate relative error bound is
\begin{equation}
 \Gamma=\frac{\sqrt{V_{\log}}}{\log2}+\frac8{15}c_0\sqrt T.
 \label{ext:local-constants}
\end{equation}
Because $\beta=16B\sqrt T$, $B\ge1$, and $V_{\log}\le1/8$,
\[
 2c_0\le\frac3{16\sqrt T},\qquad
 \Gamma\le\frac{1}{\sqrt8\log2}+\frac1{20}<\frac35,\qquad
 \sqrt2\,\Gamma\le\frac{1}{2\log2}+\frac{\sqrt2}{20}<\frac45.
\]
These are deterministic bounds valid at every iterate.

\paragraph{Step 2: exact-Q progress dominates the aggregate Q-estimation error.}
Fix the current iterate $\theta=\vartheta_j$. Recall that $v_i$ is the average
squared centered token log error at position $i$. With exact soft Q-values,
the expected update would make progress $\beta\sum_i v_i$ toward $\theta^*$.
Future KL credit replaces each exact Q-value by an estimate with error
$e_i=-\beta\sum_{t>i}z_t$. Lemma~\ref{ext:local-q-aggregate} separates these two
contributions to the actual update.

Step 1 supplies the span bound required by Lemma~\ref{ext:local-q-aggregate}.
The assumed token-mismatch budget supplies its other hypothesis.
Lemma~\ref{ext:local-q-aggregate} therefore bounds the aggregate Q-estimation error
by a fraction $\sqrt2\,\Gamma$ of the exact-Q progress. Its decomposition therefore yields
\[
 -\E_q[(\Delta^\top\Psi)U]
 \ge\underbrace{\beta\sum_i v_i}_{\text{exact-Q progress}}
     -\underbrace{\sqrt2\,\Gamma\beta\sum_i v_i}_{\text{bound on Q-estimation error}}
 \ge\frac\beta5\sum_i v_i.
\]
The key point is that both terms scale with the same squared token-score errors;
as the learner approaches the target, the Q-estimation error shrinks along with
the progress term. Step 1 ensures $\sqrt2\,\Gamma<1$, so the update retains a positive
direction toward the target even with behavior-sampled suffixes.

Finally, average over prompts and define
\begin{equation}
 \mathcal E_j=\beta\sum_i
 \E_{x\sim\mathcal D,\,h_i\sim\pi_{\mathrm{behav}}}\sigma_i^2
 =\E_{x\sim\mathcal D}\left[\beta\sum_i v_i\right].
\label{ext:local-energy-definition}
\end{equation}
The group-mean identity in Lemma~\ref{ext:local-q-aggregate} converts the response-level
bound directly into the drift needed for the parameter update:
\begin{equation}
 \langle\E[\widehat g_j\mid\mathcal F_j],\theta^*-\vartheta_j\rangle
 \ge\frac15\mathcal E_j.
 \label{ext:local-local-drift}
\end{equation}

\paragraph{Step 3: the full group second moment.}
Recall the trajectory Gibbs formula in Section~\ref{sec:setup}:
$\log(\pi_k^*(\tau\mid x)/\pi_k(\tau\mid x))=R/\beta-\log Z_k(x)$.
Here $Z_k(x)=\E_{\tau\sim\pi_k(\cdot\mid x)}e^{R/\beta}$ is the reference
normalizer. Summing the token errors gives
$\sum_i z_i=\log(\pi_\theta(\tau\mid x)/\pi_k^*(\tau\mid x))$.
Substituting into $U=R-\beta\log(\pi_\theta(\tau\mid x)/\pi_k(\tau\mid x))$ gives
$U=\beta\log Z_k(x)-\beta\sum_i z_i$.
For a fixed prompt, $z_i-\E_{q_i}z_i$ has conditional mean zero. Its conditional
variance equals that of $u_i$, since $z_i=u_i+m_i$ and $m_i$ is fixed by the history.
It is at most $\E_{q_i}u_i^2\le2\E_{p_i}u_i^2=2\sigma_i^2$, where the inequality
uses $p_i/q_i\ge1/2$. Equation~\eqref{ext:local-mean-sum} supplies the
conditional-mean bound. The variance conclusion of
Lemma~\ref{ext:martingale-continuation}, with $B_0=2$, therefore gives
\[
 \Var_q(U)\le2\beta^2(2+\Gamma^2)\sum_iv_i.
\]
Bounded features and $w_i\le2$ give
\[
 \E_q\Psi=0,\qquad \E_q\|\Psi\|^2\le2T,
 \qquad \|\Psi\|\le4T.
\]
The second-moment bound follows from martingale orthogonality and
$\E_{q_i}[w_i^2\|\psi_i\|^2]
 \le2\E_{p_i}\|\phi-\E_{p_i}\phi\|^2\le2$.
Apply Lemma~\ref{ext:group-covariance} conditionally on the prompt with
$S=\Psi$, $M=2T$, and $R_S=4T$. It gives
\[
 \E\|\widehat g\|^2
 \le2T\left(1+\frac{8T}{G-1}\right)\Var_q(U).
\]
This bound includes the cross-response contribution for $G=2$.
Averaging over prompts proves
\begin{equation}
 \E[\|\widehat g_j\|^2\mid\mathcal F_j]\le\Lambda\mathcal E_j,
 \qquad
 \Lambda=4\beta T(2+\Gamma^2)
       \left(1+\frac{8T}{G-1}\right).
 \label{ext:local-local-noise}
\end{equation}
Because $G\ge2$, we have
$2T(1+8T/(G-1))\le32T\left(1+T/G\right)$. Since $\Gamma<3/5$,
$\Lambda\le64(2+9/25)\beta T\left(1+T/G\right)<152\beta T\left(1+T/G\right)$ and
$\eta\Lambda<152/2048<1/5$.

\paragraph{Step 4: gradient recursion and accumulated energy.}
By the definition of $\mathcal F_j$, the quantities $\vartheta_j$, $\theta^*$, and $\mathcal E_j$ are fixed when
conditioning on $\mathcal F_j$. The fresh group supplies the conditional
drift and second-moment bounds in
Equations~\eqref{ext:local-local-drift} and~\eqref{ext:local-local-noise}.
The representation assumption puts $\theta^*$ in the projection ball.
Nonexpansiveness of projection, the drift bound, and $\eta\Lambda\le1/5$
give the conditional recursion
\begin{equation}
 \E[\|\vartheta_{j+1}-\theta^*\|^2\mid\mathcal F_j]
 \le\|\vartheta_j-\theta^*\|^2-\frac\eta5\mathcal E_j.
 \label{ext:local-recursion}
\end{equation}
Taking expectations and telescoping gives

\begin{equation}
 \sum_{j<L}\E\mathcal E_j
 \le\frac{5\|\theta_k-\theta^*\|^2}{\eta}
 \le\frac{5r_{\mathrm{proj}}^2}{\eta}
 =640B\sqrt T\left(1+T/G\right).
 \label{ext:local-energy}
\end{equation}
The equality uses $r_{\mathrm{proj}}=B/\beta$, $\eta^{-1}=2048\beta T\left(1+T/G\right)$, and $\beta=16B\sqrt T$.
Projection bounds every iterate; the residuals and token weights are also bounded. All
conditional expectations in the recursion are therefore finite, and the argument
requires no stopping-time event or failure probability.

\paragraph{Step 5: moment transfer and random output.}
The energy $\mathcal E_j$ averages conditional token variances over behavior
histories. Apply Lemma~\ref{ext:moment-transfer} with
$f_i=\sigma_i^2\le c_0^2$ to transfer these averages to learner histories.
H\"older across token positions and Jensen across prompts give
\[
 \beta\sum_i\E_{P_i}\sigma_i^2
 \le\mathcal E_j+
 [\Omega(\mathcal C_s^{s-1}-1)]^{1/s}\mathcal E_j^{1-1/s}.
\]
The conditional KL is $m_i\le(8/15)\sigma_i^2\le3\sigma_i^2/5$ by
Lemma~\ref{ext:local-q-aggregate}. The trajectory KL chain rule therefore gives
\[
 \beta\KL(\pi_{\vartheta_j}\Vert\pi_k^*)
 \le\frac35\mathcal E_j+
 \frac35[\Omega(\mathcal C_s^{s-1}-1)]^{1/s}\mathcal E_j^{1-1/s},
\]
where trajectory KL includes the prompt average.
Average over the independent uniform output index and training randomness.
Jensen's inequality and Equation~\eqref{ext:local-energy},
$\sum_{j<L}\E\mathcal E_j\le A$, prove
Equation~\eqref{ext:inner-moment-rate}.
For bounded prefix ratios, Lemma~\ref{ext:moment-transfer} instead gives
$\beta\sum_i\E_{P_i}\sigma_i^2\le\mathcal C_\infty\mathcal E_j$,
yielding $3\mathcal C_\infty A/(5L)$.

Every step conditions on the starting reference, so these bounds also hold
conditionally at each random reference encountered during training.
\end{proof}

The moment order changes only the history-transfer step of the analysis.
The update uses current-token ratios throughout, and every displayed
inner-error bound tends to zero as $L$ grows.

\subsection{Reward improvement across reference refreshes}
\label{ext:outer}

The inner result bounds the error of one solve with a frozen reference. To connect
successive solves, let $p_k(x)$ be the single-response success probability of reference
$\pi_k$, and let $p_k^*(x)$ be that of its exact Gibbs target $\pi_k^*$. These satisfy
$\operatorname{logit}p_k^*(x)=\operatorname{logit}p_k(x)+1/\beta$ by
Proposition~\ref{thm:prox-improvement}. We now track how expected proximal gaps,
averaged over prompts, affect this improvement across reference refreshes. This allows
us to use the inner error bound without assuming a uniform success-logit stopping
tolerance.

Theorem~\ref{ext:reward} states the initial coverage requirement,
outer-round count, inner sampling budget, and final reward guarantee.
We state and prove it after the propagation lemmas below.

\par\medskip\phantomsection\noindent\textbf{Propagation of proximal errors.}\label{ext:src:sec:outer} We express binary success probabilities in angular coordinates and track the error
relative to the ideal Gibbs trajectory. The resulting bound depends only on proximal
gaps, so it applies to any inner optimizer.

\begin{lemma}[Accumulation of errors in a concave recursion]
\label{ext:src:lem:concave-recursion}
Let $F:[0,\pi/2]\to[0,\pi/2]$ be increasing and concave, with $F(0)=0$
and $F(\alpha)\ge\alpha$. Consider an ideal sequence satisfying
$\alpha_{k+1}^*=F(\alpha_k^*)$ and an approximate sequence $\alpha_k\in[0,\pi/2]$, both
starting at the same positive value $\alpha_0=\alpha_0^*>0$.
Define the one-step error and the accumulated relative deficit by
\[
 \xi_k=|\alpha_{k+1}-F(\alpha_k)|,\qquad
 a_k=\left(1-\frac{\alpha_k}{\alpha_k^*}\right)_+,
 \qquad (t)_+=\max\{t,0\}.
\]
Then
\begin{equation}\label{ext:src:eq:relative-deficit-recursion}
 a_K\le\sum_{k=0}^{K-1}\frac{\xi_k}{\alpha_{k+1}^*}
       \le\frac{\sum_{k=0}^{K-1}\xi_k}{\alpha_0^*}.
\end{equation}
Thus the relative deficit is bounded by the sum of one-step errors, each scaled
by the corresponding ideal value.
\end{lemma}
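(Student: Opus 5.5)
The argument has two parts: a one-step deficit recursion that uses concavity of $F$, then a telescoping sum. Since $F$ is increasing and $F(\alpha)\ge\alpha$, the ideal sequence is nondecreasing and stays positive: $\alpha_{k+1}^*\ge\alpha_k^*\ge\alpha_0^*>0$. Every ratio $\alpha_k/\alpha_k^*$ is therefore well defined, and the second inequality in \eqref{ext:src:eq:relative-deficit-recursion} follows from the first by bounding $1/\alpha_{k+1}^*\le1/\alpha_0^*$. What remains is the per-step inequality
\[
 a_{k+1}\le a_k+\frac{\xi_k}{\alpha_{k+1}^*},
\]
which, together with $a_0=0$, telescopes to the first bound.

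\textbf{Key ingredient: a subhomogeneity property.} A concave $F$ with $F(0)=0$ satisfies $F(\lambda\alpha)\ge\lambda F(\alpha)$ for every $\lambda\in[0,1]$. This is concavity applied on the segment between $0$ and $\alpha$.

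\textbf{Case $\alpha_k\ge\alpha_k^*$.} Here $a_k=0$. Monotonicity gives $F(\alpha_k)\ge F(\alpha_k^*)=\alpha_{k+1}^*$, so
\[
 \alpha_{k+1}\ge F(\alpha_k)-\xi_k\ge\alpha_{k+1}^*-\xi_k .
\]
Dividing by $\alpha_{k+1}^*$ yields $a_{k+1}\le\xi_k/\alpha_{k+1}^*$.

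\textbf{Case $\alpha_k<\alpha_k^*$.} Set $\lambda=\alpha_k/\alpha_k^*=1-a_k\in[0,1)$. Subhomogeneity gives
\[
 F(\alpha_k)=F(\lambda\alpha_k^*)\ge\lambda F(\alpha_k^*)=(1-a_k)\alpha_{k+1}^*.
\]
Hence $\alpha_{k+1}\ge(1-a_k)\alpha_{k+1}^*-\xi_k$. Dividing by $\alpha_{k+1}^*$ and taking the positive part gives $a_{k+1}\le a_k+\xi_k/\alpha_{k+1}^*$, using $(t)_+\le$ any nonnegative upper bound on $t$.

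\textbf{Where the difficulty lies.} The only non-routine point is recognizing that subhomogeneity controls relative error multiplicatively. Because of it, the relative deficit is not amplified by $F$, even though $F$ may have slope exceeding one near $0$. That steepness would blow up an additive-error analysis. The remaining steps are the telescoping sum and the monotonicity of $\alpha_k^*$ already noted.
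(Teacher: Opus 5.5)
Your proof is correct and follows the paper's argument. Concavity with $F(0)=0$ gives $F(\lambda\alpha)\ge\lambda F(\alpha)$, which yields the one-step bound $a_{k+1}\le a_k+\xi_k/\alpha_{k+1}^*$; this telescopes from $a_0=0$, and the nondecreasing ideal sequence gives the second inequality. The only cosmetic difference is that the paper avoids your case split: it notes that $\alpha_k\ge(1-a_k)\alpha_k^*$ always holds and applies monotonicity of $F$ once, obtaining $F(\alpha_k)\ge F((1-a_k)\alpha_k^*)\ge(1-a_k)\alpha_{k+1}^*$ in both cases at once.
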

\begin{proof}
The definition of $a_k$ gives $0\le a_k\le1$ and
$\alpha_k\ge(1-a_k)\alpha_k^*$.
Concavity and $F(0)=0$ imply
$F((1-a_k)\alpha_k^*)\ge(1-a_k)F(\alpha_k^*)$.
Combining this with monotonicity and the one-step error bound gives
\begin{align*}
 \alpha_{k+1}
 &\ge F(\alpha_k)-\xi_k\\
 &\ge F((1-a_k)\alpha_k^*)-\xi_k\\
 &\ge(1-a_k)F(\alpha_k^*)-\xi_k
  =(1-a_k)\alpha_{k+1}^*-\xi_k.
\end{align*}
Divide by $\alpha_{k+1}^*>0$, rearrange, and take the positive part:
\[
 a_{k+1}\le a_k+\frac{\xi_k}{\alpha_{k+1}^*}.
\]
Starting from $a_0=0$, sum over $k$ to obtain the first bound in
Equation~\eqref{ext:src:eq:relative-deficit-recursion}.
The ideal values are nondecreasing because $F(\alpha)\ge\alpha$, so
$\alpha_{k+1}^*\ge\alpha_0^*>0$. Replacing each denominator by
$\alpha_0^*$ proves the second bound.
\end{proof}

The recursion lemma controls a relative deficit. The following elementary bound
converts that deficit into a reward gap, using initialization coverage and the
number of exact Gibbs rounds.

\begin{lemma}[From relative angular deficit to reward gap]
\label{ext:src:lem:coverage-reward}
Let $p_0\in(0,1]$ be the initial success probability and $p_\star\in[0,1]$
a comparator success probability with $p_\star\le C_\star p_0$, where
$C_\star\ge1$. For a tolerance $\varepsilon_0\in(0,1)$ and $\beta>0$, suppose
$K\ge\lceil\beta\log(3C_\star/\varepsilon_0)\rceil$.
Iterating the success-odds identity in Equation~\eqref{eq:promptwise-improvement}
of Proposition~\ref{thm:prox-improvement} multiplies the initial odds by
$e^{K/\beta}$. Thus the ideal success probability after $K$ exact Gibbs rounds is
\[
 P_K=\frac{e^{K/\beta}p_0}{1-p_0+e^{K/\beta}p_0}.
\]
For any actual final success probability $p_K\in[0,1]$, define
$\alpha_K=\arcsin\sqrt{p_K}$, $\alpha_K^*=\arcsin\sqrt{P_K}$, and
$a_K=(1-\alpha_K/\alpha_K^*)_+$ as in
Lemma~\ref{ext:src:lem:concave-recursion}. Then
\begin{equation}\label{ext:src:eq:reward-relative-deficit}
 p_\star-p_K\le \varepsilon_0+16p_\star a_K^2.
\end{equation}
\end{lemma}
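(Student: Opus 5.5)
The plan is to reduce the reward gap to a bound on $1-p_K$ and then bound $1-p_K$ in angular coordinates, where the ideal angle $\alpha_K^*$ is close to $\pi/2$. The key elementary observation is that for $p_\star\le1$ and $p_K\ge0$,
\[
 p_\star-p_K\le p_\star(1-p_K),
\]
because the difference of the two sides is $p_K(1-p_\star)\ge0$. So it suffices to show
\[
 1-p_K\le \frac{\varepsilon_0}{p_\star}+16a_K^2
\]
when $p_\star>0$; the case $p_\star=0$ is trivial.

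\emph{Step 1: the ideal target is nearly solved.} The hypothesis on $K$ gives $e^{K/\beta}\ge3C_\star/\varepsilon_0$. From the formula for $P_K$,
\[
 1-P_K=\frac{1-p_0}{1-p_0+e^{K/\beta}p_0}\le\frac{1}{e^{K/\beta}p_0}\le\frac{\varepsilon_0}{3C_\star p_0}\le\frac{\varepsilon_0}{3p_\star},
\]
where the last inequality uses the coverage condition $p_\star\le C_\star p_0$.

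\emph{Step 2: angular comparison.} If $\alpha_K\ge\alpha_K^*$, then $a_K=0$ and $p_K\ge P_K$, so the deficit term is not needed. Otherwise $\alpha_K=(1-a_K)\alpha_K^*$, and $\sin^2$ is increasing on $[0,\pi/2]$. Hence
\[
 1-p_K=\cos^2\alpha_K\le\cos^2\bigl((1-a_K)\alpha_K^*\bigr)\le\bigl(\tfrac\pi2-\alpha_K^*+a_K\alpha_K^*\bigr)^2,
\]
using $\cos y=\sin(\pi/2-y)\le\pi/2-y$. Next, $\pi/2-\alpha_K^*=\arcsin(\cos\alpha_K^*)\le\frac\pi2\cos\alpha_K^*=\frac\pi2\sqrt{1-P_K}$, by concavity of $\sin$ on $[0,\pi/2]$. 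We also have $a_K\alpha_K^*\le\frac\pi2a_K$.

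\emph{Step 3: splitting the square with the right constants.} This step is the only delicate one, since a naive $(x+y)^2\le2x^2+2y^2$ puts a constant larger than one on the $\varepsilon_0$ term. I will instead use $(x+y)^2\le(1+\lambda)x^2+(1+\lambda^{-1})y^2$ with $\lambda=1/5$. This gives
\[
 1-p_K\le\tfrac65\cdot\tfrac{\pi^2}{4}(1-P_K)+6\cdot\tfrac{\pi^2}{4}a_K^2.
\]
The first coefficient times $1/3$ is $\pi^2/10<1$, and $6\pi^2/4<16$.

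Multiplying by $p_\star$ and inserting Step 1 then yields
\[
 p_\star-p_K\le p_\star(1-p_K)\le\varepsilon_0+16p_\star a_K^2,
\]
which is Equation~\eqref{ext:src:eq:reward-relative-deficit}.
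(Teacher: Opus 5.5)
Your proof is correct, but it follows a different route from the paper's. The paper writes $p_\star-p_K=(p_\star-1)+(1-p_K)$ and splits into cases. The claim is trivial if $p_\star<\varepsilon_0$. Otherwise $P_K\ge3/4$, so $\alpha_K^*\ge\pi/3$. For $p_\star\le1/2$ with $p_K<p_\star$, it argues that $\alpha_K<\pi/4$ forces $a_K>1/4$, whence $16p_\star a_K^2>p_\star$. For $p_\star>1/2$, it uses the $1$-Lipschitz bound on cosine to get $1-p_K\le2(1-P_K)+2(\alpha_K^*-\alpha_K)^2$. It then absorbs the factor $2$ into the negative slack $p_\star-1$, via $1-P_K\le\varepsilon_0/(\varepsilon_0+3p_\star)$ and an endpoint check of the convex function $p_\star-1+2\varepsilon_0/(\varepsilon_0+3p_\star)$ on $[\varepsilon_0,1]$. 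Your multiplicative reduction $p_\star-p_K\le p_\star(1-p_K)$ instead cancels the $1/p_\star$ in $1-P_K\le\varepsilon_0/(3p_\star)$ exactly. One uniform argument therefore replaces the case split and the convexity check; the additive form is what forces the paper to treat small $p_\star$ separately. Your Step 3 is delicate only because the Jordan-type estimate $\pi/2-\alpha_K^*\le(\pi/2)\cos\alpha_K^*$ loses a factor $\pi/2$. If you use the Lipschitz bound $\cos\alpha_K\le\cos\alpha_K^*+(\alpha_K^*-\alpha_K)$ instead, the plain split already gives $1-p_K\le2(1-P_K)+2(\alpha_K^*)^2a_K^2\le 2\varepsilon_0/(3p_\star)+(\pi^2/2)a_K^2$. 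Hence $p_\star-p_K\le\tfrac23\varepsilon_0+\tfrac{\pi^2}{2}p_\star a_K^2$, with no weighted Young inequality needed.
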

\begin{proof}
Equation~\eqref{ext:src:eq:reward-relative-deficit} is immediate if $p_\star<\varepsilon_0$.
Otherwise $p_\star\le C_\star p_0$ and $e^{K/\beta}\ge3C_\star/\varepsilon_0$ give
$P_K\ge3p_\star/(\varepsilon_0+3p_\star)\ge3/4$, hence $\alpha_K^*\ge\pi/3$.
If $p_\star\le1/2$ and $p_K<p_\star$, then $\alpha_K<\pi/4$, so $a_K>1/4$
and $p_\star-p_K\le16p_\star a_K^2$; the case $p_K\ge p_\star$ is immediate.
For $p_\star>1/2$ and $p_K\le P_K$, the Lipschitz property of cosine gives
\[
 1-p_K\le2(1-P_K)+2(\alpha_K^*-\alpha_K)^2.
\]
The convex function $p_\star-1+2\varepsilon_0/(\varepsilon_0+3p_\star)$ is at most $\varepsilon_0$ on $[\varepsilon_0,1]$,
as verified at both endpoints. Therefore
\[
 p_\star-p_K\le \varepsilon_0+2(\alpha_K^*)^2a_K^2\le \varepsilon_0+16p_\star a_K^2.
\]
The last inequality uses $\alpha_K^*\le\pi/2$ and $p_\star>1/2$, which imply
$2(\alpha_K^*)^2\le\pi^2/2<8\le16p_\star$.
If $p_K>P_K$, the same endpoint bound gives $p_\star-p_K\le \varepsilon_0$.

\end{proof}

\begin{lemma}[Relative angular transfer]
\label{ext:src:lem:relative-angular-transfer}
Assume binary rewards, KL coefficient $\beta>0$ from
\eqref{eq:mech-objective}, and a comparator $\pi_\star$
satisfying \eqref{ext:coverage} with $C_\star\ge1$.
Let $\varepsilon_0\in(0,1)$ be a finite-round reward tolerance and let the
number of reference refreshes satisfy
$K\ge\lceil\beta\log(3C_\star/\varepsilon_0)\rceil$. Any sequence of positive policies
$\pi_0,\ldots,\pi_K$ starting at the covered initialization satisfies
\begin{equation}\label{ext:src:eq:relative-angular-transfer}
 J(\pi_\star)-J(\pi_K)
 \le \varepsilon_0+\frac{2\pi^2C_\star}{\beta}
                  \left(\sum_{k=0}^{K-1}\sqrt{D_k}\right)^2
 \le \varepsilon_0+\frac{2\pi^2C_\star K}{\beta}\sum_{k=0}^{K-1}D_k,
\end{equation}
where $D_k=\beta\E_x\KL(\pi_{k+1}\Vert \pi_k^*)$ is the unrestricted proximal
gap. In particular, for an expected inner-gap tolerance $\zeta>0$,
$\E D_k\le\zeta$ at every round implies
\begin{equation}\label{ext:src:eq:relative-angular-expected}
 J(\pi_\star)-\E J(\pi_K)
 \le \varepsilon_0+\frac{2\pi^2C_\star K^2}{\beta}\zeta.
\end{equation}
\end{lemma}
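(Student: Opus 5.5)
The plan is to work prompt by prompt in the angular coordinate $\alpha=\arcsin\sqrt p$ and apply Lemma~\ref{ext:src:lem:concave-recursion} and Lemma~\ref{ext:src:lem:coverage-reward} at each prompt. Only at the end do we average over $x$. Fix a prompt $x$ with $p_0(x)>0$. If $p_0(x)=0$, coverage forces $p_\star(x)=0$, so such prompts contribute nothing to the gap. By Proposition~\ref{thm:prox-improvement}, the exact Gibbs step adds $1/\beta$ to the success log-odds. In angular coordinates this is the map
\[
F(\alpha)=\arcsin\sqrt{\sigma\bigl(\operatorname{logit}(\sin^2\alpha)+1/\beta\bigr)}
=\arcsin\sqrt{\frac{e^{1/\beta}\sin^2\alpha}{\cos^2\alpha+e^{1/\beta}\sin^2\alpha}},
\]
that is, $\tan F(\alpha)=e^{1/(2\beta)}\tan\alpha$. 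From this form, $F(0)=0$, $F(\pi/2)=\pi/2$, $F$ is increasing, and $F(\alpha)\ge\alpha$.

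The step I expect to be the main obstacle is concavity of $F$ on $[0,\pi/2]$. Set $c=e^{1/(2\beta)}\ge1$ and differentiate $F=\arctan(c\tan\alpha)$. This gives $F'(\alpha)=c/(\cos^2\alpha+c^2\sin^2\alpha)$. Its denominator $1+(c^2-1)\sin^2\alpha$ is nondecreasing on $[0,\pi/2]$, so $F'$ is nonincreasing and $F$ is concave. With these properties, the ideal sequence $\alpha_{k+1}^*=F(\alpha_k^*)$ has $\sin^2\alpha_K^*=P_K$, matching Lemma~\ref{ext:src:lem:coverage-reward}.

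Next, bound the one-step error $\xi_k=|\alpha_{k+1}-F(\alpha_k)|$. Here $\alpha_{k+1}$ is the angle of $p_{k+1}(x)$ and $F(\alpha_k)$ is the angle of $p_k^*(x)$. Apply data processing to the success indicator: the Bernoulli laws $\mathrm{Ber}(p_{k+1})$ and $\mathrm{Ber}(p_k^*)$ have Bhattacharyya angle $\theta=\arccos(\sqrt{pq}+\sqrt{(1-p)(1-q)})$. This angle equals the difference of the angular coordinates $|\alpha_{k+1}-F(\alpha_k)|$. Since $1-\cos\theta=H^2$ (Hellinger), $H^2\le\tfrac12\KL$, and $1-\cos\theta\ge 2\theta^2/\pi^2$ on $[0,\pi/2]$, we get
\[
\xi_k(x)^2\le c\,\KL\bigl(\pi_{k+1}(\cdot\mid x)\Vert\pi_k^*(\cdot\mid x)\bigr)
\]
for an absolute constant $c$. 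I would tune this chain, for instance using $\sin$ bounds instead of cosine, to reach the stated $2\pi^2$. Lemma~\ref{ext:src:lem:concave-recursion} then gives $a_K(x)\le\alpha_0(x)^{-1}\sum_k\xi_k(x)$.

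Insert this bound into Lemma~\ref{ext:src:lem:coverage-reward}. Because $\alpha_0=\arcsin\sqrt{p_0}\ge\sqrt{p_0}$ and $p_\star\le C_\star p_0$,
\[
p_\star(x)-p_K(x)\le\varepsilon_0+16\,\frac{p_\star}{\alpha_0^2}\Bigl(\sum_k\xi_k\Bigr)^2\le\varepsilon_0+c' C_\star\Bigl(\sum_k\sqrt{\KL_{k}(x)}\Bigr)^2 .
\]
Averaging over $x$ and applying Minkowski's inequality in $L^2(\mathcal D)$ gives
\[
\E_x\Bigl(\sum_k\sqrt{\KL_k(x)}\Bigr)^2\le\Bigl(\sum_k\sqrt{\E_x\KL_k(x)}\Bigr)^2=\beta^{-1}\Bigl(\sum_k\sqrt{D_k}\Bigr)^2 ,
\]
which is the first inequality. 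Cauchy--Schwarz, $(\sum_k\sqrt{D_k})^2\le K\sum_kD_k$, gives the second. For the expected version, take expectations of the second form and use $\E D_k\le\zeta$, so $K\sum_k\E D_k\le K^2\zeta$. The $\varepsilon_0$ term is deterministic, so it passes through the expectation unchanged.
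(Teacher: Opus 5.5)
Your proposal follows the paper's proof essentially step for step. It uses prompt-wise angular coordinates with $F(\alpha)=\arctan(e^{1/(2\beta)}\tan\alpha)$, the concave-recursion and coverage-reward lemmas together with $\alpha_0\ge\sqrt{p_0}$ and $p_\star\le C_\star p_0$, data processing to the Bernoulli reward laws followed by Hellinger and KL, and then the $L^2$ triangle inequality over prompts and Cauchy--Schwarz.

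The only loose end is the factor of two you flagged. Because $\xi_k\le\pi/2$, concavity of $\sin$ on $[0,\pi/4]$ gives $1-\cos\xi_k=2\sin^2(\xi_k/2)\ge4\xi_k^2/\pi^2$, which is sharper than your Jordan-type $2\xi_k^2/\pi^2$. This yields $\xi_k(x)^2\le\frac{\pi^2}{8}\KL(\pi_{k+1}(\cdot\mid x)\Vert\pi_k^*(\cdot\mid x))$, matching the paper's unnormalized bound $4\sin^2(\xi/2)\ge8\xi^2/\pi^2$, and the constant becomes exactly $16C_\star\cdot\pi^2/8=2\pi^2C_\star$.
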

The bound separates the finite-round tolerance from the accumulated errors of the inner
solves.

\begin{proof}[Proof of Lemma~\ref{ext:src:lem:relative-angular-transfer}]
Fix a prompt $x$. If $p_0(x)=0$, coverage forces $p_\star(x)=0$,
so the reward-gap bound is immediate. Assume henceforth $p_0(x)>0$.
We compare the actual sequence of trained policies with an
ideal sequence in which every proximal problem is solved exactly. Let
$p_k=\Pr_{\pi_k}[R=1\mid x]$ be the actual policy's success probability after
$k$ rounds. Let $P_k$ be the success probability along the ideal sequence,
starting from the same initialization, so $P_0=p_0$.
For binary rewards, an exact Gibbs update multiplies the success odds by
$e^{1/\beta}$ (Proposition~\ref{thm:prox-improvement}). Hence
\[
 P_{k+1}=\frac{e^{1/\beta}P_k}{1-P_k+e^{1/\beta}P_k}.
\]
The actual inner solves approximate their Gibbs targets. Our goal is to control
how far the actual success probabilities $p_k$ fall behind the ideal probabilities
$P_k$ as these approximation errors accumulate. The comparator success probability
is $p_\star=\Pr_{\pi_\star}[R=1\mid x]$; the theorem's initialization coverage
condition gives $p_\star\le C_\star p_0$.

To relate success probabilities to the KL errors of the inner solves, express each
binary success distribution as a point on the unit circle:
\[
 (\sqrt{1-p_k},\sqrt{p_k})=(\cos\alpha_k,\sin\alpha_k),\qquad
 \alpha_k=\arcsin\sqrt{p_k},\qquad
 \alpha_k^*=\arcsin\sqrt{P_k}.
\]
The angle $\alpha_k$ describes the actual policy, and $\alpha_k^*$ describes the
ideal sequence. Distances between these unit-circle points are Hellinger distances,
which we will bound using trajectory KL at the end of the proof.
The Gibbs map in angular coordinates is
\[
 F(\alpha)=\arctan(e^{1/(2\beta)}\tan \alpha),\qquad
 F'(\alpha)=\frac{e^{1/(2\beta)}}
             {1+(e^{1/\beta}-1)\sin^2\alpha}.
\]
Thus $F$ is increasing and concave on $[0,\pi/2]$, with $F(0)=0$ and $F(\alpha)\ge \alpha$.
In particular, $\alpha_{k+1}^*=F(\alpha_k^*)$; $\sin^2 F(\alpha_k)$ is the success
probability after the exact Gibbs step from the actual reference $\pi_k$.
We can therefore apply Lemma~\ref{ext:src:lem:concave-recursion} to these two
angle sequences. Its one-step error
$\xi_k=|\alpha_{k+1}-F(\alpha_k)|$ measures how accurately round $k$ solves the
proximal problem for its actual reference. Its relative deficit
$a_k=(1-\alpha_k/\alpha_k^*)_+$ measures how far training has fallen behind the
ideal sequence. The lemma gives Equation~\eqref{ext:src:eq:relative-deficit-recursion},
which we now convert into a reward bound.

Lemma~\ref{ext:src:lem:coverage-reward} now converts the relative deficit into
$p_\star-p_K\le \varepsilon_0+16p_\star a_K^2$. Combine this with the accumulated-deficit estimate in
Equation~\eqref{ext:src:eq:relative-deficit-recursion}. Since
$(\alpha_0^*)^2\ge p_0$ and $p_\star\le C_\star p_0$, we obtain
\[
 p_\star-p_K\le \varepsilon_0+16C_\star\left(\sum_{k<K}\xi_k\right)^2.
\]

Use the unnormalized squared Hellinger distance
$H^2(p,q)=\sum_a(\sqrt{p(a)}-\sqrt{q(a)})^2$.
For Bernoulli distributions at angular distance $0\le\xi\le\pi/2$,
$H^2=4\sin^2(\xi/2)\ge8\xi^2/\pi^2$, and $H^2\le\KL$.
For a fixed prompt $x$, compare the next actual response policy $\pi_{k+1}$
with the exact Gibbs target $\pi_k^*$ of the current reference. Mapping a response
$\tau$ to its binary reward $R(x,\tau)$ turns these response distributions into
Bernoulli distributions. Their success probabilities are $p_{k+1}$ and
$\sin^2 F(\alpha_k)$, respectively, so their angular distance is precisely
$\xi_k=|\alpha_{k+1}-F(\alpha_k)|$.

The data-processing inequality says that applying this reward map cannot increase
KL divergence. Let $\operatorname{Bern}(p)$ denote the binary distribution with
success probability $p$. Combining data processing with the preceding Hellinger
bound gives, for each prompt,
\begin{align*}
 \frac{8}{\pi^2}\xi_k(x)^2
 &\le H^2\!\left(\operatorname{Bern}(p_{k+1}(x)),
                 \operatorname{Bern}(\sin^2 F(\alpha_k(x)))\right)\\
 &\le\KL\!\left(\operatorname{Bern}(p_{k+1}(x))
             \Vert\operatorname{Bern}(\sin^2 F(\alpha_k(x)))\right)\\
 &\le\KL\!\left(\pi_{k+1}(\cdot\mid x)\Vert\pi_k^*(\cdot\mid x)\right).
\end{align*}
Averaging over prompts and recalling the proximal-gap definition
$D_k=\beta\E_x\KL(\pi_{k+1}(\cdot\mid x)\Vert\pi_k^*(\cdot\mid x))$ yields
\[
 \E_x\xi_k(x)^2\le\frac{\pi^2}{8\beta}D_k.
\]
Finally, average the reward-gap bound above over prompts. The $L_2$ triangle
inequality then gives
\[
 J(\pi_\star)-J(\pi_K)
 \le \varepsilon_0+16C_\star\E_x\left(\sum_{k<K}\xi_k(x)\right)^2
 \le \varepsilon_0+16C_\star\left(\sum_{k<K}\sqrt{\E_x\xi_k(x)^2}\right)^2.
\]
Substituting the angular-error bound proves the first inequality in
Equation~\eqref{ext:src:eq:relative-angular-transfer}. Cauchy-Schwarz gives
the second, and expectation over training proves
\eqref{ext:src:eq:relative-angular-expected}.
\end{proof}

\subsection{Complete finite-sample reward guarantee}
\label{ext:complete-guarantee}
\Needspace{8\baselineskip}
\begin{theorem}[Single-response reward from covered initialization]
\label{ext:reward}
Assume the linear logistic model with bounded features and deterministic
binary rewards. For $B\ge1$ and $G\ge2$, set $\beta=16B\sqrt T$,
$r_{\mathrm{proj}}=B/\beta$, and $\eta=[2048\beta T(1+T/G)]^{-1}$.
At every actual reference $\pi_k=\pi_{\theta_k}$, assume there exists
$\theta_k^*$ with $\|\theta_k^*-\theta_k\|_2\le r_{\mathrm{proj}}$ such that, for every
prompt $x$ and legal response $\tau$,
$\pi_{\theta_k^*}(\tau\mid x)=\pi_k^*(\tau\mid x)$,
where $\pi_k^*$ is the Gibbs target defined in Section~\ref{sec:setup}.
Since softmax assigns positive probability to every legal history, this equality
of response distributions also gives equality of their conditional token distributions.
Start each inner solve at $\vartheta_0=\theta_k$ and run
Equation~\eqref{ext:update} under the \hyperref[ext:sampling]{sampling conditions stated with that update}.
Fix $\delta\in(0,1]$ and use $w_i$ and $V_{\log}$ from
Equation~\eqref{ext:log-budget-definition}. Assume the token condition
\begin{equation}\label{ext:delta-token-control}
 |\log w_i|\le\min\{\log2,3\delta/2\},\qquad V_{\log}\le\frac18
\end{equation}
holds uniformly over allowed actions, histories, rounds, and updates, almost surely.
Write $c_\delta=\exp((1+\delta^{-1})V_{\log})\le\exp((1+\delta^{-1})/8)$.

For $\varepsilon\in(0,1)$ and a comparator satisfying
Equation~\eqref{ext:coverage}, choose
$K=\lceil\beta\log(6C_\star/\varepsilon)\rceil$,
write $\kappa=1+T/G$ and $X=C_\star K^2/\varepsilon$, and run
$L=\lceil2000\kappa X+4000c_\delta \kappa X^{1+\delta}\rceil$
inner group updates at each reference.
After each solve, set $\theta_{k+1}=\vartheta_I$, where
$I\sim\operatorname{Unif}\{0,\ldots,L-1\}$ is drawn independently.
Then $\E J(\pi_K)\ge J(\pi_\star)-\varepsilon$, using at most
\begin{equation}\label{ext:moment-response-budget}
 GK+\frac{2000(G+T)C_\star K^3}{\varepsilon}
 +\frac{4000c_\delta(G+T)C_\star^{1+\delta}K^{3+2\delta}}
 {\varepsilon^{1+\delta}}
\end{equation}
scored responses. All displayed numerical constants are absolute.
\end{theorem}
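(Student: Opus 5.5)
The plan is to chain Lemma~\ref{ext:inner} (one round) with Lemma~\ref{ext:src:lem:relative-angular-transfer} (across rounds), using Lemma~\ref{ext:log-moment} to bound the coverage coefficient. First apply Lemma~\ref{ext:src:lem:relative-angular-transfer} with $\varepsilon_0=\varepsilon/2$. Since $K=\lceil\beta\log(6C_\star/\varepsilon)\rceil=\lceil\beta\log(3C_\star/\varepsilon_0)\rceil$, Equation~\eqref{ext:src:eq:relative-angular-expected} gives
\[
 J(\pi_\star)-\E J(\pi_K)\le \frac{\varepsilon}{2}+\frac{2\pi^2C_\star K^2}{\beta}\zeta,
\]
provided $\E D_k\le\zeta$ in every round. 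It therefore suffices to obtain $\E D_k\le\zeta:=\beta\varepsilon/(4\pi^2C_\star K^2)=\beta/(4\pi^2X)$. Because $D_k=\beta\E_x\KL(\pi_{\theta_{k+1}}\Vert\pi_k^*)$ and $\theta_{k+1}$ is the random output of the inner solve, this is exactly the quantity bounded by Lemma~\ref{ext:inner}. That lemma holds conditionally on each random reference, so taking the outer expectation is legitimate. Its hypotheses (representation, $1/2\le w_i\le2$, $V_{\log}\le1/8$) follow from the representation assumption and from \eqref{ext:delta-token-control}, since $|\log w_i|\le\log2$.

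Next, control $\mathcal C_s$ with $s=1+1/\delta\ge2$. By the definition in \eqref{ext:coverage-moments}, $\mathcal C_s^{s-1}=\E_{Q_i}W_i^s$ (up to the supremum). Equation~\eqref{ext:bernstein-prefix-specialization}, valid since $|\log w_i|\le3\delta/2$, gives $\E_qW_i^{1+1/\delta}\le\exp((1+\delta^{-1})V_{\log}/\delta)$. I must check the exponent carefully: the displayed form bounds $\delta\log\E W^{s}$, so $(\E W^s)^{\delta}\le c_\delta$, i.e.\ $\mathcal C_s=(\E W^s)^{\delta}\le c_\delta$. Then $\mathcal C_s^{s-1}-1\le c_\delta^{1/\delta}$. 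Alternatively, the theorem's $c_\delta$ enters only through $[\Omega(\mathcal C_s^{s-1}-1)]^{1/s}$, and with $1/s=\delta/(1+\delta)$ the power works out to $c_\delta^{1/(1+\delta)}$ or so. Also $\Omega=\beta Tc_0^2\le\beta T(3B/2\beta)^2$, which is $O(B\sqrt T)$, and $A=640B\sqrt T\kappa$. With $\beta=16B\sqrt T$, the target becomes $\zeta\propto B\sqrt T/X$, so both terms in \eqref{ext:inner-moment-rate} scale like $B\sqrt T$ times functions of $L$, $\kappa$, and $X$, and the factor $B\sqrt T$ cancels.

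The remaining work, and the main obstacle, is the arithmetic showing that the chosen $L$ makes each term at most $\zeta/2$. For the first term, $3A/(5L)\le\zeta/2$ requires $L\gtrsim\kappa X$ with a constant near $640\cdot 0.6\cdot 8\pi^2/16\approx1900$; this is covered by $2000\kappa X$. For the second term, I need $(A/L)^{\delta/(1+\delta)}\cdot(\Omega c)^{1/(1+\delta)}\lesssim B\sqrt T/X$. Raising to the power $(1+\delta)/\delta$ gives $L\gtrsim A\,(\Omega c)^{1/\delta}X^{(1+\delta)/\delta}/(B\sqrt T)^{(1+\delta)/\delta}$. This exponent $X^{1+1/\delta}$ does not match the theorem's $X^{1+\delta}$. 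So I would instead choose $s$ with $1-1/s$ adapted so that the required $L$ scales as $X^{1+\delta}$, taking $s=1+1/\delta$ so that $1-1/s=1/(1+\delta)$. Then $L\gtrsim A\,(\Omega(\mathcal C_s^{s-1}-1))^{\delta}X^{1+\delta}/(B\sqrt T)^{1+\delta}$. Since $\Omega/(B\sqrt T)=O(1)$ and $(\mathcal C_s^{s-1}-1)^\delta\le c_\delta$ by the moment lemma, this gives $L\gtrsim c_\delta\kappa X^{1+\delta}$ with constant about $4000$.

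I expect the delicate part to be tracking the numerical constants and confirming that the exponent bookkeeping yields exactly $c_\delta$. Finally, count responses: there are $K$ rounds, each with $L$ groups of $G$ responses. Using $G\kappa=G+T$ and $X=C_\star K^2/\varepsilon$, the ceiling contributes the $GK$ term, which gives \eqref{ext:moment-response-budget}. Bounding $X^{1+\delta}$ then produces the stated $C_\star^{1+\delta}K^{3+2\delta}/\varepsilon^{1+\delta}$.
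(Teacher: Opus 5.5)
Your proposal is correct and follows the paper's proof essentially step for step: you take $\varepsilon_0=\varepsilon/2$ and $\zeta=\beta/(4\pi^2X)$ in Lemma~\ref{ext:src:lem:relative-angular-transfer}, choose $s=1+1/\delta$ with $\mathcal C_s\le c_\delta$ from Lemma~\ref{ext:log-moment}, apply Lemma~\ref{ext:inner} conditionally at each reference, and count $GKL$ responses. The mid-paragraph detour was only a slip that put the exponents $1/s$ and $1-1/s$ on the wrong factors; with them placed correctly, the first term needs $L\ge 192\pi^2\kappa X$, and the second needs $L\ge 2^{1+\delta}\,\tfrac{3A}{5\zeta}\bigl(\tfrac{3\Omega}{5\zeta}\bigr)^{\delta}c_\delta$, whose right side is at most $384\pi^2c_\delta\kappa X^{1+\delta}$ because $3\Omega/(5\zeta)<X$; this is exactly the paper's check against $2000$ and $4000$.
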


\begin{proof}[Proof of Theorem~\ref{ext:reward}]
Set $\zeta=\varepsilon\beta/(4\pi^2C_\star K^2)$ and $s=1+1/\delta$.
Lemma~\ref{ext:log-moment} gives $\mathcal C_s\le c_\delta$.
Substituting the stated $L$ into Lemma~\ref{ext:inner} bounds each of its two
terms by $\zeta/2$: use $3A/(5\zeta)=96\pi^2\kappa X$,
$3\Omega/(5\zeta)<X$, $2000>192\pi^2$, and $4000>384\pi^2$.
Applying the lemma conditionally at each reference gives
$\E D_k\le\zeta$, where $D_k=\beta\E_x\KL(\pi_{k+1}\Vert\pi_k^*)$.
Lemma~\ref{ext:src:lem:relative-angular-transfer}, with tolerance
$\varepsilon/2$ and the stated $K$, then yields
\[
 J(\pi_\star)-\E J(\pi_K)
 \le\frac\varepsilon2+\frac{2\pi^2C_\star K^2}{\beta}\zeta
 =\varepsilon.
\]
Multiplying $L$ by $GK$ gives the response bound, including the $GK$
ceiling contribution.
\end{proof}

\begin{remark}[Uniform history coverage]
\label{ext:coverage-specializations}
For the coefficient $\mathcal C_\infty$ defined in
Equation~\eqref{ext:coverage-moments}, the bounded-coverage case of
Lemma~\ref{ext:inner} gives the same reward guarantee with
$L=\lceil96\pi^2\mathcal C_\infty\kappa X\rceil$, under the remaining
assumptions of Theorem~\ref{ext:reward} with its token condition replaced
by $1/2\le w_i\le2$ and $V_{\log}\le1/8$.
A uniform full-response ratio bound also bounds $\mathcal C_\infty$,
since $W_i=\E_q[W_{T+1}\mid h_i]$ for each fixed prompt and policy pair.
The coefficient $\mathcal C_\infty$ can grow exponentially with response
length, whereas $V_{\log}$ aggregates tokenwise log-ratio second moments
additively.
\end{remark}

\begin{remark}[Practical cost clipping]
\label{ext:bounded-credit}
Projection gives $|\log(\pi_{\vartheta_j}/\pi_k)|\le2B/\beta$ at every
legal token by Equation~\eqref{ext:local-iterates}. Thus cost clipping
with $\epsilon_F\ge2B/\beta$ leaves the analyzed update unchanged.
Section~\ref{sec:practical-bias-variance} discusses the bias and variance
trade-off when clipping is active.
\end{remark}

\begin{remark}[Approximate target representation]
\label{ext:approximate-representation}
For a parameter $\widetilde\theta_k^*$ approximating the Gibbs target,
define
$r_k(h,a)=\log(\pi_{\widetilde\theta_k^*}(a\mid h)/\pi_k^*(a\mid h))$.
At this parameter, the full residual satisfies
\[
 U_m=\beta\log Z_k(x)
     -\beta\sum_i r_k(h_{m,i},a_{m,i}).
\]
Hence $|r_k(h,a)|\le\bar r_k$ implies
$|U_m-\beta\log Z_k(x)|\le\beta T\bar r_k$.
The residual can therefore vary among responses within a prompt group.
\end{remark}

\section{Examples of local target representation}
\label{ext:representation-examples}

The local target representation condition of Theorem~\ref{ext:reward} requires the model to represent each Gibbs target
with a parameter change of norm at most $B/\beta$. The examples below meet this
requirement at every reference, including those produced by projected inner solves.
In the tabular, deterministic linear-MDP, and explicit shared-feature examples, the same constant $B$ works
for every $\beta>0$, so choosing $B$ before $\beta=16B\sqrt T$ is valid.

\paragraph{Independent tabular logits.}
Use one coordinate for each legal history-action pair, with one-hot
features in $\mathbb R^d$. Each conditional distribution then has its
own trainable logits. Equation~\eqref{eq:prox-soft-policy} implies that
adding $Q_k(h,a)/\beta$ to the reference logits gives the Gibbs policy.
A common shift of the logits at a history leaves its probabilities
unchanged, so we can choose
\[
 \theta^*_{k,h,a}-\theta_{k,h,a}
 =\frac1\beta\left(Q_k(h,a)
   -\frac{\max_{a'}Q_k(h,a')+\min_{a'}Q_k(h,a')}{2}\right).
\]
For binary rewards, $0\le Q_k(h,a)\le1$, and thus
$\|\theta_k^*-\theta_k\|_2\le\sqrt d/(2\beta)$.
Thus $B\ge\max\{1,\sqrt d/2\}$ ensures local representation at every reference. In a
fully branching response tree, however, $d$ can grow exponentially with $T$. This choice
of $B$ therefore does not generally yield a sampling budget polynomial in the response
horizon.

The two-prompt shared-head example in
Appendix~\ref{app:shared-head-separation} also satisfies the condition
at every reference, with $B\ge5$.

\paragraph{Deterministic linear MDPs.}
Consider a finite-horizon linear MDP \citep{jin2020linear} with deterministic
transitions, zero rewards before the final position, and a deterministic
binary terminal reward. At position $i$, let the normalized features be
$\phi_i(s,a)\in\mathbb R^{d_i}$, with $\|\phi_i(s,a)\|_2\le1$.
For $i<T$, assume the transition kernel has the linear representation
\[
 P_i(\cdot\mid s,a)=\phi_i(s,a)^\top\mu_i(\cdot),\qquad
 \left\|\int V(s')\,d\mu_i(s')\right\|_2
 \le M_i\|V\|_\infty
\]
for every bounded $V$, where $\mu_i$ is a vector of signed measures and
$M_i$ is fixed. At the final position, assume
$R(s,a)=\phi_T(s,a)^\top b_T\in\{0,1\}$ with $\|b_T\|_2\le M_T$.
Use one parameter block per stage and the corresponding block feature
at each history, with the state determined by its prompt and action prefix.
Every reference in this log-linear family is a Markov policy on these states.

For a fixed reference $\pi_k$, define backward soft values by
\begin{align*}
 Q_{k,T}(s,a)&=R(s,a),&
 Q_{k,i}(s,a)&=V_{k,i+1}(F_i(s,a))\quad(i<T),\\
 V_{k,i}(s)&=\beta\log\sum_a\pi_{k,i}(a\mid s)e^{Q_{k,i}(s,a)/\beta}.
\end{align*}
Here $F_i(s,a)$ is the deterministic next state. These are the conditional
log normalizers of the trajectory Gibbs target, so its token policy is
$\pi_{k,i}(a\mid s)\exp\{(Q_{k,i}(s,a)-V_{k,i}(s))/\beta\}$.
Backward induction gives $0\le Q_{k,i},V_{k,i}\le1$ for every reference
and every $\beta>0$. The transition representation implies
\[
 Q_{k,i}(s,a)=\phi_i(s,a)^\top v_{k,i},\qquad
 v_{k,i}=\int V_{k,i+1}(s')\,d\mu_i(s')\ (i<T),\qquad
 v_{k,T}=b_T.
\]
Thus $\|v_{k,i}\|_2\le M_i$, and adding $v_{k,i}/\beta$ to each
reference parameter block represents the Gibbs policy. The combined change
has norm at most $(\sum_i M_i^2)^{1/2}/\beta$.
Choosing $B\ge\max\{1,(\sum_i M_i^2)^{1/2}\}$ therefore verifies the
local representation condition uniformly over references and temperatures.
This permits choosing $B$ before $\beta=16B\sqrt T$.
The transition-operator and terminal-reward coefficient bounds are explicit
parts of this example; deterministic dynamics by themselves do not imply them.

\paragraph{A shared-feature family with $B=1$.}
For any $T\ge1$, take one prompt, actions $a_i\in\{-1,1\}$, and
\[
 \phi(h_i,a_i)=e_i\left(a_i\prod_{t<i}a_t\right)\in\mathbb R^T,
 \qquad R=\frac12\left(1+\prod_{i=1}^T a_i\right),
\]
where $e_i$ is the $i$th coordinate vector and the empty product is one.
The feature has norm one and ties the same parameter coordinate across
all histories at a given position. Each token's normalizer is
$2\cosh\theta_i$, independent of the prefix. Multiplying the response
probability by $e^{R/\beta}$ changes only the final coordinate, so the
Gibbs policy is represented exactly by
\[
 \theta_k^*=\theta_k+\frac{e_T}{2\beta},\qquad
 \|\theta_k^*-\theta_k\|_2=\frac1{2\beta}\le\frac B\beta
 \quad\text{with }B=1.
\]
The representation holds at every reference and horizon, and adding zero feature
coordinates leaves the bound unchanged. In the transformed coordinates
$a_i\prod_{t<i}a_t$, reward depends only on the final coordinate. Thus the example gives
uniform local representation in a shared-feature model whose reward structure is simple.

\paragraph{Fixed tangent features.}
For the linearized model described after Equation~\eqref{ext:head},
a sufficient condition is that, at every reference,
$Q_k(h,a)-c_k(h)=v_k^\top\phi(h,a)$ for some history-dependent offset
$c_k(h)$ and coefficient vector with $\|v_k\|_2\le B$.
Then $\theta_k+v_k/\beta$ represents the Gibbs target, also when the
logits include a fixed offset. Minimizing over valid coefficients and
history offsets gives the required representation norm; for fixed
history offsets, the minimum coefficient norm is the corresponding
reproducing-kernel Hilbert-space norm in the feature kernel.
Because the soft values depend on $\beta$, the norm bound must hold
at $\beta=16B\sqrt T$. One sufficient way to choose $B$ first is to
verify the same bound for every $\beta\ge16\sqrt T$ and every reference
in the model family. Approximate representation contributes the
residual described in Remark~\ref{ext:approximate-representation};
application to a nonlinear Transformer additionally requires control
of logit linearization and gradient errors along training.

\end{document}